\newlength{\widebarargwidth}
\newlength{\widebarargheight}
\newlength{\widebarargdepth}
\newcommand{\Prob}{\ensuremath{\mathbb{P}}}
\newcommand{\Exs}{\ensuremath{\mathbb{E}}}
\newcommand{\order}{\ensuremath{\mathcal{O}}}
\newcommand{\apporder}{\ensuremath{\widetilde{\mathcal{O}}}}
\newcommand{\real}{\ensuremath{\mathbb{R}}}
\newcommand{\sphere}{\ensuremath{\mathbb{S}}}
\newcommand{\argmin}{\arg\!\min}
\newcommand{\inprod}[2]{\ensuremath{\langle #1 , \, #2 \rangle}}
\DeclareMathOperator*{\ind}{\mathds{1}}  
\newcommand{\abs}[1]{| #1 |}
\newcommand{\opnorm}[1]{\ensuremath{\left\|#1\right\|_{\tiny{op}}}}
\newcommand{\twonorm}[1]{\ensuremath{\left\|#1\right\|_{2}}}
\newcommand{\Gaussian}{\mathcal{N}}
\newcommand{\Loss}{\ensuremath{\mathcal{L}}}
\newcommand{\ab}{\bm{a}}
\newcommand{\bb}{\bm{b}}
\newcommand{\cb}{\bm{c}}
\newcommand{\x}{\bm{x}}
\newcommand{\w}{\bm{w}}
\newcommand{\z}{\bm{z}}
\newcommand{\ub}{\bm{u}}
\newcommand{\vb}{\bm{v}}
\newcommand{\Identity}{\bm{I}}
\newcommand{\e}{\bm{e}}
\newcommand{\M}{\bm{M}}
\newcommand{\m}{\bm{m}}
\newcommand{\W}{\bm{W}}
\newcommand{\Q}{\bm{Q}}
\newcommand{\A}{\bm{A}}
\newcommand{\T}{\bm{T}}
\newcommand{\E}{\bm{E}}
\newcommand{\Ub}{\bm{U}}
\newcommand{\Vb}{\bm{V}}
\newcommand{\B}{\bm{B}}
\newcommand{\D}{\bm{D}}
\newcommand{\Cb}{\bm{C}}
\newcommand{\sign}{\text{sign}}
\newcommand{\Sb}{\bm{S}}
\newcommand{\minprop}{\underline{\omega}}
\newcommand{\estbeta}{\widehat{\bbeta}}
\newcommand{\initsample}{n_{\text{init}}}
\newcommand{\altsample}{n_{\text{alt}}}
\newcommand{\cA}{\mathcal{A}}
\newcommand{\bbeta}{\bm{\beta}}
\newcommand{\balpha}{\bm{\alpha}}
\newcommand{\bgamma}{\bm{\gamma}}
\newcommand{\bSigma}{\bm{\Sigma}}
\algnewcommand\algorithmicinput{\textbf{INPUT:}}
\algnewcommand\INPUT{\item[\algorithmicinput]}
\algnewcommand\algorithmicoutput{\textbf{OUTPUT:}}
\algnewcommand\OUTPUT{\item[\algorithmicoutput]}
\algnewcommand\algorithmicremark{\textsl{//}}
\algnewcommand\REMARK{\item[\algorithmicremark]}
\theoremstyle{plain}
\newtheorem{theorem}{\protect\theoremname}
\theoremstyle{remark}
\newtheorem{remark}{\protect\remarkname}
\theoremstyle{plain}
\newtheorem{lemma}{\protect\lemmaname}
\theoremstyle{plain}
\theoremstyle{plain}
\newtheorem{corollary}{\protect\corolaryname}
\theoremstyle{plain}
\newtheorem{condition}{\protect\conditionname}
\providecommand{\lemmaname}{Lemma}
\providecommand{\remarkname}{Remark}
\providecommand{\theoremname}{Theorem}
\providecommand{\propositionname}{Proposition}
\providecommand{\corolaryname}{Corollary}
\providecommand{\conditionname}{Condition}
\long\def\@makecaption#1#2{
        \vskip 0.8ex
        \setbox\@tempboxa\hbox{\small {\bf #1:} #2}
        \parindent 1.5em  
        \dimen0=\hsize
	       \advance\dimen0 by -3em
        \ifdim \wd\@tempboxa >\dimen0
                \hbox to \hsize{
                        \parindent 0em
                        \hfil 
                        \parbox{\dimen0}{\def\baselinestretch{0.96}\small
                                {\bf #1.} #2
                                } 
                        \hfil}
        \else \hbox to \hsize{\hfil \box\@tempboxa \hfil}
        \fi
        }
\begin{document}

\begin{center} {\LARGE{\bf{Solving a Mixture of Many Random Linear Equations by Tensor Decomposition and Alternating Minimization
}}} \\

\vspace*{.3in}

{\large{
\begin{tabular}{ccccccc}
Xinyang Yi && Constantine Caramanis && Sujay Sanghavi
\end{tabular}

\vspace*{.1in}

\begin{tabular}{c}
The University of Texas at Austin
\end{tabular}

\begin{tabular}{c}
{\texttt{$\{$yixy,constantine$\}$@utexas.edu $\quad$ sanghavi@mail.utexas.edu}}
\end{tabular} 
}}

\vspace*{.2in}


\vspace*{.2in}
\begin{abstract}
We consider the problem of solving mixed random linear equations with $k$ components. This is the noiseless setting of mixed linear regression. The goal is to estimate multiple linear models from mixed samples in the case where the labels (which sample corresponds to which model) are not observed. We give a tractable algorithm for the mixed linear equation problem, and show that under some technical conditions, our algorithm is guaranteed to solve the problem exactly with sample complexity linear in the dimension, and polynomial in $k$, the number of components. Previous approaches have required either exponential dependence on $k$, or super-linear dependence on the dimension. The proposed algorithm is a combination of tensor decomposition and alternating minimization. Our analysis involves proving that the initialization provided by the tensor method allows alternating minimization, which is equivalent to EM in our setting, to converge to the global optimum at a linear rate.
\end{abstract}
\end{center}


\section{Introduction}
In this paper, we consider the following mixed linear equation problem. Suppose we are given $n$ samples of response-covariate pairs $\{(y_i, \x_i)\}_{i=1}^n$ that are determined by equations
\begin{equation} \label{eq:eq_xy}
y_i = \sum_{j=1}^k \inprod{\x_i}{\bbeta_j} \ind(z_i = j), ~\text{for}~ i = 1,\ldots,n,
\end{equation}
where $\x_i, \bbeta_j \in \real^p$, $\{\bbeta_j\}$ are $k$ model parameters corresponding to $k$ different linear models, and $z_i$ is the {\em unobserved} label of sample $i$ indicating which model it is generated from. We assume random label assignment, i.e., $\{z_i\}$ are i.i.d. copies of a multinomial random variable $Z$ that has distribution
\begin{equation} \label{eq:gen_label}
\Prob[Z = j] = \omega_j, ~\text{for}~ j = 1,2,\ldots k.
\end{equation}
Here $\{\omega_j\}$ represent the weights of every linear model, and naturally satisfy $\sum_{j \in [k]}\omega_j = 1$. Our goal is to find parameters $\{\bbeta_j\}$ from mixed samples $\{(y_i, \x_i)\}_{i=1}^n$. While solving linear systems is straightforward, this problem, with the introduction of latent variables, is hard to solve in general. Work in \cite{yi2014alternating} shows that the subset sum problem can be reduced to mixed linear equations in the case of $k = 2$ and certain designs of $\x_i$ and $\bbeta_j$. Therefore, given $\{(y_i, \x_i)\}_{i=1}^n$, determining whether there exist two $\bbeta$'s that satisfy $\eqref{eq:eq_xy}$ is NP-complete, and thus in general the $k=2$ case is already hard. In this paper, we consider the setting for general $k$, where the covariates $\x_i$'s are independently drawn from the standard Gaussian distribution:
\begin{equation} \label{eq:x_dist}
\x_i \sim \Gaussian(\bm{0}, \Identity_p).
\end{equation}
Under this random design, we provide a tractable algorithm for the mixed linear equation problem, and give sufficient conditions on the $\bbeta_i$'s under which we guarantee exact recovery with high probability.

The problem of solving mixed linear equations (or regression when each $y_i$ is perturbed by a small amount of noise) arises in applications where samples are from a mixture of discriminative linear models and the interest is in parameter estimation. Mixed standard and generalized linear regression models are introduced in 1990s \cite{wedel1995mixture} and have become an important set of techniques for market segmentation \cite{wedel2012market}. These models have also been applied to study music perception \cite{Viele2002}  and health care demand \cite{deb2000estimates}. See \cite{grun2007applications} for other related applications and datasets. Mixed linear regression is closely related to another classical model called hierarchical mixtures of experts \cite{jacobs1991adaptive}, which also allows the distribution of labels to be adaptive to covariate vectors.

Due to the combinatorial nature of mixture models, popular approaches, including EM and gradient descent, are often based on local optimization and thus suffer from local minima. Indeed, to the best of our knowledge, there is no rigorous analysis of the convergence behavior of EM or other gradient descent-based methods for $k \geq 3$. Beyond real-world applications, the statistical limits of solving problem \eqref{eq:eq_xy} by computationally efficient algorithms are even less well understood. This paper is motivated by this question: how many samples are necessary to recover $\{\bbeta_j\}$ exactly and efficiently?

In a nutshell, we prove that under certain technical conditions, there exists an efficient algorithm for solving mixed linear equations with sample size $\widetilde{\order}(k^{10}p)$, and we provide an algorithm which achieves this. Notably, the dependence on $p$ is nearly linear and thus optimal up to some log factors. Our proposed algorithm has two phases. The first step is a spectral method called tensor decomposition, which is guaranteed to produce $\varepsilon$-close solutions with $\order(1/\varepsilon^2)$ samples. In the second step, we apply an alternating minimization (AltMin) procedure to successively refine the estimation until exact recovery happens. As a key ingredient, we show that AltMin, as a non-convex optimization technique, enjoys linear convergence to the global optima when initialized closely enough to the optimal solution.

\subsection{Comparison to Prior Art}

The use of the method of moments for learning latent variable models can be dated back to Pearson's work \cite{pearson1894contributions} on estimating Gaussian mixtures. There is now an increasing interest in computing high order moments and leveraging tensor decomposition for parameter estimation in various mixture models including Hidden Markov Models \cite{anandkumar2012method}, Gaussian mixtures \cite{Hsu2012Gussian}, and topic models \cite{anandkumar2012two}. Following the same idea, we propose some novel moments for mixed linear equations, on which approximate estimation of parameters can be computed by tensor decomposition. Different from our moments, Chaganty and Liang \cite{chaganty13} propose a method of regressing $y_i^3$ against $\x_i^{\otimes3}$ to estimate a certain third-moment tensor of mixed linear regression under bounded and random covariates. Because of performing regression in the lifted space with dimension $p^3$, their method suffers from much higher sample complexity $\order(p^6)$ compared to our results, while the latter builds on a different covariate assumption \eqref{eq:x_dist}.

Mixed linear equation/regression with two components is now well understood. In particular, our earlier work \cite{yi2014alternating} proves the local convergence of AltMin for mixed linear equations with two components. Through a convex optimization formula, work in \cite{chen2014convex} establishes the minimax optimal statistical rates under stochastic and deterministic noises. Notably, Balakrishnan et~al.\ \cite{balakrishnan2014statistical} develop a framework for analyzing the local convergence of expectation-maximization (EM), i.e., EM is guaranteed to produce statistically consistent points with good initializations. In the case of mixed linear regression with $\bbeta_1 = -\bbeta_2$ and Gaussian noise with variance $\sigma^2$, applying the framework leads to estimation error $\widetilde{\order}(\sqrt{(\sigma^2 + \twonorm{\bbeta_1})p/n})$. Even in the case of no noise ($\sigma = 0$), their results do not imply exact recovery. Moreover, it is unclear how to apply the framework to the case of $k \geq 3$ components. It is obvious that AltMin is equivalent to EM in the noiseless setting. Our analysis of AltMin takes a step further towards understanding EM in the case of multiple latent clusters.

Beyond linear models, learning mixture of generalized linear models is recently studied in \cite{sun2014learning} and \cite{sedghi2014provable}. Specifically, \cite{sun2014learning} proposes a spectral method based on second order moments for estimating the subspace spanned by model parameters. Later on, Sedghi et~al.\ \cite{sedghi2014provable} construct specific third order moments that allow tensor decomposition to be applied to estimate individual vectors. In detail, when $k = \order(1)$, they show that obtaining recovery error $\varepsilon$ requires sample size $n = \widetilde{\order}(p^3/\varepsilon^2)$. In a more recent update \cite{sedghi2016provable} of their paper, they establish the same sample complexity for mixed linear regression using different moments, which we realize coincide with ours during the preparation of this paper. Nevertheless, we perform a sharper analysis that leads to a near-linear-in-$p$ sample complexity $n = \widetilde{\order}(p/\varepsilon^2)$.

Conceptually, we establish the power of combining spectral method and likelihood based estimation for learning latent variable models. Spectral method excludes most bad local optima on the surface of likelihood loss, and as a consequence, it becomes much easier for non-convex local search methods such as EM and AltMin, to find statistically efficient solutions. Such phenomenon in the context of mixed linear regression is observed empirically in \cite{chaganty13}. We provide a theoretical explanation in this paper. It is worth mentioning the applications of such idea in other problems including crowdsourcing \cite{zhang2014spectral}, phase retrieval (e.g. \cite{candes2015phase, chen2015solving}) and matrix completion (e.g. \cite{jain2013low, sun2015guaranteed, chen2015fast}). Most of these works focus on estimating bilinear or low rank structures. In the context of crowdsourcing, work in \cite{zhang2014spectral} shows that performing one step of EM can achieve optimal rate given good initialization.  In contrast, we establish an explicit convergence trajectory of multiple steps of AltMin for our problem. It would be interesting to study the convergence path of AltMin or EM for other latent variable models.

\subsection{Notation and Outline}
We lay down some notations commonly used throughout this paper. For counting number $k$, we use $[k]$ to denote the set $\{1,2,\ldots,k\}$. We let $a \vee b$, $a \wedge b$ denote $\max\{a,b\}, \min\{a,b\}$ respectively. For sub-Gaussian random variable $X$, we denote its $\psi_2$-Orlicz norm \cite{van1996weak} by $\|X\|_{\psi_2}$, i.e.,
\[
\|X\|_{\psi_2} := \inf\left\{z \in (0, \infty) ~\big|~ \Exs[\psi_2(\abs{X}/z)] \leq 1 \right\},	
\]
where $\psi_{2}(x) = \exp(x^2) - 1$.
For vector $\ab \in \real^p$, we use $\|\ab\|_q$ to denote the standard $\ell_q$ norm of $\ab$. For matrix $\A \in \real^{p_1\times p_2}$, we use $\sigma_{k}(\A)$ to denote its $k$-th largest singular value. We also commonly use $\sigma_{\max}(\A), \sigma_{\min}(\A)$ to denote $\sigma_{1}(\A)$ and $\sigma_{p_1 \wedge p_2}(\A)$. In particular, we denote the operator norm of matrix $\A$ as $\opnorm{\A}$. We also use $\opnorm{\T}$ to denote the operator norm of symmetric third order tensor $\T \in \real^{p \times p \times p}$, namely
\[
\opnorm{\T} := \sup_{\ub \in \sphere^{p-1}} \abs{\T(\ub, \ub, \ub)}.
\]
Here, $\T(\A, \B, \Cb)$ denotes the multi-linear
matrix multiplication of $\T$ by $\A \in \real^{p\times p_1}, \B \in \real^{p \times p_2}, \Cb \in \real^{p \times p_3}$, namely,
\[
(\T(\A, \B, \Cb))_{(m,n,t)} = \sum_{i,j,k \in [p]}\T_{(i,j,k)}\A_{(i,m)}\B_{(j,n)}\Cb_{(k,t)}, ~~\text{for all}~~ (m,n,t) \in [p_1]\times[p_2]\times [p_3].
\]

For two sequences $f(n), g(n)$ indexed by $n \in \mathbb{N}$, we write $f(n) = \order(g(n))$ to mean there exists a constant $C > 0$ such that $f(n) \leq C g(n)$ for all $n \in \mathbb{N}$. By $f(n) = \widetilde{\order}(g(n))$, we mean there exist constants $C,C' > 0$ such that $f(n) \leq Cg(n)\cdot(\log n)^{C'}$.  We also use $f(n) \lesssim g(n)$ as shorthand for $f(n) = \order(g(n))$. Similarly, we say $f(n) \gtrsim g(n)$ if $g(n) = \order(f(n))$.

The rest of this paper is organized as follows. In Section \ref{sec:algorithm}, we describe the specific details of our two-phase algorithm for solving mixed linear equations. We present the theoretical results of initialization and AltMin in Section \ref{sec:tensor} and \ref{sec:altmin} respectively. We combine these two parts and give the overall sample and time complexities for exact recovery in Section \ref{sec:overall}. We provide the experimental results in Section \ref{sec:numeric}. All proofs are collected in Section \ref{sec:proofs}.

\section{Algorithm} \label{sec:algorithm}
A natural idea to solve problem \eqref{eq:eq_xy} is to apply an alternating minimization (AltMin) procedure between parameters $\{\bbeta_j\}$ and labels $\{z_i\}$: (1) Given $\{\bbeta_j\}$, assign the labels for each sample by choosing a model $\bbeta$ that has minimal recovery error $\abs{y_i - \inprod{\x_i}{\bbeta}}$; (2) When labels are available, each parameter is updated by applying the method of least square optimization to samples with the corresponding labels. One can show that in our setting, alternating minimization is equivalent to Expectation-Maximization (EM), which is one of the most important algorithms for inference in latent variable models. In general, similar to EM, AltMin is vulnerable to local optima. Our experiment (see Figure \ref{fig:trace}) demonstrates that even under random setting $\x_i \sim \Gaussian(\bm{0}, \Identity_p)$, AltMin with random initializations fails to exactly recover each $\bbeta_j$ with significantly large probability.

To overcome the local-optima issue of AltMin, our algorithm consists of two stages. The first stage builds on carefully designed moments of samples, and aims to find rough estimates of $\{\bbeta_j\}$. Starting with the initialization, the second stage involves using AltMin to successively refine the estimates. In the following, we describe these two steps with more details.
\subsection{Tensor Decomposition}
In the first step, we use method of moments to compute initial estimates of $\{\bbeta_j\}$. Consider moments $m_0 \in \real, \m_1 \in \real^p$, $\M_2 \in \real^{p\times p}$ and $\M_3 \in \real^{p\times p \times p}$ as
\begin{align}
m_0 & := \frac{1}{n}\sum_{i=1}^n y_i^2, ~~ \m_1 := \frac{1}{6n}\sum_{i = 1}^n y_i^3\x_i,  \label{eq:m_01}\\
\M_2 & := \frac{1}{2n}\sum_{i=1}^{n} y_i^2 \x_i \otimes \x_i - \frac{1}{2}m_0\cdot \Identity_p, \label{eq:m_2}\\
\M_3 & := \frac{1}{6n}\sum_{i=1}^{n} y_i^3 \x_i\otimes \x_i \otimes \x_i - \mathcal{T}(\m_1), \label{eq:m_3}
\end{align}
where $\mathcal{T}(\cdot)$ is a mapping from $\real^{p}$ to $\real^{p \times p \times p}$ with form
\[
\mathcal{T}(\m_1) := \sum_{i \in [p]} \m_1 \otimes \e_i \otimes \e_i + \e_i \otimes \m_1\otimes \e_i +  \e_i \otimes \e_i \otimes \m_1.
\]
It is reasonable to choose these moments because of the next result, which shows that the expectations of $M_2$ and $M_3$ contain the structure of $\{\bbeta_j\}$. See Section \ref{sec:proof:lem:expectation} for its proof.
\begin{lemma}[Moment Expectation] \label{lem:expectation}
	Consider the random model for mixed linear equations given in \eqref{eq:eq_xy}, \eqref{eq:gen_label} and \eqref{eq:x_dist}. For moments $\M_2$ and $\M_3$ in \eqref{eq:m_2} and \eqref{eq:m_3}, we have
	\begin{align}
	\Exs[\M_2] & = \sum_{j=1}^k \omega_j \cdot \bbeta_j \otimes \bbeta_j, \label{eq:M_2_exp}\\
	\Exs[\M_3] & = \sum_{j=1}^k \omega_j \cdot \bbeta_j \otimes \bbeta_j \otimes \bbeta_j. \label{eq:M_3_exp}
	\end{align}
\end{lemma}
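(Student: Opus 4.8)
The plan is to exploit the defining structure of the model: conditioned on the latent label $z_i = j$, the response is exactly $y_i = \inprod{\x_i}{\bbeta_j}$, so every moment in \eqref{eq:m_01}--\eqref{eq:m_3} reduces to a polynomial moment of the standard Gaussian vector $\x_i \sim \Gaussian(\bm 0, \Identity_p)$. Since the samples are i.i.d., each empirical average has expectation equal to that of a single summand, and by the law of total expectation it suffices to compute, for a fixed $\bbeta$, the four quantities $\Exs[\inprod{\x}{\bbeta}^2]$, $\Exs[\inprod{\x}{\bbeta}^2\,\x\otimes\x]$, $\Exs[\inprod{\x}{\bbeta}^3\,\x]$ and $\Exs[\inprod{\x}{\bbeta}^3\,\x\otimes\x\otimes\x]$, and then average against the weights $\omega_j$ with $\bbeta = \bbeta_j$. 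The main computational tool throughout is Isserlis' (Wick's) theorem, which writes an even-order Gaussian moment as a sum over perfect matchings of the coordinate covariances $\Exs[x_a x_b] = \delta_{ab}$; I expand each power $\inprod{\x}{\bbeta}^m = \sum \beta_{s_1}\cdots\beta_{s_m}\, x_{s_1}\cdots x_{s_m}$ and contract.

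The fourth-order computations are routine. They yield $\Exs[\inprod{\x}{\bbeta}^2] = \twonorm{\bbeta}^2$, $\Exs[\inprod{\x}{\bbeta}^2\,\x\otimes\x] = \twonorm{\bbeta}^2\,\Identity_p + 2\,\bbeta\otimes\bbeta$, and $\Exs[\inprod{\x}{\bbeta}^3\,\x] = 3\twonorm{\bbeta}^2\,\bbeta$. Averaging the first two over the mixture and substituting into \eqref{eq:m_2}, the subtracted term $\tfrac12 m_0\,\Identity_p$ exactly cancels the spurious $\tfrac12\twonorm{\bbeta_j}^2\,\Identity_p$ contributions, leaving $\Exs[\M_2] = \sum_j \omega_j\,\bbeta_j\otimes\bbeta_j$, which is \eqref{eq:M_2_exp}. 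The third identity simultaneously pins down $\Exs[\m_1] = \tfrac12\sum_j\omega_j\twonorm{\bbeta_j}^2\,\bbeta_j$, a value I will need for the correction term in $\M_3$.

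The heart of the argument, and the step I expect to be most delicate, is the sixth-order moment $\Exs[\inprod{\x}{\bbeta}^3\, x_a x_b x_c]$. Wick's theorem here produces $15$ matchings of the six indices $\{s,t,u,a,b,c\}$, with $s,t,u$ carrying $\bbeta$-weights, and the bookkeeping splits into two groups. The $6$ matchings that pair each internal index $s,t,u$ with a distinct external index each contribute $\beta_a\beta_b\beta_c$; the remaining $9$ matchings each contain one internal pair $\delta_{st}$ (contributing a factor $\twonorm{\bbeta}^2$) together with one internal--external link and one external pair, and summing these reproduces $3\twonorm{\bbeta}^2\bigl(\beta_a\delta_{bc} + \beta_b\delta_{ac} + \beta_c\delta_{ab}\bigr)$. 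In tensor form this reads
\[
\Exs[\inprod{\x}{\bbeta}^3\,\x\otimes\x\otimes\x] = 6\,\bbeta\otimes\bbeta\otimes\bbeta + 3\twonorm{\bbeta}^2\,\mathcal{T}(\bbeta),
\]
where $\mathcal{T}$ is precisely the symmetrization map defined just before the lemma. The key recognition is that the second group of contractions assembles into $\mathcal{T}(\bbeta)$.

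Finally I average over the mixture to obtain $\tfrac16\,\Exs\bigl[\tfrac1n\sum_i y_i^3\,\x_i\otimes\x_i\otimes\x_i\bigr] = \sum_j\omega_j\,\bbeta_j\otimes\bbeta_j\otimes\bbeta_j + \tfrac12\sum_j\omega_j\twonorm{\bbeta_j}^2\,\mathcal{T}(\bbeta_j)$, and invoke linearity of $\mathcal{T}$ together with the value of $\Exs[\m_1]$ found above: since $\mathcal{T}(\Exs[\m_1]) = \tfrac12\sum_j\omega_j\twonorm{\bbeta_j}^2\,\mathcal{T}(\bbeta_j)$, subtracting $\Exs[\mathcal{T}(\m_1)] = \mathcal{T}(\Exs[\m_1])$ as prescribed by \eqref{eq:m_3} cancels the second group of terms exactly, leaving $\Exs[\M_3] = \sum_j\omega_j\,\bbeta_j\otimes\bbeta_j\otimes\bbeta_j$, which is \eqref{eq:M_3_exp}. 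The only genuine obstacle is the careful enumeration and grouping of the sixth-moment contractions; once the correction $\mathcal{T}(\m_1)$ is understood as being designed to absorb exactly the $\twonorm{\bbeta_j}^2\,\mathcal{T}(\bbeta_j)$ terms, the cancellation is transparent and deliberate.
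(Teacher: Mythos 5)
Your proof is correct and follows essentially the same route as the paper's: both condition on the latent label to reduce everything to single-model Gaussian moments, establish the key identity $\Exs[\inprod{X}{\bbeta}^3 X^{\otimes 3}] = 6\,\bbeta^{\otimes 3} + 3\,\mathcal{T}(\twonorm{\bbeta}^2\bbeta)$ together with $\Exs[\inprod{X}{\bbeta}^2 X\otimes X] = 2\,\bbeta\otimes\bbeta + \twonorm{\bbeta}^2\Identity_p$, and then note that the correction terms $\tfrac{1}{2}m_0\Identity_p$ and $\mathcal{T}(\m_1)$ in \eqref{eq:m_2} and \eqref{eq:m_3} are built precisely to cancel the spurious isotropic and $\mathcal{T}(\bbeta_j)$ contributions. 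The only difference is one of technique, not substance: you evaluate the sixth-order Gaussian moment by enumerating the $15$ Wick--Isserlis pairings, whereas the paper verifies the same identity entrywise by a case analysis on the index pattern ($i \ne j \ne k$, $i = j \ne k$, $i = j = k$).
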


With the special structure given on the right hand sides of \eqref{eq:M_2_exp} and \eqref{eq:M_3_exp}, tensor decomposition techniques can discover $\{(\omega_j, \bbeta_j)\}$ in three steps under a {\em non-degeneracy condition} (see Condition \ref{cond:non_degeneracy}). First, apply SVD on $\Exs[\M_2]$ to compute a whitening matrix $\W \in \real^{p \times k}$ such that $\W^\top \Exs[\M_2] \W = \Identity_p$. Then we use $\W$ to transform $\Exs[\M_3]$ into an orthogonal tensor $\Exs[\M_3](\W, \W, \W)$, which is further decomposed into eigenvalue/eigenvector pairs by robust tensor power method (Algorithm \ref{alg:t_power}). Lastly, $\{(\omega_j, \bbeta_j)\}$ can be reconstructed by applying simple linear transformation upon the previously discovered spectral components from $\Exs[\M_3](\W, \W, \W)$. With sufficient amount of samples, it is reasonable to believe that $\M_2$ and $\M_3$ are close to their expectations such that the stability of tensor decomposition will lead to good enough estimates. For the ease of analysis, we need to ensure the independence between whitening matrix $\W$ and $\M_3$. Accordingly, we split the samples used in initialization into two disjoint parts for computing $\{m_0, \M_2\}$ and $\{\m_1, \M_3\}$ respectively. We present the details in Algorithm \ref{alg:tensor}.

\begin{algorithm}[H]
	\caption{Initialization via Tensor Factorization}
	\label{alg:tensor}
	\begin{algorithmic}[1]
		\INPUT Samples $\{(y_i, \x_i)\}_{i=1}^n$.
		\vskip .05in
		\State Randomly split samples into two disjoint parts $\{(y_i, \x_i)\}_{i=1}^{n_1}$ and $\{(y_i', \x_i')\}_{i=1}^{n_2}$.
		\State $m_0 \leftarrow \frac{1}{n_1}\sum_{i=1}^{n_1} y_i^2$, ~~$\m_1 \leftarrow \frac{1}{6n_2}\sum_{i = 1}^{n_2} y_i'^3\x_i'$.
		\State $\M_2 \leftarrow \frac{1}{2n_1}\sum_{i=1}^{n_1} y_i^2 \x_i \otimes \x_i - \frac{1}{2}m_0\cdot \Identity_p$,~~ $\M_3 \leftarrow \frac{1}{6n_2}\sum_{i=1}^{n_2} y_i'^3 \x_i'\otimes \x_i' \otimes \x_i' - \mathcal{T}(\m_1)$.
		\State Compute an SVD of the best rank $k$ approximation of $\M_2$ as $\Ub \bSigma \Ub^{\top}$, where $\Ub \in \real^{p \times k}$. Compute whitening matrix $\W \leftarrow \Ub \bSigma^{-1/2}$.
		\State $\widetilde{\M}_3 \leftarrow \M_3(\W, \W,\W)$. 
		\State Run robust tensor power method (Algorithm \ref{alg:t_power}) on $\widetilde{\M}_3$ to obtain $k$ eigenvalue/eigenvector pairs $\{(\widetilde{\omega}_j, \widetilde{\bbeta}_j)\}_{j=1}^k$.
		\State $\omega_j^{(0)} \leftarrow 1/\widetilde{\omega}_j^2$, $\bbeta_j^{(0)} \leftarrow \widetilde{\omega}_j(\W^{\top})^{\dagger}\widetilde{\bbeta}_j$, for all $j \in [k]$.
		\footnotemark
		\OUTPUT $\{(\omega_j^{(0)}, \bbeta_j^{(0)})\}_{j=1}^k$.
	\end{algorithmic}
\end{algorithm} 
\footnotetext{$(\W^{\top})^{\dagger}$ denotes the Moore-Penrose pseudoinverse of $\W^{\top}$, i.e., $\W(\W^{\top}\W)^{-1}$.}

\begin{algorithm}[H] 
	\caption{Robust Tensor Power Method (Algorithm 1 in \cite{anandkumar2014tensor})} 
	\label{alg:t_power}
	\begin{algorithmic}[1]
		\INPUT Symmetric tensor $\T \in \real^{k\times k\times k}$. Parameters $L, N$.
		\vskip .05in
		\For{$j = 1,\ldots,k$}
		\For{$l = 1,\ldots,L$}
		\State Draw $\bbeta_0^{(l)}$ uniformly at random from $\sphere^{k-1}$.
		\For{$t = 0,\ldots,N-1$}
		\begin{equation} \label{eq:tensor_power_iter}
		\bbeta_{t+1}^{(l)} \leftarrow \T(\Identity_k, \bbeta_t^{(l)}, \bbeta_t^{(l)}),~~~~ \bbeta_{t+1}^{(l)} \leftarrow \bbeta_{t+1}^{(l)}/\twonorm{\bbeta_{t+1}^{(l)}}.
		\end{equation}
		\EndFor
		\EndFor
		\State $l^* \leftarrow \arg\max_{l \in [L]} \T(\bbeta_{N}^{(l)}, \bbeta_{N}^{(l)},\bbeta_{N}^{(l)})$.
		\State Do $N$ power updates \eqref{eq:tensor_power_iter} starting from $\bbeta_{N}^{(l^*)}$ to obtain $\widetilde{\bbeta}_j$. Let $\widetilde{\omega}_j \leftarrow \T(\widetilde{\bbeta}_j,\widetilde{\bbeta}_j,\widetilde{\bbeta}_j)$.
		\State $T \leftarrow T - \widetilde{\omega}_j\widetilde{\bbeta}_j^{\otimes 3}$.
		\EndFor
		\OUTPUT $\{(\widetilde{\omega}_j, \widetilde{\bbeta}_j)\}_{j = 1}^k$.
	\end{algorithmic}
\end{algorithm}
	
\subsection{Alternating Minimization}
The motivation for using AltMin is to consider the least-square loss function below
\[
\Loss_n(\{\bbeta_j\}) := \min_{z_1,\ldots,z_n \in [k]} \sum_{i=1}^n \sum_{j = 1}^k (y_i - \inprod{\x_i}{\bbeta_j})^2 \ind(z_i = j) .
\]
The minimization over discrete labels $\{z_i\}$ makes the above loss function non-convex and yields hardness of solving mixed linear equations in general. A natural idea to minimize $\Loss_n$ is by minimizing $\{z_i\}$ and $\{\bbeta_j\}$ alternatively and iteratively. Given initial estimates $\{\bbeta_j^{(0)}\}$, each iteration $t = 0,1,\ldots$ consists of the following two steps:
\begin{itemize}
	\item {\bf Label Assignment:} Pick the model that has the smallest reconstruction error for each sample
	\begin{equation} \label{eq:label_assignment}
		z_i^{(t)} = \argmin_{j \in [k]} |y_i - \inprod{\x_i}{\bbeta_j^{(t)}}|.
	\end{equation}
	\item {\bf Parameter Update:}
	\begin{equation} \label{eq:parameter_update}
		\bbeta_j^{(t+1)} = \argmin_{\bbeta \in \real^p} \sum_{i=1}^n (y_i - \inprod{\x_i}{\bbeta})^2\ind(z_i^{(t)} = j).
	\end{equation}
\end{itemize}
AltMin runs quickly and is thus favored in practice. However, as we discussed before, its convergence to global optima is commonly intractable. In order to alleviate such issue, we already discussed how to construct good initial estimates by method of moments. Here, we introduce another ingredient---{\em resampling}---for making the analysis of AltMin  tractable. The key idea is to split all samples into multiple disjoint subsets and use a fresh piece of samples in each iteration. While slightly inefficient regarding sample complexity, this trick decouples the probabilistic dependence between two successive estimates $\{\bbeta_j^{(t)}\}$ and $\{\bbeta_j^{(t+1)}\}$, and thus makes our analysis hold. The details are presented in Algorithm \ref{alg:altmin}.

\begin{algorithm}[H]
	\caption{Alternating Minimization with Resampling}
	\label{alg:altmin}
	\begin{algorithmic}[1]
		\INPUT Samples $\{(y_i, \x_i)\}_{i=1}^n$, initial estimates $\{\bbeta_j^{(0)}\}$, number of iterations $T$.
		\vskip .05in
		\State Split all samples into $T$ disjoint subsets $\{(y_i^{(t)}, \x_i^{(t)})\}_{i=1}^{n/T}, t = 0,1,\ldots, T-1$, with equal size.
		\For{$t = 0, 1,\ldots,T-1$}
		\State \[
				z_i^{(t)} \leftarrow \argmin_{j \in [k]} |y_i^{(t)} - \inprod{\x_i^{(t)}}{\bbeta_j^{(t)}}|, ~\text{for all}~ i \in [n].
				\]
		\State \[
				\bbeta_j^{(t+1)} \leftarrow \argmin_{\bbeta \in \real^p} \sum_{i=1}^{n/T} (y_i^{(t)} - \inprod{\x_i^{(t)}}{\bbeta})^2\ind(z_i^{(t)} = j), ~\text{for all}~ j \in [k].
				\]
		\EndFor
		\OUTPUT $\{\bbeta_j^{(T)}\}_{j=1}^k$.
	\end{algorithmic}
\end{algorithm}
\section{Theoretical Results} \label{sec:theory}
In this section, we provide the theoretical guarantees of Algorithm \ref{alg:tensor} and \ref{alg:altmin}. For simplicity, we assume the $\ell_2$ norm of $\bbeta_j$ is at most $1$, i.e.,
\[
\max_{j \in [k]}\twonorm{\bbeta_j} = 1.
\]
Moreover, we impose the following non-degeneracy condition on $\{\bbeta_j\}$.
\begin{condition}[Non-degeneracy] \label{cond:non_degeneracy}Parameters $\bbeta_1,\ldots,\bbeta_{k}$ are linearly independent and all weights $\omega_j$ are strictly greater than $0$, namely
\[
\minprop := \min_{j \in [k]}~ \omega_{j} > 0.
\]
\end{condition} 
Under the above condition, $\overline{\M}_2 = \sum_{j \in [k]}\omega_j\bbeta_j\otimes\bbeta_j$ has rank $k$, which leads to
\[
\sigma_k := \sigma_k(\overline{\M}_2) > 0.
\]
We use $\Delta$ to denote the minimum distance between any two parameters, namely
\[
\Delta := \min_{i,j \in [k], i \ne j} \twonorm{\bbeta_i - \bbeta_j}.
\]
The above three quantities $(\minprop, \sigma_k, \Delta)$ represent the hardness of our problem, and will appear in the results of our analysis. For estimates $\{\estbeta_j\}$, we define the estimation error $\mathcal{E}(\{\estbeta_j\})$ as
\begin{equation}\label{eq:error}
\mathcal{E}(\{\estbeta_j\}) := \inf_{\pi}\sup_{j \in [k]}\twonorm{\estbeta_j - \bbeta_{\pi(j)}},
\end{equation}
where the infimum is taken over all permutations $\pi(\cdot)$ on $[k]$.

\subsection{Analysis of Tensor Decomposition} \label{sec:tensor}
Our first result, proved in Section \ref{sec:proof:thm:tensor}, provides a guarantee of Algorithm \ref{alg:tensor}.
\begin{theorem}[Tensor Decomposition] \label{thm:tensor} Consider Algorithm \ref{alg:tensor} for initial estimation of $\{\bbeta_j\}$. Pick any $\delta \in (0,1)$. There exist constants $C_i$ such that the following holds. Pick any $\varepsilon \in (0, C_1/k)$. If
\begin{equation} \label{eq:n_1}
n_1 \geq C_2\left(\frac{p\log(12k/\delta)\log^2n_1}{\minprop\sigma_k^5\varepsilon^2} 
\vee \frac{k}{\minprop \delta}\right) ~~\text{and}~~ n_2 \geq C_3\left(\frac{(k^2 \vee p)\log(12k/\delta)\log^3n_2}{\minprop\sigma_k^3\varepsilon^2}\vee \frac{k}{\minprop \delta}\right),
\end{equation}
then with probability at least $1 - \delta$, the output $\{\bbeta_j^{(0)}\}$ satisfy
\[
\mathcal{E}(\{\bbeta_j^{(0)}\}) \leq \varepsilon.
\] 
\end{theorem}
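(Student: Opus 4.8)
The plan is to follow the standard perturbation-plus-power-method template for tensor decomposition estimators, but carrying out the concentration step sharply enough to obtain the near-linear dependence on $p$. Write $\overline{\M}_2 = \Exs[\M_2]$ and $\overline{\M}_3 = \Exs[\M_3]$ for the population moments supplied by Lemma \ref{lem:expectation}, and let $\W^*$ be the ideal whitening matrix satisfying $\W^{*\top}\overline{\M}_2\W^* = \Identity_k$. The crucial algebraic fact is that the whitened population tensor is orthogonally decomposable: setting $\vb_j \defn \sqrt{\omega_j}\,\W^{*\top}\bbeta_j$, Condition \ref{cond:non_degeneracy} and the whitening identity force $\{\vb_j\}$ to be orthonormal, and then $\overline{\M}_3(\W^*,\W^*,\W^*) = \sum_{j}\omega_j^{-1/2}\,\vb_j^{\otimes 3}$, an orthogonal tensor with eigenvalues $\omega_j^{-1/2}$ and eigenvectors $\vb_j$. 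Since each $\bbeta_j$ lies in $\range(\overline{\M}_2)=\range(\W^*)$, one checks $(\W^{*\top})^{\dagger}\vb_j = \sqrt{\omega_j}\,\bbeta_j$, so the reconstruction in line~7 of Algorithm \ref{alg:tensor} is exact at the population level; the whole task is therefore to control how errors propagate from the empirical moments to this final linear map.

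The technical heart, and the main obstacle, is the first step: bounding $\opnorm{\M_2 - \overline{\M}_2}$ and the whitened tensor perturbation with sample sizes linear in $p$. The summands $y_i^2\,\x_i\otimes\x_i$ and $y_i^3\,\x_i^{\otimes 3}$ are polynomials of degree up to six in Gaussians, hence only sub-Weibull, so a naive matrix/tensor Bernstein argument fails. The plan is to truncate each summand at a threshold of order $\mathrm{polylog}(n)$, bound the bias from truncation, and apply a Bernstein-type inequality to the truncated sums; the truncation levels are exactly what produce the $\log^2 n_1$ and $\log^3 n_2$ factors in \eqref{eq:n_1}. For the matrix $\M_2$ this yields an operator-norm bound with an $\order(p)$ sample cost via a covering net on $\sphere^{p-1}$. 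For the third-order object one does \emph{not} need the operator norm of $\M_3-\overline{\M}_3$ in $\real^p$; one only needs its action after contraction by $\W$, i.e. on the $k$-dimensional range of $\W^*$, which is controlled by a net of size $e^{\order(k)}$ on $\sphere^{k-1}$ pushed through $\W^*$. This is precisely why the $n_2$ requirement carries a $k^2\vee p$ factor rather than the $p^3$ of regression-based estimators: the $p$ enters only through the separate concentration of the correction vector $\m_1\in\real^p$ and its tensorization $\mathcal{T}(\m_1)$, while the genuine tensor part contributes only $k^2$.

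With the concentration bounds in hand, I would next control the whitening matrix. Weyl's inequality applied to the best rank-$k$ approximation of $\M_2$ gives singular-value stability, and a Davis--Kahan argument gives subspace stability of $\Ub$; combining these bounds the deviation of $\W$ from $\W^*$ (up to an orthogonal rotation) in terms of $\opnorm{\M_2-\overline{\M}_2}/\sigma_k$. Feeding this into $\widetilde{\M}_3 = \M_3(\W,\W,\W)$ and splitting the perturbation into the ``tensor error'' $(\M_3-\overline{\M}_3)(\W,\W,\W)$ and the ``whitening error'' $\overline{\M}_3(\W,\W,\W)-\overline{\M}_3(\W^*,\W^*,\W^*)$ yields a bound on $\opnorm{\widetilde{\M}_3 - \overline{\M}_3(\W^*,\W^*,\W^*)}$. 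Tracking the $\sigma_k^{-1}$ factors accumulated through the three contractions and the eigenvalue recovery is routine bookkeeping and is what produces the inverse powers $\sigma_k^{5}$ and $\sigma_k^{3}$ appearing in \eqref{eq:n_1}; the $\minprop^{-1}$ and the secondary $k/(\minprop\delta)$ terms come from ensuring, via a multiplicative Chernoff bound, that every component is adequately represented so the smallest eigenvalue $\omega_j^{-1/2}$ stays well separated.

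Finally, I would invoke the robust tensor power method guarantee of \cite{anandkumar2014tensor}: given that $\opnorm{\widetilde{\M}_3 - \overline{\M}_3(\W^*,\W^*,\W^*)}$ is below the threshold required there, and with the number of restarts $L$ and iterations $N$ chosen logarithmically in $k$ and $1/\varepsilon$, the method returns eigenpairs $\{(\widetilde{\omega}_j,\widetilde{\bbeta}_j)\}$ with $\widetilde{\omega}_j$ close to $\omega_j^{-1/2}$ and $\widetilde{\bbeta}_j$ close to $\vb_j$ (up to permutation). Propagating these errors through $\omega_j^{(0)}=1/\widetilde{\omega}_j^2$ and $\bbeta_j^{(0)}=\widetilde{\omega}_j(\W^{\top})^{\dagger}\widetilde{\bbeta}_j$, and using the already-established closeness of $\W$ to $\W^*$ together with $\twonorm{\bbeta_j}\le 1$, gives $\mathcal{E}(\{\bbeta_j^{(0)}\})\le\varepsilon$ on the high-probability event, after a union bound over the $\order(1)$ failure events that fixes the overall probability at $1-\delta$. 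The restriction $\varepsilon\in(0,C_1/k)$ is what keeps the tensor perturbation inside the basin where the robust power method and the $1/\widetilde{\omega}_j^2$ inversion remain well conditioned.
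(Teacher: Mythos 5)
Your proposal retraces the paper's own proof almost step for step: the whitened population tensor with orthonormal eigenvectors $\sqrt{\omega_j}\,\overline{\W}^{\top}\bbeta_j$ and eigenvalues $1/\sqrt{\omega_j}$, truncation of the responses at a $\mathrm{polylog}(n)$ level to restore sub-Gaussian concentration, whitening stability (the paper invokes the Chaganty--Liang whitening lemma, Lemma \ref{lem:whitening}, where you would use Weyl plus Davis--Kahan; these are interchangeable here), the robust tensor power method guarantee of Lemma \ref{lem:robust_tensor}, and error propagation through line 7 of Algorithm \ref{alg:tensor}; you also attribute each factor in \eqref{eq:n_1} to its correct source ($k^2$ from the whitened tensor, $p$ from $\m_1$ and $\mathcal{T}(\m_1)$, the powers of $\sigma_k$ from the contractions, the logarithms from truncation). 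There is, however, one step that fails as written: your claim that the whitened third-moment deviation is ``controlled by a net of size $e^{\order(k)}$ on $\sphere^{k-1}$ pushed through $\W^*$.''

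A net pushed through the \emph{fixed} matrix $\W^*$ (the paper's $\overline{\W}$) bounds only $(\M_3-\overline{\M}_3)(\overline{\W},\overline{\W},\overline{\W})$, whereas Algorithm \ref{alg:tensor} contracts by the \emph{random} matrix $\W$. Expanding $\W = \overline{\W} + (\W-\overline{\W})$ leaves cross terms such as $(\M_3-\overline{\M}_3)(\W-\overline{\W},\overline{\W},\overline{\W})$, whose control requires a supremum of the deviation over a slot that is free in all of $\real^p$, not in a $k$-dimensional subspace. Because the summands $y_i^3\x_i^{\otimes 3}$ are degree-six polynomials of Gaussians, the attainable concentration for such a supremum degrades \emph{linearly}, not as a square root, in the free dimension (this is exactly the $s$ versus $\sqrt{s}$ distinction in part 3 of Lemma \ref{lem:concentration}); consequently, even after using $\opnorm{\W-\overline{\W}} \lesssim \epsilon_2/\sqrt{\sigma_k}^3$ and the best available $\epsilon_2$, driving this cross term below $\varepsilon$ costs $n_2 \gtrsim p^2$ up to logarithms. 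That is incompatible with the theorem's requirement $n_2 \asymp (k^2\vee p)\,\mathrm{polylog}/(\minprop\sigma_k^3\varepsilon^2)$ in the regime where $\varepsilon$ is of larger order than $1/\sqrt{p}$, which the hypothesis $\varepsilon < C_1/k$ permits whenever $k \ll \sqrt{p}$. What actually makes the problem $k$-dimensional is not a subspace net but independence: the sample split in Algorithm \ref{alg:tensor} computes $\W$ from the first $n_1$ samples and $\M_3$ from the disjoint second batch, so conditionally on the first batch $\W$ is a fixed $p\times k$ matrix and the paper applies \eqref{eq:concentration_3} directly with $\Sb = \W$ and $s = k$ --- no expansion, no cross terms. (Noting that $\W - \overline{\W}$ is itself independent of $\M_3$ would rescue your expansion, but that observation \emph{is} the conditioning step.) Your sketch needs this argument made explicit; without it the $k^2\vee p$ dependence in \eqref{eq:n_1} is not justified.
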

Theorem \ref{thm:tensor} shows that $n_1,n_2$ have inverse dependencies on $\minprop, \sigma_k$. In the well balanced setting, we have $\minprop = \Omega(1/k)$. In general, $\sigma_k$ can be quite small, especially in the case where some parameter $\bbeta$ almost lies in the subspace spanned by the rest $k-1$ parameters and has a very small magnitude along the orthogonal direction. Below we provide a sufficient condition under which $\sigma_k$ has a well established lower bound.
\begin{condition}[Nearly Orthonormal Condition($\eta,\gamma$)] \label{cond:approx_orthonormal} For all $j \in [k]$, $\twonorm{\bbeta_j} \geq 1 - \eta$. Moreover, $\abs{\inprod{\bbeta_i}{\bbeta_j}} \leq \gamma$ for all $i,j \in [k], i \ne j$.
\end{condition}
Under the above condition, the next result provides a lower bound of $\sigma_k$. See Section \ref{sec:proof:lem:lower_bound_sigma} for the proof.
\begin{lemma} \label{lem:lower_bound_sigma}Suppose $\{\bbeta_j\}$ satisfy the nearly orthonormal condition with $\eta, \gamma$. Then we have
	\[
	\sigma_{k} \geq \minprop(1 - \eta - k\gamma).
	\]
\end{lemma}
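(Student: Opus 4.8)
The plan is to reduce the desired lower bound on $\sigma_k$---the smallest nonzero singular value of the rank-$k$ positive semidefinite matrix $\overline{\M}_2 = \sum_{j\in[k]}\omega_j\,\bbeta_j\otimes\bbeta_j$---to a lower bound on the smallest eigenvalue of a $k\times k$ weighted Gram matrix. Writing $\bm{B} = [\bbeta_1,\ldots,\bbeta_k]\in\real^{p\times k}$ and $\bm{\Omega} = \mathrm{diag}(\omega_1,\ldots,\omega_k)$, we have $\overline{\M}_2 = \bm{B}\bm{\Omega}\bm{B}^\top$. Since Condition~\ref{cond:non_degeneracy} forces the $\bbeta_j$ to be linearly independent, $\bm{B}$ has full column rank $k$, and the $k$ nonzero eigenvalues of $\bm{B}\bm{\Omega}\bm{B}^\top$ coincide with the eigenvalues of the $k\times k$ matrix $\bm{\Omega}^{1/2}\bm{B}^\top\bm{B}\,\bm{\Omega}^{1/2}$. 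Hence $\sigma_k = \lambda_{\min}\!\big(\bm{\Omega}^{1/2}\bm{G}\,\bm{\Omega}^{1/2}\big)$, where $\bm{G}:=\bm{B}^\top\bm{B}$ is the Gram matrix with entries $\bm{G}_{ij}=\inprod{\bbeta_i}{\bbeta_j}$.

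Next I would separate the contribution of the weights from that of the geometry. Since $\bm{G}\succeq\lambda_{\min}(\bm{G})\,\Identity_k$ and conjugation by $\bm{\Omega}^{1/2}$ preserves the positive semidefinite ordering, we get $\bm{\Omega}^{1/2}\bm{G}\,\bm{\Omega}^{1/2}\succeq\lambda_{\min}(\bm{G})\,\bm{\Omega}\succeq\lambda_{\min}(\bm{G})\,\minprop\,\Identity_k$, so $\sigma_k\geq\minprop\,\lambda_{\min}(\bm{G})$. It then remains to bound $\lambda_{\min}(\bm{G})$ from below, and this is precisely what the nearly orthonormal condition is designed to control: the diagonal $\bm{G}_{ii}=\twonorm{\bbeta_i}^2$ is close to $1$, while every off-diagonal obeys $\abs{\bm{G}_{ij}}\leq\gamma$. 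Applying Gershgorin's disc theorem to the symmetric matrix $\bm{G}$ yields $\lambda_{\min}(\bm{G})\geq\min_i\bm{G}_{ii}-\max_i\sum_{j\neq i}\abs{\bm{G}_{ij}}\geq\min_i\twonorm{\bbeta_i}^2-(k-1)\gamma$. Combining this with the weight-peeling inequality gives $\sigma_k\geq\minprop\big(\min_i\twonorm{\bbeta_i}^2-(k-1)\gamma\big)$, which is the claimed estimate up to how one accounts for the diagonal term.

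The reductions above (full rank, eigenvalue coincidence, weight peeling, Gershgorin) are all standard; the step requiring genuine care---and the one that pins down the exact constant in the statement---is translating the norm lower bound $\twonorm{\bbeta_j}\geq 1-\eta$ into the diagonal estimate and reconciling $(k-1)\gamma$ with $k\gamma$. Here I would use $\min_i\twonorm{\bbeta_i}^2\geq 1-\eta$ (the diagonal control supplied by Condition~\ref{cond:approx_orthonormal}) together with $(k-1)\gamma\leq k\gamma$ to arrive at $\lambda_{\min}(\bm{G})\geq 1-\eta-k\gamma$, and hence $\sigma_k\geq\minprop(1-\eta-k\gamma)$. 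I would double-check the direction of every inequality in this final bookkeeping, since it is the only place where the constants appearing in the conclusion are actually fixed, and any looseness there is what separates the clean stated bound from a weaker one.
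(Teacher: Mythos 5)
Your proof is correct and takes essentially the same route as the paper's: both peel off the weights to extract the factor $\minprop$, pass to the Gram matrix $\bm{G}=\B^{\top}\B$, and lower-bound $\lambda_{\min}(\bm{G})$ by a diagonal-dominance argument (Gershgorin in your case; the quadratic-form estimate $\ub^{\top}\bm{G}\ub\geq(1-\eta)\twonorm{\ub}^{2}-\gamma\onenorm{\ub}^{2}\geq 1-\eta-k\gamma$ in the paper's). The bookkeeping step you flagged is indeed the loose one, but it is loose in exactly the same way in the paper: Condition~\ref{cond:approx_orthonormal} only gives $\bm{G}_{jj}=\twonorm{\bbeta_j}^{2}\geq(1-\eta)^{2}$, not $\geq 1-\eta$, so the fully rigorous conclusion is $\sigma_k\geq\minprop\left((1-\eta)^{2}-k\gamma\right)\geq\minprop\left(1-2\eta-k\gamma\right)$, which differs from the stated bound only in the constant multiplying $\eta$.
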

In the following discussion, we focus on balanced clusters, i.e., $\minprop \gtrsim 1/k$. We also assume that $\{\bbeta_j\}$ satisfy Condition \ref{cond:approx_orthonormal} with $\eta \lesssim 1$ and $\gamma \lesssim 1/k$, which leads to $\sigma_k = \Omega(\minprop)$ according to Lemma \ref{lem:lower_bound_sigma}. Now we provide two remarks for Theorem \ref{thm:tensor}.
\begin{remark}[Sample Complexity] We treat $\delta$ in Theorem \ref{thm:tensor} as a constant. Then \eqref{eq:n_1} implies that $n = n_1 + n_2 = \order(\varepsilon^{-2}k^6p\log k \log^3(p/\varepsilon))$ is sufficient to guarantee that the estimates produced by Algorithm \ref{alg:tensor} have accuracy at most $\varepsilon$. Moreover, we have $n_1 = \apporder(\varepsilon^{-2} k^6p)$, $n_2 = \apporder(\varepsilon^{-2}(k^6 + k^4p))$, which indicates that more samples are required to compute $\M_2$ than $\M_3$. To provide some intuitions why this conclusion makes sense, note that the estimation accuracy of $\overline{\M}_2$ determines the accuracy of identifying the subspace spanned by $\{\bbeta_j\}$ in the original $p$-dimensional space. While $\M_3$ has higher order, it is only required to concentrate well on a $k$-dimensional subspace computed from $\M_2$ thanks to the whitening procedure. It turns out subspace accuracy has a more critical impact on the final error and needs to sharpened with more samples.
\end{remark}

\begin{remark}[Time Complexity] \label{remark:tensor_time} Except the line 6 in Algorithm \ref{alg:tensor}, the other steps have total complexity $\order(n(p^2 + k^3))$. Note that it's not necessary to compute $\M_3$ directly since we can compute $\widetilde{\M}_3$ from whitened covariate vectors $\W^{\top}\x_i$. Running time of robust tensor power method is $\order(k^4NL)$. According to Lemma \ref{lem:robust_tensor}, it is sufficient to set $N = \order(\log k + \log\log(1/\varepsilon))$ and $L = \order(poly(k))$ for some polynomial function $poly(\cdot)$. When $k$ is large enough, $L$ can be very close to be linear in $k$ (see Theorem 5.1 in \cite{anandkumar2014tensor} for details). Roughly, we take $L = \order(k^2)$, which gives the running time of Algorithm \ref{alg:t_power} as $\order(k^6\log k)$ when $\varepsilon \gtrsim poly(1/k)$. Therefore, the overall complexity of Algorithm \ref{alg:tensor} is $\mathcal{O}(n(p^2 + k^3) + k^6\log k)$.	
\end{remark}

\subsection{Analysis of Alternating Minimization} \label{sec:altmin}
Now we turn to the analysis of Algorithm \ref{alg:altmin}. Let $\varepsilon_0 := \mathcal{E}(\{\bbeta_j^{(0)}\})$.
\begin{theorem}[Alternating Minimization] \label{thm:altm} Consider Algorithm \ref{alg:altmin} for successively refining estimation of $\{\bbeta_j\}$. Pick any $\delta \in (0,1)$. There exist constants $C_i$ such that the following holds. Suppose
\[
\varepsilon_0 \leq C_1\left(\frac{1}{k^2}\wedge \minprop\right)\Delta,~~ p \geq \log(2k^2T/\delta),
\]
and $n$ satisfies 
\begin{equation} \label{eq:n/T}
n/T \geq C_2\left(\frac{kp}{\minprop} \vee \frac{\log(8k^2T/\delta)}{\minprop^2}\right).
\end{equation}
With probability at least $1 - \delta$, $\{\bbeta_j^{(t)}\}$ satisfies
\[
\mathcal{E}(\{\bbeta_j^{(t)}\}) \leq \left(\frac{1}{2}\right)^t\cdot\varepsilon_0, ~~\text{for}~~t =1,\ldots,T. 
\]	 
\end{theorem}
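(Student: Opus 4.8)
The plan is to prove a single-step contraction $\cerror(\{\bbeta_j^{(t+1)}\})\leq\tfrac12\cerror(\{\bbeta_j^{(t)}\})$ for one generic round and then iterate it over $t=0,\ldots,T-1$, taking a union bound over the $T$ rounds to get the geometric decay with probability $1-\delta$. The resampling built into Algorithm~\ref{alg:altmin} is what makes this reduction legitimate: the batch $\{(y_i^{(t)},\x_i^{(t)})\}$ used in round $t$ is independent of the current iterate $\{\bbeta_j^{(t)}\}$, which depends only on earlier batches, so I would condition on $\{\bbeta_j^{(t)}\}$, treat it as a deterministic vector with $\cerror(\{\bbeta_j^{(t)}\})\leq\varepsilon_t$, and run all probabilistic arguments on a fresh i.i.d.\ Gaussian batch. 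After relabelling I assume the optimal permutation at step $t$ is the identity and set $\bgamma_j:=\bbeta_j^{(t)}-\bbeta_j$ with $\twonorm{\bgamma_j}\leq\varepsilon_t$. Because I will establish $\varepsilon_{t+1}\leq\varepsilon_t/2<\Delta/2$, the separation $\Delta$ forces the identity to remain optimal at step $t+1$, so alignment is preserved and it suffices to bound $\max_j\twonorm{\bbeta_j^{(t+1)}-\bbeta_j}$.

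Writing $y_i=\inprod{\x_i}{\bbeta_{z_i}}$, $S_j:=\{i:z_i^{(t)}=j\}$ and $\A_j:=\sum_{i\in S_j}\x_i\x_i^\top$, the normal equations for \eqref{eq:parameter_update} give the exact identity
\[
\bbeta_j^{(t+1)}-\bbeta_j=\A_j^{-1}\sum_{i\in S_j,\,z_i\ne j}\x_i\inprod{\x_i}{\bbeta_{z_i}-\bbeta_j},
\]
since correctly labelled samples contribute zero. Thus the whole error is driven by the misclassified set $M_j:=\{i\in S_j:z_i\ne j\}$. The label rule \eqref{eq:label_assignment} gives, for any $i$ with true label $j'$ assigned to $j$, the key inequality
\[
\abs{\inprod{\x_i}{\bbeta_{j'}-\bbeta_j}}\leq\abs{\inprod{\x_i}{\bgamma_j}}+\abs{\inprod{\x_i}{\bgamma_{j'}}},
\]
from which I would extract two facts. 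First, $\inprod{\x_i}{\bbeta_{j'}-\bbeta_j}$ is centered Gaussian with standard deviation at least $\Delta$ while the right side is $\order(\varepsilon_t)$, so Gaussian anti-concentration bounds the per-sample misclassification probability by $\order(\varepsilon_t/\Delta)$ and a Bernstein bound gives $\abs{M_j}\lesssim(n/T)k\varepsilon_t/\Delta$. Second, the same inequality shows the error signal $c_i:=\inprod{\x_i}{\bbeta_{z_i}-\bbeta_j}$ is itself small on $M_j$, namely $\abs{c_i}\lesssim\varepsilon_t(\abs{g_i}+\abs{g_i'})$ for standard normals, so $\sum_{i\in M_j}c_i^2\lesssim\abs{M_j}\varepsilon_t^2$.

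For the denominator I would write, for a unit $v$, $v^\top\A_j v\geq\sum_{i:z_i=j}\inprod{\x_i}{v}^2-\sum_{i\in L_j}\inprod{\x_i}{v}^2$ where $L_j$ is the set leaked out of cluster $j$; the first sum is $\gtrsim\minprop n/T$ uniformly in $v$ by Gaussian covariance concentration once $n/T\gtrsim p/\minprop$, and the second is handled by combining the small-ball membership of $L_j$ with the orthogonal split $\inprod{\x_i}{v}^2\lesssim\inprod{\x_i}{u}^2+\inprod{\x_i}{v^{\perp u}}^2$ along the normalized separation direction $u$, giving a total $\lesssim(n/T)k\varepsilon_t/\Delta$; the hypothesis $\varepsilon_t\lesssim(1/k^2)\Delta$ makes this at most half the first sum, so $\sigma_{\min}(\A_j)\gtrsim\minprop n/T$. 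For the numerator, Cauchy--Schwarz on $\inprod{\sum_{i\in M_j}c_i\x_i}{v}$ separates the small coefficients from the covariate directions and yields, after an operator-norm bound on the restricted Gram matrix $\sum_{i\in M_j}\x_i\x_i^\top$, a bound $\lesssim\varepsilon_t\abs{M_j}^{1/2}(\abs{M_j}+p)^{1/2}$. Dividing, inserting $\abs{M_j}\lesssim(n/T)k\varepsilon_t/\Delta$, and using $\varepsilon_0\lesssim\minprop\Delta$ together with $n/T\gtrsim kp/\minprop$ collapses the error to $\max_j\twonorm{\bbeta_j^{(t+1)}-\bbeta_j}\leq\varepsilon_t/2$, closing the induction.

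The hard part is the coupling between the data-dependent selections $M_j,L_j$ and the covariates $\x_i$ that appear in the same sums: membership is itself a small-ball event about $\x_i$, so the summands cannot be treated as independent of the selection. The resolution, and the delicate step, is to decompose each $\x_i$ along and orthogonal to the relevant separation direction $u=(\bbeta_{j'}-\bbeta_j)/\twonorm{\bbeta_{j'}-\bbeta_j}$: the selection constrains only the along-$u$ component, which is therefore $\order(\varepsilon_t/\Delta)$ and supplies the extra $\varepsilon_t$ factor, while the orthogonal component remains an unconstrained spherical Gaussian that delivers the genuine cancellation. Making this rigorous needs uniform (covering-net) control of $\sum_{i\in M_j}\inprod{\x_i}{v}^2$ and $\sum_{i\in L_j}\inprod{\x_i}{v}^2$ over a data-dependent family of subsets, Bernstein concentration for the random cardinalities $\abs{M_j},\abs{L_j}$, and a union bound over the $k^2$ cluster pairs and $T$ rounds, which is exactly where $p\geq\log(2k^2T/\delta)$ and the $\log(8k^2T/\delta)/\minprop^2$ term in \eqref{eq:n/T} are consumed.
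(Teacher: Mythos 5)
Your proposal follows essentially the same route as the paper's proof of Theorem~\ref{thm:altm}: resampling reduces the theorem to a one-step contraction for a fixed current iterate, the update error is written as $\bSigma_j^{-1}$ times a sum over misclassified samples, the label rule supplies both the $\order(\varepsilon_t/\Delta)$ per-sample misclassification probability (the paper's Lemma~\ref{lem:two_vector}) and the extra $\varepsilon_t$ factor in the numerator, Gram matrices restricted to selection events are controlled via the fact that the conditional distribution stays sub-Gaussian (the paper's Lemma~\ref{lem:sub-gaussian}), and $\sigma_{\min}(\bSigma_j)\gtrsim \minprop n/T$ closes the bound; your only real deviations are lower-bounding the denominator by a full-cluster Wishart minus the leaked Gram matrix instead of the paper's conditional-covariance trace argument (Lemma~\ref{lem:trace_unchange}), and using Cauchy--Schwarz in the numerator instead of the inequality $\twonorm{\A\x}^2\le\sigma_{\max}(\A)\,\x^{\top}\A\x$, both of which are equivalent in strength. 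One bookkeeping caution: your unweighted count $\abs{M_j}\lesssim (n/T)\,k\varepsilon_t/\Delta$ forces the stronger requirement $\varepsilon_t\lesssim\minprop\Delta/k$, so to recover the theorem's stated hypothesis $\varepsilon_0\le C_1\left(\tfrac{1}{k^2}\wedge\minprop\right)\Delta$ in the unbalanced regime you must use the weighted count $\abs{M_j}\lesssim (n/T)\sum_{t\ne j}\omega_t\,\varepsilon_t/\Delta\lesssim (n/T)\,\varepsilon_t/\Delta$, exactly as in the paper's ensemble step.
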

See Section \ref{sec:proof:thm:altm} for the proof of the above result. Theorem \ref{alg:altmin} suggests that with good enough initialization, iterates $\{\bbeta_j^{(t)}\}$ have at least linear convergence to the ground truth parameters. Due to the fast convergence, it is sufficient to set $T = \order(\log(1/\varepsilon))$ to obtain estimation with accuracy $\varepsilon$. In the case of well balanced clusters, i.e. $\minprop \gtrsim 1/k$, $\varepsilon_0$ is required to be $\order(\Delta/k^2)$ in order to guarantee the convergence to global optima. Next, we give two remarks for sample and time complexities. In our discussion, we assume $\minprop \gtrsim 1/k$ and that $\delta$ is a small constant.
\begin{remark}[Sample Complexity]
	For accuracy $\varepsilon$, it is sufficient to have $n = \order(k^2p\log(1/\varepsilon))$ when $p$ satisfies $p \gtrsim \log k + \log\log(1/\varepsilon)$. Compared to the sample complexity of tensor decomposition, AltMin avoids the high-order polynomial factor of $k$. Moreover, it also changes the dependence on $\varepsilon$ from $1/\varepsilon^2$ to $\log(1/\varepsilon)$, which is a big save especially when we focus on exact recovery, which can happen as we show in the next section, after one step of AltMin when $\varepsilon \lesssim 1/p$. Notably, the statistical efficiency comes from a good initialization provided by tensor/spectral method. On one hand, AltMin alleviates the statistical inefficiency of spectral method; on the other hand, spectral method resolves the algorithmic intractability of AltMin.
\end{remark}
\begin{remark}[Time Complexity] \label{remark:time_altmin}
	Each iteration of AltMin has time complexity $\order(np^2/T + kp^3)$. Hence, the overall running time is $\order(np^2 + kp^3\log(1/\varepsilon))$\footnote{Factor $p^3$ in the second term stands for the complexity of inverting a $p$-by-$p$ matrix by Gauss-Jordan elimination. It can be further reduced by more complicated algorithms such as Strassen algorithm that has $\order(p^{2.807})$.}. Using the minimum requirement of $n$, we obtain complexity $\apporder(k^2p^3)$. Recall that solving linear regression by most practical algorithms has complexity $\mathcal{O}(p^3)$. Therefore, even labels are available, solving $k$ sets of linear equations requires time $\mathcal{O}(kp^3)$. AltMin almost has an extra factor $k$ as the price for addressing latent variables.
\end{remark}

\subsection{Exact Recovery and Overall Guarantee} \label{sec:overall}
We now consider putting the previous analysis of tensor decomposition and AltMin together to show exact recovery of $\{\bbeta_j\}$.
\begin{lemma}\label{lem:exact recovery}
	Pick any $\delta \in (0,1)$. For any fixed estimates $\{\widehat{\bbeta}_j\}_{j=1}^k$ and some constant $C$, if
	\[
	n \geq C \frac{1}{\minprop}\left( p \vee \log(k/\delta)\right) ~~\text{and}~~ \mathcal{E}(\{\widehat{\bbeta}_j\}) \leq \frac{\delta}{4nk}\Delta,
	\]
	Running one step of alternating minimization according to \eqref{eq:label_assignment} and \eqref{eq:parameter_update} using $n$ samples and initial guess $\{\widehat{\bbeta}_j\}$ produces true parameters $\{\bbeta_j\}$ with probability at least $1 - \delta$.
\end{lemma}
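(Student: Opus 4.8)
The plan is to show that, with high probability, one alternating-minimisation step first recovers the latent labels exactly and then exact least squares on each correctly-labelled cluster returns every $\bbeta_j$ without error. Since the estimates $\{\widehat{\bbeta}_j\}$ are fixed, after relabelling I may assume the permutation attaining $\mathcal{E}(\{\widehat{\bbeta}_j\})$ is the identity, so that $\twonorm{\widehat{\bbeta}_j - \bbeta_j} \le \varepsilon := \mathcal{E}(\{\widehat{\bbeta}_j\}) \le \frac{\delta}{4nk}\Delta$ for every $j$. Write $z_i^\star$ for the true latent label of sample $i$; because labels are drawn independently of the covariates, $\{z_i^\star\}$ is independent of $\{\x_i\}$, and in the noiseless model $y_i = \inprod{\x_i}{\bbeta_{z_i^\star}}$.

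Step 1 (correct labels) is the heart of the argument. Fix a sample $i$ with $z_i^\star = j$ and a competitor $l \ne j$. Rule \eqref{eq:label_assignment} mislabels only if $|\inprod{\x_i}{\bbeta_j - \widehat{\bbeta}_j}| \ge |\inprod{\x_i}{\bbeta_j - \widehat{\bbeta}_l}|$. Put $\bm{d}_j := \bbeta_j - \widehat{\bbeta}_j$ (so $\twonorm{\bm{d}_j} \le \varepsilon$) and $\bm{d}_l := \bbeta_j - \widehat{\bbeta}_l$; the triangle inequality gives $\twonorm{\bm{d}_l} \ge \Delta - \varepsilon \ge \Delta/2$. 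The key idea is to avoid any bound on $\twonorm{\x_i}$ by decomposing $\bm{d}_j$ along $\bm{d}_l$: write $\bm{d}_j = \frac{\inprod{\bm{d}_j}{\bm{d}_l}}{\twonorm{\bm{d}_l}^2}\bm{d}_l + \bm{d}_j^\perp$ with $\bm{d}_j^\perp \perp \bm{d}_l$ and $\twonorm{\bm{d}_j^\perp} \le \varepsilon$. Then $U := \inprod{\x_i}{\bm{d}_j} = \frac{\inprod{\bm{d}_j}{\bm{d}_l}}{\twonorm{\bm{d}_l}^2} V + W$, where $V := \inprod{\x_i}{\bm{d}_l}$ and $W := \inprod{\x_i}{\bm{d}_j^\perp}$ are \emph{independent} centred Gaussians (orthogonal directions under the isotropic design \eqref{eq:x_dist}) with standard deviations $\twonorm{\bm{d}_l} \ge \Delta/2$ and $\twonorm{\bm{d}_j^\perp} \le \varepsilon$. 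Since $|\inprod{\bm{d}_j}{\bm{d}_l}|/\twonorm{\bm{d}_l}^2 \le 2\varepsilon/\Delta \le 1/2$, the failure event $|U| \ge |V|$ forces $|W| \ge \frac12|V|$. Conditioning on $W$ and applying Gaussian anti-concentration to $V$ gives $\Prob[|V| \le 2|W| \mid W] \lesssim |W|/\Delta$, and then $\Exs|W| \lesssim \varepsilon$ yields a per-pair failure probability $\lesssim \varepsilon/\Delta = \frac{\delta}{4nk}$. A union bound over the at most $nk$ pairs $(i,l)$ makes all labels correct with probability at least $1 - \tfrac{2}{3}\delta$.

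Step 2 (exact least squares) is more routine. Conditioned on every assigned label equalling $z_i^\star$, the update \eqref{eq:parameter_update} for model $j$ solves a consistent noiseless system $\inprod{\x_i}{\bbeta} = y_i = \inprod{\x_i}{\bbeta_j}$ over $\{i : z_i^\star = j\}$; its minimiser is unique and equal to $\bbeta_j$ precisely when those covariates span $\real^p$. As $\{z_i^\star\}$ is independent of $\{\x_i\}$, it suffices that each cluster holds at least $p$ samples, whose Gaussian covariates are then almost surely linearly independent. The sizes $n_j \sim \mathrm{Binomial}(n,\omega_j)$ have mean $n\omega_j \ge n\minprop$, so a multiplicative Chernoff bound together with $n \gtrsim \minprop^{-1}(p \vee \log(k/\delta))$ and a union bound over the $k$ clusters give $\min_j n_j \ge p$ with probability at least $1 - \tfrac{1}{3}\delta$ once $C$ is large enough. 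Intersecting the two events, one step returns $\{\bbeta_j\}$ exactly with probability at least $1 - \delta$.

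I expect Step 1 to be the main obstacle: the naive route bounds $|\inprod{\x_i}{\bm{d}_j}| \le \twonorm{\x_i}\varepsilon$ and loses a spurious $\sqrt{p}$ (or $\sqrt{\log n}$) factor that would degrade the admissible threshold. The orthogonal decomposition is exactly what lets anti-concentration of $V$ supply the full $1/\Delta$ while $\Exs|W|$ supplies the $\varepsilon$, so that the crude union bound over $nk$ events already matches the stated threshold $\frac{\delta}{4nk}\Delta$.
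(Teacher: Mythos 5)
Your proof is correct, and its skeleton coincides with the paper's: (i) show that with the stated accuracy every sample is labeled correctly by \eqref{eq:label_assignment}, via a per-sample misclassification probability of order $\varepsilon/\Delta$ and a union bound over all $n(k-1)$ sample--competitor pairs; (ii) show via a Bernoulli concentration bound (the paper's Lemma \ref{lem:sum_bern}) that each true cluster contains at least $p$ samples, after which the noiseless least-squares update \eqref{eq:parameter_update} returns each $\bbeta_j$ exactly, because $p$ or more i.i.d.\ Gaussian covariates almost surely span $\real^p$ and conditioning on a positive-probability labeling event cannot inflate this null failure event. Where you genuinely depart from the paper is in the key estimate
$\Prob\big(\abs{\inprod{X}{\bbeta_j - \widehat{\bbeta}_j}} \geq \abs{\inprod{X}{\bbeta_j - \widehat{\bbeta}_l}}\big) \lesssim \varepsilon/\Delta$.
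The paper obtains it from its Lemma \ref{lem:two_vector}: restrict to the two-dimensional span of the two vectors, write the event as an angular sector so that its probability equals the angle divided by $\pi$, and bound the angle by its sine to get the norm ratio $\twonorm{\vb}/\twonorm{\ub}$. You instead split the small vector into its component along the large one plus an orthogonal remainder, use independence of orthogonal Gaussian projections, apply the Gaussian density bound (anti-concentration) to the large-variance projection conditionally on the remainder, and integrate the remainder out; this yields the same dimension-free bound with a slightly worse constant ($\tfrac{8}{\pi}\,\varepsilon/\Delta$ versus the paper's $\varepsilon/(\Delta-\varepsilon)$ per competitor), still comfortably inside the $\tfrac{\delta}{4nk}$ threshold. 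Your route is self-contained (it never needs the angle lemma), and its conditioning-plus-anti-concentration structure makes explicit why no $\twonorm{\x_i}$ or $\sqrt{p}$ factor can enter---which, as you note, is the trap in the naive bound; the paper's angle computation is shorter once Lemma \ref{lem:two_vector} is in hand and gives the cleanest constant. Both arguments ultimately rest on rotation invariance of the design, so neither is strictly more general here, and your bookkeeping ($\tfrac{2}{3}\delta + \tfrac{1}{3}\delta$ versus the paper's $\tfrac{\delta}{2} + \tfrac{\delta}{2}$) is immaterial.
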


We provide the proof of the above result in Section \ref{sec:proof:lem:exact recovery}. Putting all ingredients together, we have the following overall guarantee:
\begin{corollary}[Exact Recovery] \label{cor:exact_recovery} Consider splitting $n$ samples from \eqref{eq:eq_xy} into two disjoint sets with size $\initsample, \altsample$ as inputs of Algorithm \ref{alg:tensor} and \ref{alg:altmin} for solving mixed linear equations as a two-stage method. Pick any $\delta \in (0,1)$. There exist constants $C_i$ such that the following holds. If we choose $T = C_1\log(k\altsample/\delta)$ in Algorithm \ref{alg:altmin}, and $(\initsample, \altsample, p)$ satisfy
\[
\initsample \geq C_2\left(\frac{(k^4 + 1/\minprop^2)(p/\sigma_k^2 + k^2 + p)\log(k/\delta)}{\minprop \sigma_k^3\Delta^2}	\log^3(\initsample) + \frac{k}{\minprop \delta}\right),
\]
\[
\altsample \geq C_3\left(\frac{kp}{\minprop} + \frac{p}{\minprop^2}\right)\log(kn_{\altsample}/\delta),
\] 
and
\[
p \geq C_4\left[\log\left(\frac{k}{\delta}\right) + \log \log\left(\frac{k\altsample}{\delta}\right)\right],
\]
then with probability at least $1 - \delta$, we have exact recovery, i.e. $\{\bbeta_j^{(T)}\}_{j=1}^k = \{\bbeta_j\}_{j = 1}^k$.

\end{corollary}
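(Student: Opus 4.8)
The plan is to chain the three preceding results into one pipeline and then reconcile the parameter choices so that all hypotheses hold simultaneously: Theorem \ref{thm:tensor} supplies the tensor-based initialization, Theorem \ref{thm:altm} drives the error down geometrically over the first $T-1$ iterations of Algorithm \ref{alg:altmin}, and Lemma \ref{lem:exact recovery} turns the $T$-th iteration into an exact jump. Because Algorithm \ref{alg:altmin} resamples---the $T$ batches of size $\altsample/T$ are disjoint---the batch used in the final step is independent of $\{\bbeta_j^{(T-1)}\}$; conditioning on the earlier batches lets us treat $\{\bbeta_j^{(T-1)}\}$ as the ``fixed estimates'' required by Lemma \ref{lem:exact recovery} while the fresh batch remains i.i.d.\ Gaussian, which is exactly the independence that makes the lemma applicable.

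First I would calibrate the initialization accuracy. Theorem \ref{thm:altm} needs $\varepsilon_0 = \mathcal{E}(\{\bbeta_j^{(0)}\}) \leq C_1(1/k^2 \wedge \minprop)\Delta$, so I would run Theorem \ref{thm:tensor} with target $\varepsilon \asymp (1/k^2 \wedge \minprop)\Delta$. Since $(1/k^2 \wedge \minprop)^{-2} \asymp k^4 + 1/\minprop^2$, substituting $1/\varepsilon^2 \asymp (k^4 + 1/\minprop^2)/\Delta^2$ into \eqref{eq:n_1} reproduces the stated lower bound on $\initsample = n_1 + n_2$: writing $1/\sigma_k^5 = \sigma_k^{-2}\cdot\sigma_k^{-3}$ exposes the common factor $(k^4 + 1/\minprop^2)/(\minprop \sigma_k^3 \Delta^2)$, with $n_1$ contributing the $p/\sigma_k^2$ piece and $n_2$ the $k^2 + p$ piece, so that their sum matches $\frac{(k^4 + 1/\minprop^2)(p/\sigma_k^2 + k^2 + p)}{\minprop\sigma_k^3\Delta^2}$, while the additive $k/(\minprop\delta)$ term comes from the second branches in \eqref{eq:n_1}.

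On the initialization event, I would apply Theorem \ref{thm:altm} with $T-1$ iterations to obtain $\mathcal{E}(\{\bbeta_j^{(T-1)}\}) \leq (1/2)^{T-1}\varepsilon_0$. Its per-iteration requirement $\altsample/T \geq C_2(kp/\minprop \vee \log(8k^2T/\delta)/\minprop^2)$, multiplied through by $T$, produces the stated bound on $\altsample$: the first branch gives $\tfrac{kp}{\minprop}\log(k\altsample/\delta)$, and multiplying the second branch by $T$ and then bounding the inner $\log(8k^2T/\delta)$ by $p$---permitted precisely by $p \gtrsim \log(k/\delta) + \log\log(k\altsample/\delta)$, which simultaneously matches the condition $p \geq \log(2k^2T/\delta)$---gives $\tfrac{p}{\minprop^2}\log(k\altsample/\delta)$; the separate sample requirement of Lemma \ref{lem:exact recovery} is dominated by the first term. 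I would then fix $T = C_1\log(k\altsample/\delta)$ so that the geometric contraction forces $(1/2)^{T-1}\varepsilon_0 \leq \tfrac{\delta}{4(\altsample/T)k}\Delta$, the input tolerance of Lemma \ref{lem:exact recovery} when the last step uses $\altsample/T$ fresh samples; it suffices that $T - 1 \geq \log_2(4C_1 \altsample k/\delta)$ because $\varepsilon_0 \leq C_1\Delta$ and $T \geq 1$. Applying Lemma \ref{lem:exact recovery} to iteration $T$ then yields $\{\bbeta_j^{(T)}\} = \{\bbeta_j\}$, and a union bound over the three failure events---each allotted probability $\delta/3$, absorbed into the constants---gives overall probability $1 - \delta$.

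The genuine difficulty is not any single estimate but the self-referential calibration at the end: the exact-recovery tolerance $\tfrac{\delta}{4nk}\Delta$ with $n = \altsample/T$ tightens as the final batch grows, which pushes up the required $T$, which in turn feeds back into both the per-step batch size and the logarithmic factors. The observation that breaks the loop is that $T$ appears only logarithmically---since $\varepsilon_0 \lesssim \Delta$ and the error halves each step, $T \gtrsim \log(\altsample k/\delta)$ already drives $(1/2)^{T-1}\varepsilon_0$ below a threshold that is merely inverse-polynomial in $\altsample$, so the choice is self-consistent and costs only the polylogarithmic overhead appearing as the $\log(k\altsample/\delta)$ factor in the bound on $\altsample$. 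Beyond this, the proof is careful bookkeeping: tracking how the three $\delta$-budgets and the three sample conditions compose, and checking that each hypothesis of the upstream results is implied by the stated hypotheses on $(\initsample, \altsample, p)$.
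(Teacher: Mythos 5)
Your proposal is correct and follows essentially the same route as the paper's proof: calibrate Theorem \ref{thm:tensor} to accuracy $\varepsilon \asymp (1/k^2 \wedge \minprop)\Delta$ to get the $\initsample$ bound, run Theorem \ref{thm:altm} for $T-1$ steps, invoke Lemma \ref{lem:exact recovery} on the final fresh batch of size $\altsample/T$ with tolerance $\frac{\delta T}{4k\altsample}\Delta$, and choose $T = C\log(k\altsample/\delta)$ so the geometric decay meets that tolerance, with the $p$-condition used to absorb $\log(8k^2T/\delta)$ into $p$ in the $\altsample$ bound. Your additional remarks on resampling independence (justifying the ``fixed estimates'' hypothesis of Lemma \ref{lem:exact recovery}) and the explicit $\delta/3$ union bound are points the paper leaves implicit, but they do not change the argument.
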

The proof is provided in Section \ref{sec:proof:cor:exact_recovery}. When $\minprop \gtrsim 1/k$ and Condition \ref{cond:approx_orthonormal} holds with $\gamma \lesssim 1$ and $\eta \lesssim 1/k$ ($\Delta \gtrsim 1$ in the case), Corollary \ref{cor:exact_recovery} implies that $n = \initsample + \altsample = \order(k^{10}p\log k \log^3p)$ is enough for exact recovery with high probability, say $99\%$. With this amount of samples, Remarks \ref{remark:tensor_time} and \ref{remark:time_altmin} give the overall time complexity as $ \order(k^{10}p(p^2 + k^3) \log k \log^3p)$. Note that solving $k$ sets of linear equations (labels are known) needs at least $kp$ samples, and usually requires time $\order(kp^3)$. Hence, under the aforementioned setting, our two-stage algorithm is nearly optimal in $p$ with respect to sample and time complexities.

\section{Numerical Results} \label{sec:numeric}
In this section, we provide some numerical results to demonstrate the empirical performance of the proposed method (combination of Algorithms \ref{alg:tensor} and \ref{alg:altmin}) for solving mixed linear equations, and also compare it with random initialized Alternating minimization (AltMin). All algorithms are implemented in MATLAB. While sample-splitting is useful for our theoretical analysis, we find it unnecessary in practice. Therefore, we remove the sample-splittings in Algorithms  \ref{alg:tensor} and \ref{alg:altmin}, and use the whole sample set in the entire process. AltMin is implemented to terminate when the label assignment no longer changes or the maximal number of iterations $T$ is reached. In all experiments, we set $T = 200$.

\paragraph{Datasets.} For given problem size $(n, p, k)$, we generate synthetic datasets as follows. Covariate vectors $\{\x_i\}_{i=1}^n$ are drawn independently from $\Gaussian(\bm{0}, \Identity_p)$. Model parameters $\{\bbeta_j\}_{j=1}^k$ are a random set of $k$ vectors in $\sphere^{p-1}$, where every two distinct $\bbeta$s have distance $\Delta = 1.2$. Therefore, these parameters are not orthogonal. Suppose $\B \in \real^{p\times k}$ denotes the matrix with $\bbeta_j$ as the $j$-th column. We let $\B = \Ub \bm{\Lambda}^{1/2} \Vb^{\top}$, where $\Ub \in \real^{p \times k}$ represents the basis of a random $k$-dimensional subspace in $\real^p$. Matrices $\bm{\Lambda}, \Vb \in \real^{k \times k}$ are from the eigen-decomposition of symmetric matrix $\bm{C} = \Vb \bm{\Lambda}\Vb^{\top}$, where the diagonal terms of $\bm{C}$ are $1$ and the rest entries are $1 - \Delta^2/2$. We assign equal weights $\omega_j = 1/k$ for all clusters.

\paragraph{Results.} Our first set of results, presented in Figure \ref{fig:trace}, show the convergence of estimation errors of AltMin with random and tensor initializations. Recall that estimation error is defined in \eqref{eq:error}. In random setting, AltMin starts with a set of uniformly random $k$ vectors in $\sphere^{p-1}$. We find that AltMin with random starting points has quite slow convergence, and fails to produce true $\bbeta$s with significant probability. In contrast, with the same amount of samples, tensor method provides more accurate starting points, which leads to much faster convergence of AltMin to the global optima. These results thus back up our convergence theory of AltMin (Theorem \ref{thm:altm}), and demonstrate the power of using tensor decomposition initialization.

The second set of results, presented in Figure \ref{fig:stats}, explore the statistical efficiency of the proposed algorithm---tensor initialized AltMin. For fixed $k = 3$, Figure \eqref{fig:stats_k3} reveals a linear dependence of the necessary sample size on $p$, which matches our results in Corollary \ref{cor:exact_recovery}. With fixed $p$, Figure \eqref{fig:stats_p10} indicates that $\order(k^3)$ samples could be enough in practice, which is much better than our theoretical guarantee $\order(k^{10})$. Sharpening the polynomial factor on $k$ is an interesting direction of future research.
\begin{figure*}[t] 
	\centering
	\begin{subfigure}[t]{0.32\textwidth}
		\centering
		\includegraphics[width=\textwidth]{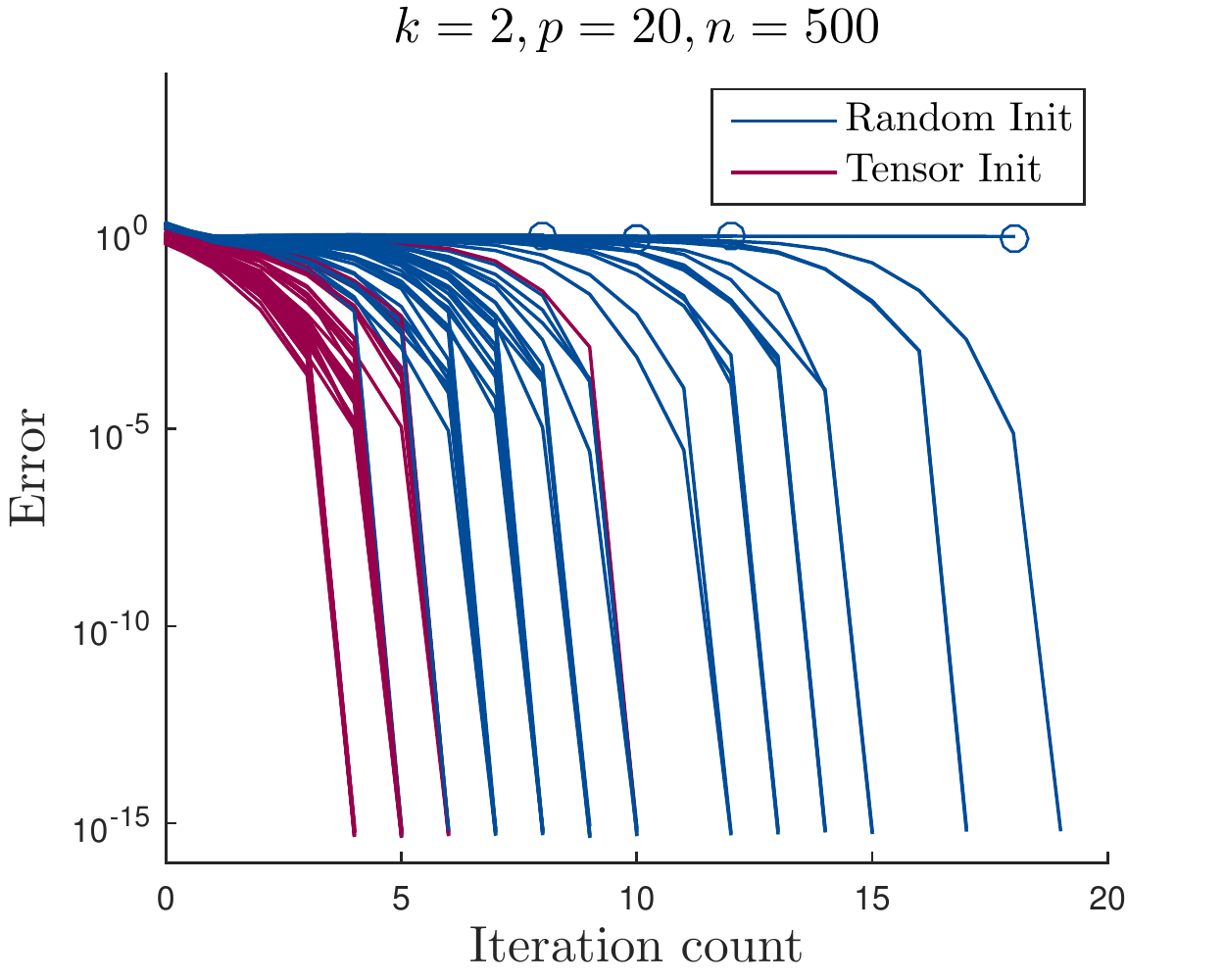}
		\caption{}
		\label{fig:trace_k2}
	\end{subfigure}%
	~ 
	\begin{subfigure}[t]{0.32\textwidth}
		\centering
		\includegraphics[width=\textwidth]{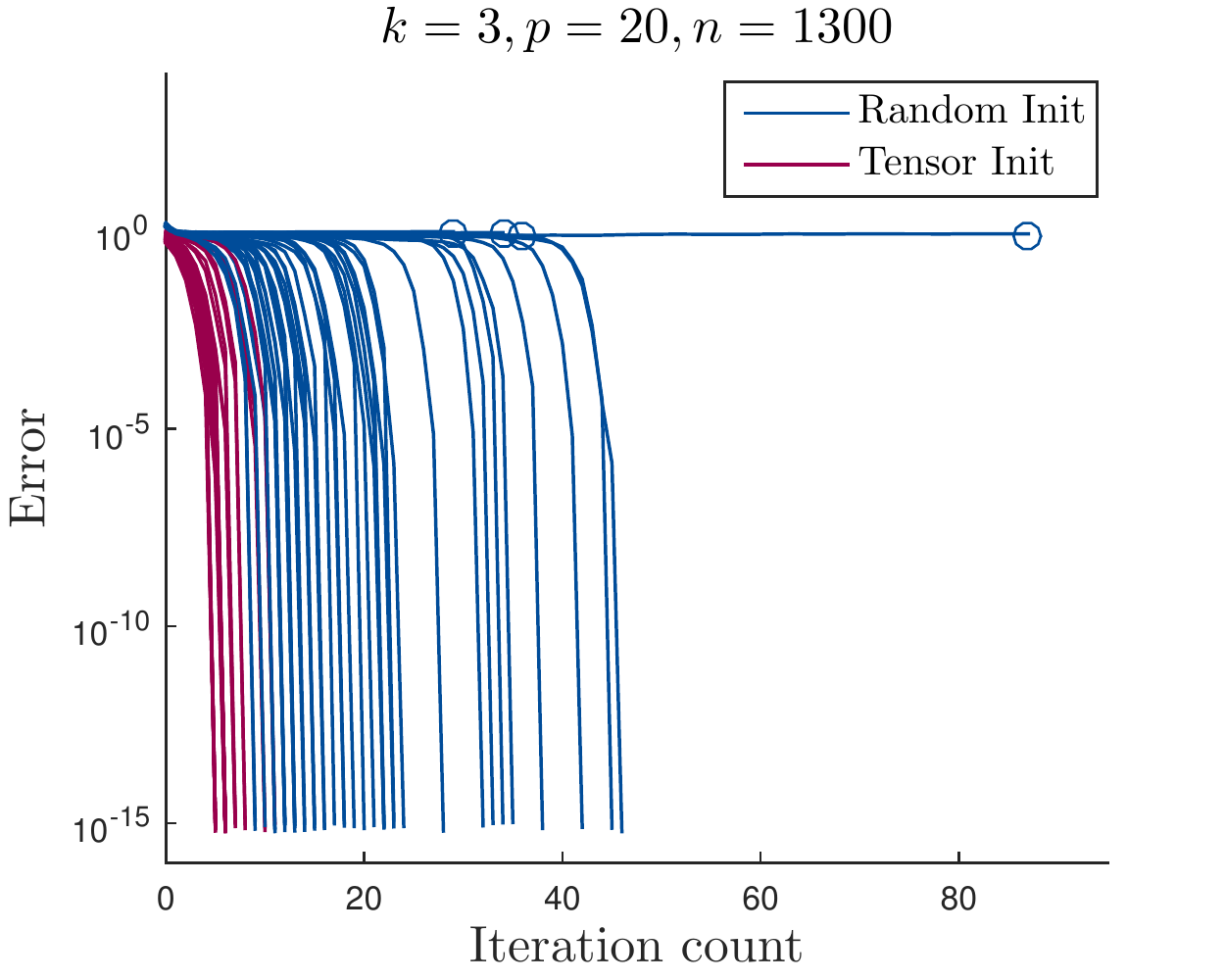}
		\caption{}
		\label{fig:trace_k3}
	\end{subfigure}
	~
	\begin{subfigure}[t]{0.32\textwidth}
		\centering
		\includegraphics[width=\textwidth]{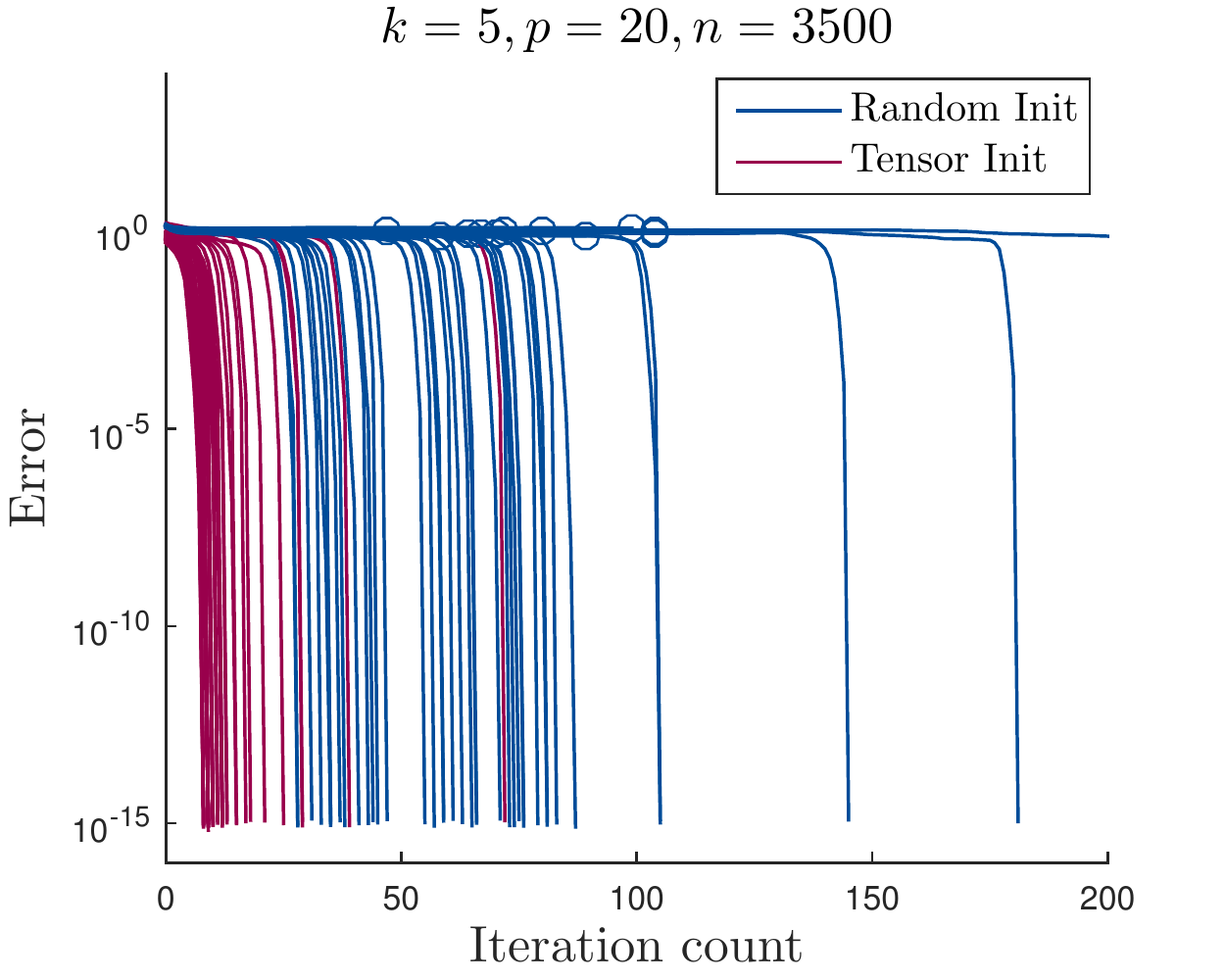}
		\caption{}
		\label{fig:trace_k5}
	\end{subfigure}
	\caption{Plot of estimation error (log scale) versus number of iterations in AltMin. Each panel shows $50$ trials for random and tensor initializations respectively. The circle markers indicate the terminations of AltMin due to local minima, i.e., the label assignments do not change in two consecutive iterations. Tensor decomposition is implemented with $L = 200k^2, N = 20\log(k)$.}
	\label{fig:trace}
\end{figure*}

\begin{figure*}[t] 
	\centering
	\begin{subfigure}[t]{0.49\textwidth}
		\centering
		\includegraphics[width=\textwidth]{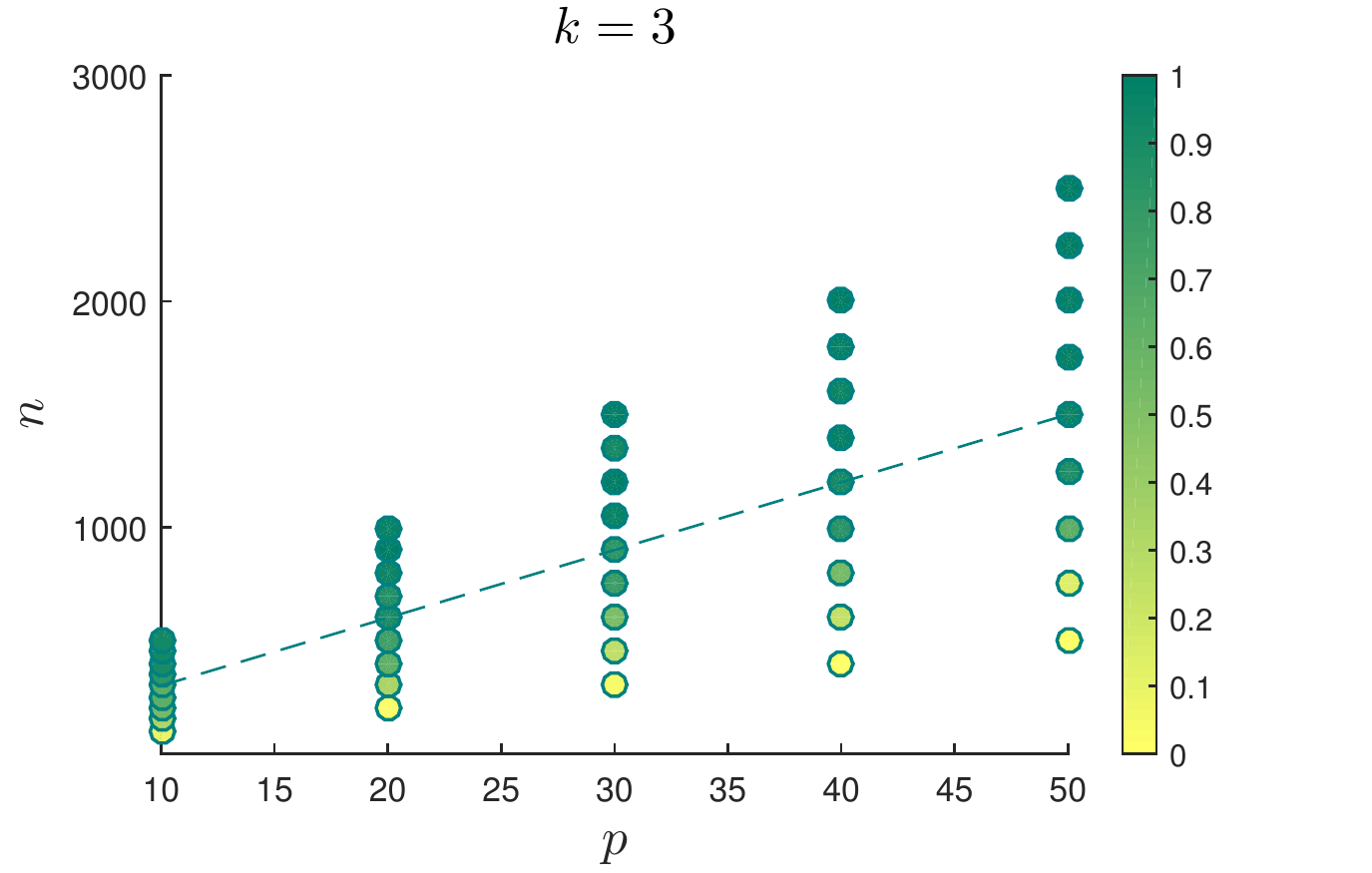}
		\caption{}
		\label{fig:stats_k3}
	\end{subfigure}%
	~ 
	\begin{subfigure}[t]{0.49\textwidth}
		\centering
		\includegraphics[width=\textwidth]{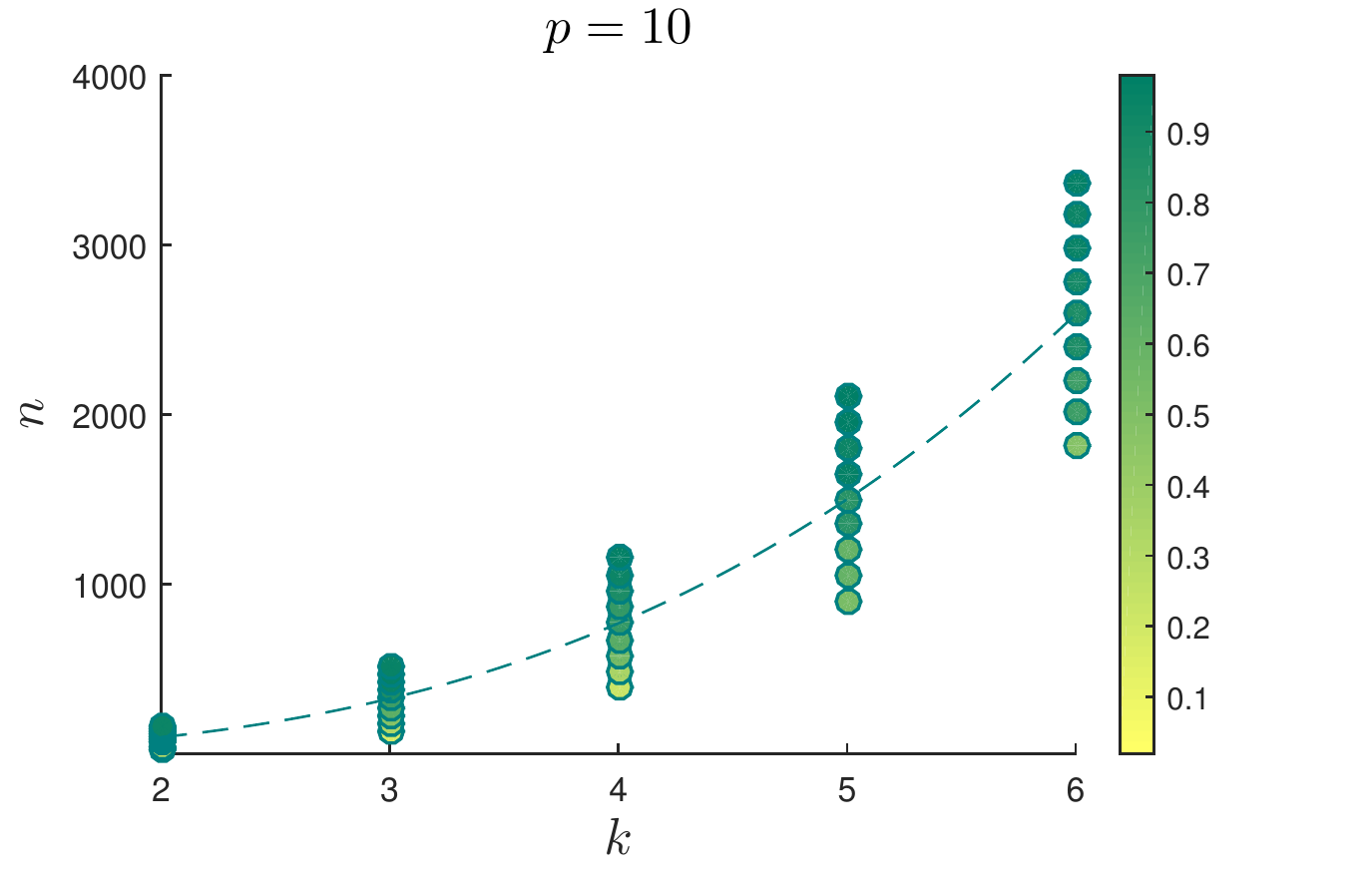}
		\caption{}
		\label{fig:stats_p10}
	\end{subfigure}
	\caption{Exact recovery probability of ``Tensor $+$ AltMin'' with varied $(n, p, k)$. The color of every dot represents the recovery probability computed from $100$ independent trials according to the colorbar on the right side. Tensor decomposition is implemented with $L = 200k^2, N = 20\log(k)$. The dashed line in $(a)$ shows function $n = 30p$. The dashed line in $(b)$ shows function $n = 12k^3$.} 
	\label{fig:stats}
\end{figure*}

\section{Proofs} \label{sec:proofs}
In this section, we provide proofs for Lemma \ref{lem:expectation} and the results presented in Section \ref{sec:theory}.
\subsection{Proof of Lemma \ref{lem:expectation}} \label{sec:proof:lem:expectation}
Recall that $z_i$ denotes the latent label associated with each sample. Suppose $X \sim \Gaussian(\bm{0},\Identity_p)$, and $Z$ has the distribution of each $z_i$. We find that
\begin{equation} \label{eq:exp_m0}
\Exs[m_0] = \sum_{j \in [k]}\Exs[\inprod{X}{\bbeta_j}^2]\cdot \Prob(Z = j) = \sum_{j \in [k]}\omega_j\twonorm{\bbeta_j}^2,
\end{equation}
\[
\Exs[\m_1] = \frac{1}{6}\sum_{j \in [k]}\Exs[\inprod{X}{\bbeta_j}^3X]\cdot \Prob(Z = j).
\]
One can check that for any $\bbeta$, $\Exs[\inprod{X}{\bbeta}^3X] = 3\twonorm{\bbeta}^2\bbeta$. Therefore,
\begin{equation} \label{eq:exp_m1}
\Exs[\m_1] = \frac{1}{2}\sum_{j \in [k]}\omega_j\twonorm{\bbeta_j}^2\bbeta_j.
\end{equation}
For $\M_2$, plugging \eqref{eq:exp_m0} into \eqref{eq:M_2_exp} yields
\[
\Exs[\M_2] = \frac{1}{2}\sum_{j \in [k]}\omega_j\Exs[\inprod{X}{\bbeta_j}^2X\otimes X] - \frac{1}{2}\sum_{j \in [k]}\omega_j\twonorm{\bbeta_j}^2\cdot\Identity_p.
\]
One can check $\Exs[\inprod{X}{\bbeta_j}^2X\otimes X] = 2\bbeta_j\bbeta_j^{\top} + \twonorm{\bbeta_j}^2$, which leads to $\Exs[\M_2] = \sum_{j \in [k]}\omega_j\bbeta_j\bbeta_j^{\top}$.

For $\M_3$, plugging \eqref{eq:exp_m1} into \eqref{eq:M_3_exp} gives
\[
\Exs[\M_3] = \frac{1}{6}\sum_{j \in [k]}\omega_j\Exs[\inprod{X}{\bbeta_j}^3X^{\otimes 3}] - \frac{1}{2}\sum_{j \in [k]}\omega_j\mathcal{T}(\twonorm{\bbeta_j}^2\bbeta_j).
\]
Then it remains to show that for any $\bbeta$,
\begin{equation} \label{eq:exp_m3}
\Exs[\inprod{X}{\bbeta}^3X^{\otimes 3}] = 6\bbeta^{\otimes 3} + 3\mathcal{T}(\twonorm{\bbeta}^2\bbeta).
\end{equation}
We directly verify the above inequality. Let $X = (X_1,\ldots,X_p)^{\top}$, $\bbeta = (\beta_1, \ldots, \beta_p)^{\top}$. For $(i,j,k) \in [p]\times [p]\times [p]$, let $L_{ijk}, R_{ijk}$ be the $(i,j,k)$-th entries of $\Exs[\inprod{X}{\bbeta}^3X^{\otimes 3}]$ and $6\bbeta^{\otimes 3} + 3\mathcal{T}(\twonorm{\bbeta}^2\bbeta)$ respectively. Due to symmetry, it is sufficient to consider the following cases.
\begin{itemize}
	\item $i \ne j \ne k \ne i$. We have $R_{ijk} = 6\beta_i\beta_j\beta_k$. Meanwhile,
	\[
	L_{ijk} = \Exs[\inprod{X}{\bbeta}^3X_iX_jX_k] = \Exs[6\beta_i\beta_j\beta_kX_i^2X_j^2X_k^2] = 6\beta_i\beta_j\beta_k.
	\]
	\item $i = j \ne k$. We have $R_{ijk} = 6\beta_i^2\beta_k + 3\twonorm{\bbeta}^2\beta_k$, and
	\begin{align*}
	L_{ijk} & = \Exs[\inprod{X}{\bbeta}^3X_i^2X_k] \\
	& = \Exs[\beta_k^3X_i^2X_k^4] + \Exs[3\beta_i^2\beta_kX_i^4X_k^2] + \sum_{t \in [p], t\ne i,k}\Exs[3\beta_k\beta_t^2X_i^2X_k^2X_t^2] \\
	& = 3\beta_k^3 + 9\beta_i^2\beta_k + 3\beta_k(\twonorm{\bbeta}^2 - \beta_i^2 - \beta_k^2) = 6\beta_i^2\beta_k + 3\twonorm{\bbeta}^2\beta_k.
	\end{align*}
	\item $i = j = k$. We have $R_{ijk} = 6\beta_i^3 + 9\twonorm{\bbeta}^2\beta_i$, and
	\begin{align*}
	L_{ijk} & = \Exs[\inprod{X}{\bbeta}^3X_i^3] = \Exs[\beta_i^3X_i^6] + \sum_{j \in [p], j \ne i}\Exs[3\beta_i\beta_j^2X_i^4X_j^2] \\
	& = 15\beta_i^3 + 9\beta_i(\twonorm{\bbeta}^2 - \beta_i^2) = 6\beta_i^2\beta_k + 3\twonorm{\bbeta}^2\beta_k.
	\end{align*}
\end{itemize}
In the above calculation, we frequently used the fact that odd-order moments of symmetric Gaussian is $0$. We finish proving \eqref{eq:exp_m3}, and thus conclude the proof.

\subsection{Proof of Lemma \ref{lem:lower_bound_sigma}} \label{sec:proof:lem:lower_bound_sigma}
Recall that $\sigma_k = \sigma_k(\overline{\M_2}) = \sigma_k(\sum_{j \in [k]}\omega_k\bbeta_k\bbeta_k^{\top})$. We always have
\[
\sigma_k \geq \minprop \sigma_k(\sum_{j \in [k]}\bbeta_k\bbeta_k^{\top}).
\]
Let $\B \in \real^{p \times k}$ be the matrix with columns $\bbeta_j$. Then we have
\[
\sigma_k(\sum_{j \in [k]}\bbeta_k\bbeta_k^{\top}) = \sigma_{min}(\B^{\top}\B).
\]
Thanks to the nearly orthonormal condition, matrix $\D = \B^{\top}\B$ has diagonal terms greater than $1 - \eta$ and the rest entries have magnitude smaller than $\gamma$. Therefore, for any $\ub \in \sphere^{k-1}$, we have
\[
\ub^{\top}\D\ub \geq (1 - \eta)\twonorm{\ub}^2 - \gamma\|\ub\|_{1}^2 \geq 1 - \eta - k\gamma,
\]
which completes the proof.

\subsection{Proof of Corollary \ref{cor:exact_recovery}} \label{sec:proof:cor:exact_recovery}
Linear convergence of AltMin requires $\varepsilon_0 \lesssim (1/k^2 \wedge \minprop)\Delta$. Plugging it as accuracy into Theorem \ref{thm:tensor} shows that it suffices to let
\[
\initsample \gtrsim \frac{(k^4 + 1/\minprop^2)(p/\sigma_k^2 + k^2 + p)\log(k/\delta)}{\minprop \sigma_k^3\Delta^2}	\log^3(\initsample) + \frac{k}{\minprop \delta}.
\]
The condition of $n$ in Lemma \ref{lem:exact recovery} is implied by the condition of $n/T$ in \eqref{eq:n/T}. Therefore, if $\{\bbeta_j^{(T-1)}\}_{j=1}^k$ produced by AltMin satisfies $\mathcal{E}(\{\bbeta_j^{(T-1)}\}) \leq \frac{\delta T}{4k\altsample}\Delta$, Lemma \ref{lem:exact recovery} implies that the $T$-th step of AltMin (using $\altsample/T$ samples) produces $\{\bbeta_j\}$ with high probability. Thanks to linear convergence, we have $\mathcal{E}(\{\bbeta_j^{(T-1)}\}) \leq (1/2)^{T-1}\Delta$. So it suffices to have	
\[
(1/2)^{T-1} \Delta \lesssim \frac{\delta T}{k\altsample}\Delta.
\]
Hence, choosing $T = C\log(k\altsample/\delta)$ with sufficiently large constant $C$ satisfies the above inequality. Plugging this choice of $T$ into \eqref{eq:n/T} shows that it suffices to let 
\[
\altsample \gtrsim \left(\frac{kp}{\minprop} + \frac{\log(k/\delta) + \log \log(k\altsample/\delta)}{\minprop^2}\right)\log(kn_{\altsample}/\delta).
\]
Condition on $p$ in Theorem \ref{thm:altm} then becomes $p \gtrsim \log(k/\delta) + \log \log(k\altsample/\delta)$, under which the above requirement of $\altsample$ can be strengthened to
\[
\altsample \gtrsim \left(\frac{kp}{\minprop} + \frac{p}{\minprop^2}\right)\log(kn_{\altsample}/\delta).
\]
	
\subsection{Proofs about Tensor Decomposition} \label{sec:proof:thm:tensor}
In this section, we prove the guarantee of tensor decomposition. Let $\overline{\M}_2 := \Exs[\M_2]$ and $\overline{\M}_3 := \Exs[\M_3]$. The proof idea of Theorem \ref{thm:tensor} is to show how approximate the empirical moments are to their expectations, and then establish the dependence between errors of approximation and estimation. Therefore, our proofs break down into the next two subsections. In Section \ref{sec:error_transfer}, given the approximation errors of moments
\begin{align}
\epsilon_2 & := \opnorm{\M_2 - \overline{\M}_2}, \label{eq:epsilon_2}\\
\epsilon_3 & := \opnorm{\M_3(\W,\W,\W) - \overline{\M}_3(\W,\W,\W)}, \label{eq:epsilon_3}
\end{align}
we follow the processes shown in Algorithm \ref{alg:tensor} to obtain an upper bound of the estimation error $\mathcal{E}(\{\bbeta_j^{(0)}\}) $ in terms of $\epsilon_2$ and $\epsilon_3$. In Section \ref{sec:concentration}, the dependence between $\epsilon_2, \epsilon_3$ and sample size is revealed by concentration analysis. We put these two parts together in Section \ref{sec:ensemble_proof_tensor} to prove Theorem \ref{thm:tensor}. 

\subsubsection{Error Transfer} \label{sec:error_transfer}
We now turn to show the how error is transfered from approximation bound to initial estimation. Recall that the robust tensor power method is run on tensor $\M_3(\W,\W,\W)$. We let $\overline{\W}$ be the whitening matrix of $\overline{\M}_2$. Then tensor  $\overline{\M}_3(\overline{\W},\overline{\W},\overline{\W})$ has orthogonal factorization
\[
\overline{\M}_3(\overline{\W},\overline{\W},\overline{\W}) = \sum_{j=1}^{k}\omega_j'\bbeta_j'\otimes\bbeta_j'\otimes\bbeta_j',
\] 
where $\omega_j' = 1/\sqrt{\omega_j}$, $\bbeta_j' = \sqrt{\omega_j}\overline{\W}^{\top}\bbeta_j$ and $\twonorm{\bbeta_j'} = 1$ for all $j \in [k]$.
We will use the next theory of robust tensor power method presented in \cite{anandkumar2014tensor}.
\begin{lemma}[Guarantee of Robust Tensor Power Method, Theorem 5.1 in \cite{anandkumar2014tensor}] \label{lem:robust_tensor} Suppose  $\T \in \real^{k\times k\times k}$ is a tensor with decomposition $\T = \sum_{j=1}^k \lambda_j \bbeta_j^{\otimes 3}$
where every $\lambda_j > 0$ and $\{\bbeta_j\}$ are orthonormal. Put $\overline{\lambda} := \max_{j \in [k]}\{\lambda_j\}, \underline{\lambda} := \min_{j \in [k]}\{\lambda_j\}$. Let $\widehat{\T} = \T + \E$ be the input of Algorithm \ref{alg:t_power}, where $\E$ is a symmetric tensor with $\opnorm{\E} \leq \epsilon$. There exist constants $C_i$ such that the following holds. Suppose $\epsilon \leq  C_1\underline{\lambda}/k$. For any $\delta \in (0,1)$, suppose $(N,L)$ in Algorithm \ref{alg:t_power} satisfy
\[
N \geq C_2\cdot\left(\log k  + \log \log (\overline{\lambda}/\epsilon)\right), ~~ L \geq C_3\cdot \text{poly}(k)\log(1/\delta),
\]
for some polynomial function $\text{poly}(\cdot)$. With probability at least $1 - \delta$, $\{(\widehat{\lambda}_j, \estbeta_j)\}$ returned by Algorithm \ref{alg:t_power} satisfy the bound
\[
\twonorm{\estbeta_{j} - \bbeta_{\pi(j)}} \leq 8\epsilon/\lambda_{\pi(j)},~~ \abs{\widehat{\lambda}_{j} - \lambda_{\pi(j)}} \leq 5\epsilon, ~~\text{for all}~~ j \in [k],
\]
where $\pi(\cdot)$ is some permutation function on $[k]$.
\end{lemma}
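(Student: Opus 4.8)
The plan is to analyze the idealized (noiseless) power iteration first, treat the noise tensor $\E$ as a perturbation of it, handle the random restarts probabilistically, and finally control the error introduced by deflation across the $k$ extractions. First I would study the map $\ub \mapsto \T(\Identity_k, \ub, \ub)$ for the noiseless orthogonal tensor $\T = \sum_j \lambda_j \bbeta_j^{\otimes 3}$. Writing $\ub = \sum_j c_j \bbeta_j$ in the orthonormal basis $\{\bbeta_j\}$ of $\real^k$, orthogonality gives $\T(\Identity_k, \ub, \ub) = \sum_j \lambda_j c_j^2 \bbeta_j$, so one iterate sends the coordinates according to $c_j \mapsto \lambda_j c_j^2$. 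Hence the ratio between the dominant coordinate $c_{j^*}$ (the one maximizing $\lambda_j c_j^2$) and any other coordinate is squared at each step, i.e.\ the iteration converges \emph{quadratically} to the fixed point $\bbeta_{j^*}$ once the starting point lies in its basin. Made quantitative, after $N$ iterations the residual $\sum_{i \neq j^*}(c_i/c_{j^*})^2$ is doubly-exponentially small in $N$; this is exactly why $N = \order(\log k + \log\log(\overline{\lambda}/\epsilon))$ suffices, the $\log k$ absorbing the initial separation and the $\log\log(\overline{\lambda}/\epsilon)$ coming from quadratic convergence down to accuracy $\order(\epsilon/\overline{\lambda})$.

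Next I would incorporate the noise. With $\widehat{\T} = \T + \E$ and $\opnorm{\E} \le \epsilon$, each power step is perturbed by a vector of norm at most $\opnorm{\E} = \epsilon$, since for unit $\ub$ one has $\twonorm{\E(\Identity_k,\ub,\ub)} \le \opnorm{\E}$. The invariant to maintain is that, provided the dominant coordinate $c_{j^*}$ stays a constant factor above the others and $\epsilon \lesssim \underline{\lambda}/k$, the perturbation cannot overturn the leading coordinate, so the quadratic contraction persists down to an error floor of order $\epsilon/\lambda_{j^*}$. This yields the single-run bounds $\twonorm{\estbeta_j - \bbeta_{\pi(j)}} \lesssim \epsilon/\lambda_{\pi(j)}$ and $|\widehat{\lambda}_j - \lambda_{\pi(j)}| \lesssim \epsilon$, the explicit constants $8$ and $5$ being recovered by tracking the contraction carefully. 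To ensure some restart lands in a good basin, I would use that a uniformly random $\ub \in \sphere^{k-1}$ has, with probability at least $\mathrm{poly}(1/k)$, one coordinate that is a constant factor larger than the rest—enough to place it in the basin of the corresponding $\bbeta_j$. Taking $L = \order(\mathrm{poly}(k)\log(1/\delta))$ restarts and selecting the best via the $\arg\max$ in line 7 of Algorithm \ref{alg:t_power} then drives the per-component failure probability below $\delta/k$.

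Finally I would handle \emph{deflation} by induction on the extracted components. After subtracting $\widehat{\lambda}_1 \estbeta_1^{\otimes 3}$, the residual tensor is again of the form (orthogonal tensor on the remaining $k-1$ directions) $+$ (error tensor), so the single-run analysis can be reapplied—provided the error operator norm stays $\order(\epsilon)$ rather than compounding. This is where the main difficulty lies: the subtracted rank-one term $\widehat{\lambda}_1 \estbeta_1^{\otimes 3}$ differs from the true term $\lambda_1 \bbeta_1^{\otimes 3}$ by an amount governed by the previous round's estimation error, and one must show these deflation residuals do not accumulate catastrophically over the $k$ rounds, in particular that the total error stays inside the regime $\opnorm{\E} \lesssim \underline{\lambda}/k$ needed for the contraction argument to keep applying. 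Establishing this stability of deflation—and not the per-run convergence, which is comparatively clean—is the heart of the argument. A union bound over the $k$ components and the $L$ restarts then assembles the per-component guarantees into the overall $1-\delta$ statement.
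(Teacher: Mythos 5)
First, the point of comparison: the paper never proves this lemma. It is imported verbatim---constants $8$ and $5$ included---as Theorem 5.1 of \cite{anandkumar2014tensor}, and the paper's entire ``proof'' is that citation. So the only question is whether your sketch would stand on its own as a proof of the cited result, and it would not, even though its architecture (noiseless quadratic convergence, an $\epsilon/\lambda$ error floor under perturbation, $\mathrm{poly}(k)$ random restarts, deflation by induction) correctly mirrors the original.

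Three concrete problems. (i) Your basin criterion is wrong: writing $\ub = \sum_j c_j \bbeta_j$, the quantity that squares under one power step is the ratio $(\lambda_i c_i)/(\lambda_j c_j)$, so the noiseless iteration converges to the $\bbeta_j$ maximizing $\lambda_j \abs{c_j}$, not $\lambda_j c_j^2$; these argmaxes differ in general (take $\lambda = (1,2)$ and $\abs{c} = (0.9, 0.5)$ up to normalization), and only the former is invariant under the dynamics, so an induction built on your criterion does not close. (ii) The claim that dominance ``persists'' under noise $\epsilon \lesssim \underline{\lambda}/k$ down to an $\order(\epsilon/\lambda)$ floor is precisely the technical core of the single-run analysis in \cite{anandkumar2014tensor}: at a random start the leading coordinate is only of order $1/\sqrt{k}$, so the signal term $\lambda_j c_j^2 \sim \lambda/k$ competes with the per-step perturbation $\epsilon$ on nearly equal footing, and showing the iterates never leave the basin requires a careful induction with normalization, not merely the bound $\twonorm{\E(\Identity_k,\ub,\ub)} \leq \epsilon$. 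You assert this invariant; you do not establish it. (iii) You explicitly call deflation stability ``the heart of the argument'' and then leave it unproven. This gap is real: a triangle inequality only shows the $k$ deflation residuals $\lambda_j \bbeta_j^{\otimes 3} - \widehat{\lambda}_j \estbeta_j^{\otimes 3}$, each of operator norm $\order(\epsilon)$, sum to $\order(k\epsilon)$, which exits the regime $\epsilon \lesssim \underline{\lambda}/k$ your contraction argument needs; the original proof escapes this by exploiting the structure of each residual (nearly rank-one, nearly aligned with an already-extracted direction, hence nearly harmless in the directions still to be recovered). Naming the hard steps is not the same as carrying them out: what you have is a sketch of the right shape whose load-bearing parts are missing---which is presumably why the paper cites the theorem rather than re-proving it.
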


Without loss of generality, we set the permutation $\pi(\cdot)$ in the above result to be identity. Lemma \ref{lem:robust_tensor} implies that if 
\[
\epsilon := \opnorm{\overline{\M}_3(\overline{\W},\overline{\W},\overline{\W}) - \M_3(\W,\W,\W)} \lesssim \frac{1}{k},
\]
then with high probability, $\{(\widetilde{\omega}_j, \widetilde{\bbeta}_j)\}_{j=1}^k$ produced in the line 6 of Algorithm \ref{alg:tensor} satisfy
\[
\twonorm{\widetilde{\bbeta}_j - \bbeta_j'} \leq 8\epsilon/\omega_j' = 8\epsilon\sqrt{\omega_j},~~ \abs{\widetilde{\omega}_j - \omega_j'} \leq 5\epsilon.
\]
Then we have
\begin{align*}
\twonorm{\bbeta_j^{(0)} - \bbeta_j} & = \twonorm{\widetilde{\omega}_j(\W^{\top})^{\dagger}\widetilde{\bbeta}_j - \omega_j'(\overline{\W}^{\top})^{\dagger}\bbeta_j'} \\
& \leq \twonorm{\widetilde{\omega}_j(\W^{\top})^{\dagger}\widetilde{\bbeta}_j - \widetilde{\omega}_j(\overline{\W}^{\top})^{\dagger}\bbeta_j'} +\twonorm{ \widetilde{\omega}_j(\overline{\W}^{\top})^{\dagger}\bbeta_j' - \omega_j'(\overline{\W}^{\top})^{\dagger}\bbeta_j'} \\
& \leq \twonorm{\widetilde{\omega}_j(\W^{\top})^{\dagger}\widetilde{\bbeta}_j - \widetilde{\omega}_j(\overline{\W}^{\top})^{\dagger}\bbeta_j'} + 5\epsilon \opnorm{\overline{\W}^{\dagger}} \\
& \leq \twonorm{\widetilde{\omega}_j(\W^{\top})^{\dagger}\widetilde{\bbeta}_j - \widetilde{\omega}_j(\W^{\top})^{\dagger}\bbeta_j'} + \twonorm{\widetilde{\omega}_j(\W^{\top})^{\dagger}\bbeta_j' - \widetilde{\omega}_j(\overline{\W}^{\top})^{\dagger}\bbeta_j'} + 5\epsilon \opnorm{\overline{\W}^{\dagger}} \\
& \leq \twonorm{\widetilde{\omega}_j(\W^{\top})^{\dagger}\widetilde{\bbeta}_j - \widetilde{\omega}_j(\W^{\top})^{\dagger}\bbeta_j'} + \opnorm{\W^{\dagger} - \overline{\W}^{\dagger}} + 5\epsilon \opnorm{\overline{\W}^{\dagger}} \\
& \leq \opnorm{\W^{\dagger}}\cdot\twonorm{\widetilde{\bbeta}_j - \bbeta_j'} + \opnorm{\W^{\dagger} - \overline{\W}^{\dagger}} + 5\epsilon \opnorm{\overline{\W}^{\dagger}} \\
& \leq  8\epsilon  \sqrt{\omega_j}  \opnorm{\W^{\dagger}} + \opnorm{\W^{\dagger} - \overline{\W}^{\dagger}} + 5\epsilon \opnorm{\overline{\W}^{\dagger}}.
\end{align*}

Recall that $\sigma_k = \sigma_k(\overline{\M}_2)$. Put $\sigma_1 = \sigma_1(\overline{\M}_2)$. Let $\M_2'$ be the best rank $k$ approximation of $\M_2$. We have 
\[
\opnorm{\M_2' - \overline{\M}_2} \leq \epsilon_2 + \sigma_{k+1}(\M_2) \leq 2\epsilon_2,
\]
where the last step follows from Weyl's theorem. Using the properties of whitening in Lemma \ref{lem:whitening} by replacing $\A, \widehat{\A}$ with $\overline{\M}_2, \M_2'$, when $\epsilon_2/\sigma_k \leq 1/6$, we have
\[
 \opnorm{\W^{\dagger}} \leq 2\opnorm{\overline{\W}^{\dagger}} = 2\sqrt{\sigma_1}, 
\]
\[
\opnorm{\W^{\dagger} - \overline{\W}^{\dagger}} \leq 4\epsilon_2\opnorm{\overline{\W}^{\dagger}}/\sigma_k = 4\epsilon_2\sqrt{\sigma_1}/\sigma_k.
\]
We thus obtain
\begin{equation} \label{eq:err_tmp}
\twonorm{\bbeta_j^{(0)} - \bbeta_j} \leq 21\sqrt{\sigma_1}\epsilon+ \frac{4\sqrt{\sigma_1}\epsilon_2}{\sigma_k}.
\end{equation}

It remains to relate $\epsilon$ to $\epsilon_2$ and $\epsilon_3$. We apply a series of triangle inequalities as follows.
\begin{align}
\epsilon & = \opnorm{\overline{\M}_3(\overline{\W},\overline{\W},\overline{\W}) - \M_3(\W,\W,\W)} \notag\\
& = \opnorm{\overline{\M}_3(\overline{\W},\overline{\W},\overline{\W}) - \overline{\M}_3(\overline{\W},\overline{\W},\W) + \overline{\M}_3(\overline{\W},\overline{\W},\W) - \M_3(\W,\W,\W)} \notag\\
& \leq \opnorm{\overline{\M}_3(\overline{\W},\overline{\W},\W) - \overline{\M}_3(\overline{\W},\W,\W) + \overline{\M}_3(\overline{\W},\W,\W) - \M_3(\W,\W,\W)} \notag \\
& ~~~~ + \opnorm{\overline{\M}_3(\overline{\W},\overline{\W},\overline{\W} - \W)} \notag\\
& \leq \opnorm{\overline{\M}_3} \opnorm{\overline{\W}}^2\opnorm{\overline{\W} - \W} + \opnorm{\overline{\M}_3(\overline{\W},\overline{\W},\W) - \overline{\M}_3(\overline{\W},\W,\W)} \notag\\
& ~~~~ + \opnorm{\overline{\M}_3(\overline{\W},\W,\W) - \M_3(\W,\W,\W)} \notag\\
& \leq \opnorm{\overline{\M}_3} \opnorm{\overline{\W}}^2\opnorm{\overline{\W} - \W} + \opnorm{\overline{\M}_3} \opnorm{\overline{\W}}\opnorm{\W}\opnorm{\overline{\W} - \W} \notag\\
& ~~~~ + \opnorm{\overline{\M}_3} \opnorm{\W}^2\opnorm{\overline{\W} - \W} + \epsilon_3. \label{eq:M3_tri}
\end{align}
Applying Lemma \ref{lem:whitening} again, we have
\begin{equation}\label{eq:op_whiten}
\opnorm{\W} \leq 2\opnorm{\overline{\W}} = 2/\sqrt{\sigma_k}, ~~\opnorm{\overline{\W} - \W} \leq 4\epsilon_2/\sqrt{\sigma_k}^3. 
\end{equation}
Plugging it back into the last line of \eqref{eq:M3_tri} yields
\begin{equation} \label{eq:M_3a}
\opnorm{\overline{\M}_3(\overline{\W},\overline{\W},\overline{\W}) - \M_3(\W,\W,\W)} \leq \frac{28\epsilon_2}{\sqrt{\sigma_k}^5}\opnorm{\overline{\M}_3} + \epsilon_3.
\end{equation}
We thus obtain the following error bound by putting \eqref{eq:err_tmp} and \eqref{eq:M_3a} together:
\begin{equation} \label{eq:error_transfer}
\twonorm{\bbeta_j^{(0)} - \bbeta_j} \lesssim \frac{\sqrt{\sigma_1}\epsilon_2}{\sigma_k} + \sqrt{\sigma_1}\epsilon_3 + \frac{\sqrt{\sigma_1}\opnorm{\overline{\M}_3}\epsilon_2}{\sqrt{\sigma_k}^5}, ~\text{for all}~ j \in [k].
\end{equation}

Recall that, in order to obtain \eqref{eq:err_tmp}, we have to make sure $\epsilon \lesssim 1/k$ as required in Lemma \ref{lem:robust_tensor}. Then inequality \eqref{eq:M_3a} indicates that it's sufficient to require 
\begin{equation} \label{eq:require_ep_23}
\epsilon_2 \lesssim \frac{\sqrt{\sigma_k}^5}{k\opnorm{\overline{\M}_3}}~~\text{and}~~\epsilon_3 \lesssim \frac{1}{k},
\end{equation} 
which will be used in the concentration analysis.

\subsubsection{Concentration Analysis} \label{sec:concentration}
Now we turn to the analysis of the concentration of empirical moments, and we derive upper bounds on $\epsilon_2$ and $\epsilon_3$. Note that $\M_3$ involves Gaussian's high-order moments (up to $6$th moment). In order to deal with the heavy tail, we will leverage a {\em truncation argument}, where we introduce truncated response $y_i'$ as 
\begin{equation} \label{eq:truncation}
y_i' = \begin{cases} y_i, & ~\text{if}~ |y_i| \leq M, \\ \sign(y_i)\cdot M, & \text{otherwise} \end{cases},
\end{equation}
where $M > 0$ is some threshold chosen in our analysis. When $M$ is sufficiently large, we have $y_i = y_i'$ for all $i \in [n]$ with high probability, which means the tail bounds about $\{(y_i',\x_i)\}$ still apply to original samples $\{(y_i,\x_i)\}$. The advance of analyzing concentration using $(y_i',\x_i)$ is that $y_i'\cdot\x_i$ is sub-Gaussian random vector thanks to the boundedness of $y_i'$. One should note that truncating $y_i$ might change the expectation of moments slightly. Therefore, a tedious but important part of our analysis is to show that the expectation deviation from truncation is much smaller compared to the desired tail bound. In detail, we have the next result proved using the truncation idea. See Section \ref{sec:proof:lem:concentration} for the complete proof.
\begin{lemma}[Concentration of Empirical Moments of Single Model] \label{lem:concentration}
	Suppose $n$ samples $\x_1, \ldots, \x_n$ are generated from $\Gaussian(\bm{0}, \Identity_p)$ and $y_i = \inprod{\x_i}{\bbeta}$ for some fixed $\bbeta \in \sphere^{p-1}$. Let
	\[
	\m_1 = \frac{1}{n}\sum_{i \in [n]}y_i^3\x_i,~ \M_2 = \frac{1}{n}\sum_{i \in [n]}y_i^2\x_i^{\otimes 2},~ \M_3 = \frac{1}{n}\sum_{i \in [n]}y_i^3\x_i^{\otimes 3}.
	\]
	Moreover, let $\overline{m}_0 = \Exs[m_0], \overline{\m}_1 = \Exs[\m_1], \overline{\M}_2 = \Exs[\M_2], \overline{\M}_3 = \Exs[\M_3]$. There exist constants $C_i$ such that the following holds. Pick any $\delta \in (0,1)$ and any fixed matrix $\Sb \in \real^{p\times s}$ with $s \leq p$.
	\begin{enumerate}
		\item If $n \geq C_1/\delta$, with probability at least $1 - \delta$, we have
		\begin{equation}\label{eq:concentration_1}
		\twonorm{\Sb^{\top}(\m_1 - \overline{\m}_1)} \leq C_2\opnorm{\Sb}\frac{\log^{3/2} n}{\sqrt{n}}\max\left\{\sqrt{\log\left(\frac{2}{\delta}\right)}, \sqrt{s}\right\}.
		\end{equation}
		
		\item If $n \geq C_3 \max\{1/\delta, s\}$, with probability $1 - \delta$, we have
		\begin{equation} \label{eq:concentration_2}
		\opnorm{\Sb^{\top}\left(\M_2 - \overline{\M}_2\right)\Sb} ~\leq~  C_4 \opnorm{\Sb}^2\frac{\log n}{\sqrt{n}}\max\left\{ \sqrt{\log\left(\frac{2}{\delta}\right)}, \sqrt{s}\right\}.
		\end{equation}
		
		\item If $n \geq C_5 \max\{ s\log\left(\frac{2}{\delta}\right), 1/\delta\}$, with probability at least $1 - \delta$, we have
		\begin{equation}\label{eq:concentration_3}
		\opnorm{\left(\M_3 - \overline{\M}_3\right)(\Sb,\Sb,\Sb)} ~\leq~ C_6 \opnorm{\Sb}^3\frac{s\log ^{3/2}n}{\sqrt{n}}\sqrt{\log\left(\frac{2}{\delta}\right)}.
		\end{equation}
	\end{enumerate}
\end{lemma}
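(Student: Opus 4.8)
The plan is to reduce all three bounds to concentration statements for \emph{bounded} (truncated) versions of the moments, and then to manage three technical layers: the truncation event, the bias truncation introduces into the expectations, and the tail concentration of the truncated sums. Since $\twonorm{\bbeta}=1$, each response $y_i=\inprod{\x_i}{\bbeta}$ is a standard scalar Gaussian, so the only source of heavy tails is the high power $y_i^2$ or $y_i^3$ multiplying $\x_i$. Fixing a threshold $M\asymp\sqrt{\log n}$ and defining $y_i'$ as in \eqref{eq:truncation}, a union bound over the $n$ Gaussian tail events gives $\Prob(\exists i:\abs{y_i}>M)\le 2n\exp(-M^2/2)\le\delta/3$ whenever $n\gtrsim 1/\delta$, which is exactly the role of the $n\gtrsim 1/\delta$ hypothesis in each part. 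On this event the empirical moments coincide with their truncated counterparts, so it suffices to control the latter and then add back the bias.

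First I would dispose of the truncation bias. Writing $\overline{\m}_1',\overline{\M}_2',\overline{\M}_3'$ for the expectations of the truncated moments, their gap from $\overline{\m}_1,\overline{\M}_2,\overline{\M}_3$ is an integral over the Gaussian tail $\{\abs{y}>M\}$ of terms of the shape $(y^3-(y')^3)\,\x^{\otimes r}$. Decomposing $\x=y\bbeta+\x^{\perp}$ with $\x^{\perp}\perp y$ annihilates the components along $\x^{\perp}$ (odd, mean zero), and the surviving pieces are bounded by Gaussian tail moments $\Exs[\abs{y}^{c}\ind(\abs{y}>M)]\lesssim M^{c-1}\exp(-M^2/2)$. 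With $M\asymp\sqrt{\log n}$ this is $\order(\mathrm{poly}(\log n)/n)$, negligible against the target $1/\sqrt n$ rate. This is the tedious-but-important bookkeeping.

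The core is concentration of the truncated sums, which I would carry out by a covering argument on $\sphere^{s-1}$. For a fixed unit $\ub$, the scalar $\inprod{\x_i}{\Sb\ub}$ is Gaussian with variance at most $\opnorm{\Sb}^2$; combined with $\abs{y_i'}\le M$ this makes each summand of $\Sb^{\top}\m_1'$, of $\ub^{\top}\Sb^{\top}\M_2'\Sb\ub$, and of $\M_3'(\Sb\ub,\Sb\ub,\Sb\ub)$ a bounded multiple ($M$, $M^2$, $M^3$) of a fixed power of a Gaussian, hence sub-Gaussian, sub-exponential, and sub-Weibull($2/3$) with Orlicz parameters $\lesssim M\opnorm{\Sb}$, $M^2\opnorm{\Sb}^2$, $M^3\opnorm{\Sb}^3$ respectively. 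Applying the matching scalar Bernstein/Orlicz tail bound for each fixed direction, then taking a $1/4$-net of $\sphere^{s-1}$ (cardinality $9^{s}$) and passing to the supremum through the standard net-to-operator-norm comparison, replaces $\log(1/\delta)$ by $s+\log(1/\delta)$. For $\m_1$ and $\M_2$ the leading sub-Gaussian/sub-exponential term then produces the $\max\{\sqrt{\log(2/\delta)},\sqrt s\}$ factor with the truncation powers $M=\log^{1/2}n$ and $M^2=\log n$, while the heavy-tail remainder is lower order once $n\gtrsim\max\{1/\delta,s\}$.

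The main obstacle is the tensor bound \eqref{eq:concentration_3}. There the per-direction variable $(y_i')^3\inprod{\x_i}{\Sb\ub}^3$ has only a sub-Weibull($2/3$) tail (a sixth-order Gaussian product before truncation), so its Bernstein inequality carries a genuinely heavier $(\,\cdot\,)^{3/2}$ deviation term; after the net substitution $\log(1/\delta)\to s+\log(1/\delta)$ both the $\sqrt{s+\log(1/\delta)}/\sqrt n$ and the $(s+\log(1/\delta))^{3/2}/n$ contributions must be folded into the stated $\frac{s\log^{3/2}n}{\sqrt n}\sqrt{\log(2/\delta)}$ form. This is precisely where the requirement $n\ge C_5\,s\log(2/\delta)$ enters: it forces the heavy-tail term below the leading one, at the cost of the slightly loose factor $s$ rather than $\sqrt s$. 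Checking that the net-to-operator-norm comparison is valid for a third-order symmetric tensor, and that the truncation-bias integral for $\M_3$ (reaching the fifth Gaussian moment in the tail) stays under this rate, are the two places that demand the most care; everything else collapses to standard scalar tail estimates.
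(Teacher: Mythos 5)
Your proposal follows essentially the same route as the paper's proof: truncate at level $M\asymp\sqrt{\log n}$ (the paper implements truncation by conditioning and controls the resulting bias with Lemma~\ref{lem:cond_mean_dev}, rather than by clipping, but the bookkeeping is the same), then prove concentration of the truncated sums by per-direction scalar tail bounds plus an $\epsilon$-net on the sphere, with the symmetric-tensor net comparison (Lemma~\ref{lem:general_opt_norm}) and a heavy-tailed scalar bound for cubes (Lemma~\ref{lem:sum_cubes_subGaussian}, the paper's substitute for your sub-Weibull$(2/3)$ Bernstein inequality) handling the third-moment case. The hypotheses $n\gtrsim 1/\delta$ and $n\gtrsim s\log(2/\delta)$ enter exactly where you say they do.

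One slip needs correcting. For $\m_1$ the per-direction summand is $(y_i')^3\inprod{\x_i}{\Sb\ub}$, a factor bounded by $M^3$ (not $M$) times a Gaussian, so its sub-Gaussian Orlicz norm is $\lesssim M^3\opnorm{\Sb}$, not $M\opnorm{\Sb}$. Your assignment of truncation powers $(M,M^2,M^3)$ to $(\m_1,\M_2,\M_3)$ is inconsistent with your own grouping ``bounded multiple of a fixed power of a Gaussian'': the correct pattern is $M^3\times\text{Gaussian}^1$ for $\m_1$, $M^2\times\text{Gaussian}^2$ for $\M_2$, and $M^3\times\text{Gaussian}^3$ for $\M_3$ (the last two being the square and the cube of the single sub-Gaussian variable $y_i'\inprod{\x_i}{\Sb\ub}$, which is where the parameter $M\opnorm{\Sb}$ actually lives). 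Consequently part 1 comes out at rate $\log^{3/2}n/\sqrt{n}$ --- exactly as stated in \eqref{eq:concentration_1} --- and not at the stronger $\log^{1/2}n/\sqrt{n}$ rate your write-up implicitly claims; the $y^3$ factor cannot be absorbed at cost $M$ by this truncation argument. With that correction, all three parts go through as you describe and coincide with the paper's proof.
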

This result provides concentration bounds of the moments constructed from single linear model. In the case of mixture samples $\{(y_i, \x_i)\}_{i=1}^{n}$, we can split the set into $k$ sets $\{(y_i^{(j)}, \x_i^{(j)})\}_{i=1}^{n_j}, j = 1,\ldots,k$, where the $j$-th set corresponds to linear model $\bbeta_j$. Therefore, for the moments given in \eqref{eq:m_01}-\eqref{eq:m_3}, we have
\begin{align*}
m_0 & = \sum_{j \in [k]} \widehat{\omega}_j m_0^{(j)},~~ \m_1 = \frac{1}{6}\sum_{j \in [k]} \widehat{\omega}_j \m_1^{(j)}, \\
\M_2 & = \frac{1}{2}\sum_{j \in [k]} \widehat{\omega}_j \M_2^{(j)} - \frac{1}{2}m_0\cdot \Identity_p,~~ \M_3 = \frac{1}{6}\sum_{j \in [k]} \widehat{\omega}_j \M_3^{(j)} - \mathcal{T}(\m_1),
\end{align*}
where $\widehat{\omega}_j$ denotes the empirical proportion of each model, and we let $m_0^{(j)} := \frac{1}{n_j}\sum_{i \in [n_j]}y_i^{(j)2}$, $\m_1^{(j)} := \frac{1}{n_j}\sum_{i \in [n_j]}y_i^{(j)2}\x_i^{(j)}$, $\M_2^{(j)} := \frac{1}{n_j}\sum_{i \in [n_j]}y_i^{(j)2}\x_i^{(j)\otimes 2}$, $\M_3^{(j)} := \frac{1}{n_j}\sum_{j \in [n_j]}y_i^{(j)3}\x_i^{(j)\otimes 3}$.

Next, we will derive concentration bounds for $m_0, \m_1, \M_2, \M_3$ respectively. To ease notation, for every moment, we use $n$ to denote the number of samples for computing it, while they might be computed from different sets of samples in Algorithm \ref{alg:tensor}.
\vskip .1in 
\paragraph{Bound of $|m_0 - \overline{m}_0|$.} We find that
\begin{align}
\epsilon_0 := |m_0 - \overline{m}_0| & \leq \sum_{j \in [k]}\widehat{\omega}_j\left|m_0^{(j)} - \Exs[m_0^{(j)}]\right| + \sum_{j \in [k]}|\widehat{\omega}_j - \omega_j|\cdot\Exs[m_0^{(j)}] \notag\\
& \leq \sup_{j \in [k]} \abs{m_0^{(j)} - \Exs[m_0^{(j)}]} + \sum_{j \in [k]}|\widehat{\omega}_j - \omega_j|\cdot\Exs[m_0^{(j)}] \notag\\
& \leq \sup_{j \in [k]} \abs{m_0^{(j)} - \Exs[m_0^{(j)}]} + \underbrace{\sum_{j \in [k]}|\widehat{\omega}_j - \omega_j|}_{\epsilon_{\omega}}, \label{eq:moment_decomp}
\end{align}
where the last step follows from the fact $\Exs[m_0^{(j)}] \leq 1$ due to the assumption $\max_{j \in [k]}\twonorm{\bbeta_j} = 1$. We first bound $\epsilon_{\omega}$. Note that $n\widehat{\omega}_j$ is a sum of $n$ Bernoulli random variables with success probability $\omega_j$. Lemma \ref{lem:sum_bern} gives that for any $t \in (0,1)$
\[
\Prob\left(|\widehat{\omega}_j - \omega_j| \geq t \omega_j\right) \leq 2e^{-\frac{3t^2}{2(t+3)}n\omega_j} \leq 2e^{-3t^2n\minprop/8}.
\]
Using union bound and setting $t = \sqrt{8\log(2k/\delta)/(3\minprop n)}$, which can be less than $1$ when $n \geq C\log(k/\delta)/\minprop$ for sufficiently large $C$, we obtain
\begin{equation} \label{eq:weight_approx}
\Prob\left(\epsilon_{\omega} \geq \sqrt{\frac{8\log(2k/\delta)}{3\minprop n}}\right) \leq 2ke^{-3t^2n\minprop/8} = \delta.
\end{equation}
Now we turn to the first term in \eqref{eq:moment_decomp}. Note that $y_i^2 = \inprod{\x_i}{\bbeta_j}^2$ is sub-Gaussian with constant Orlicz norm as $\twonorm{\bbeta_j} \leq 1$. Then by standard concentration of sub-Gaussian (e.g., \eqref{eq:concentration_scalar_2} with $p=1$), we find that there exist constants $C, C'$ such that if $n \geq C\frac{1}{\minprop}\log(k/\delta)$, we have
\[
\Prob\left(\sup_{j \in [k]} \abs{m_0^{(j)} - \Exs[m_0^{(j)}]} \geq C'\sqrt{\frac{1}{\minprop n}\log\left(\frac{k}{\delta}\right)} \right) \leq \delta.
\]
for any $\delta \in (0,1)$. Excluding the probability $\delta$, we obtain 
\begin{equation} \label{eq:epsilon_0_bound}
\epsilon_0 \lesssim \sqrt{\log(k/\delta)/(\minprop n)} + \epsilon_{\omega}.
\end{equation}
\vskip .1in
\paragraph{Bound of $\twonorm{\m_1 - \overline{\m}_1}$.} Similar to \eqref{eq:moment_decomp}, we have
\begin{align*}
\epsilon_1 := \twonorm{\m_1 - \overline{\m}_1} & \lesssim \sup_{j \in [k]} \twonorm{\m_1^{(j)} - \Exs[\m_1^{(j)}]} + \epsilon_{\omega}\cdot \sup_{j \in [k]} \twonorm{\Exs[\m_1^{(j)}]} \\
& \lesssim  \sup_{j \in [k]} \twonorm{\m_1^{(j)} - \Exs[\m_1^{(j)}]} + \epsilon_{\omega}.
\end{align*}
Using \eqref{eq:concentration_1} in Lemma \ref{lem:concentration} by setting $\Sb = \Identity_p$ and replacing $\delta$ with $\delta/k$, we have that the condition $n \gtrsim k/(\minprop\delta)$ leads to 
\[
\sup_{j \in [k]}  \twonorm{\m_1^{(j)} - \Exs[\m_1^{(j)}]} \lesssim \frac{\log^{3/2}(\minprop n)}{\sqrt{\minprop n}}\sqrt{p \log(2k/\delta)}
\]
holds with probability at least $1 - \delta$. Conditioning on this event leads to
\begin{equation} \label{eq:epsilon_1_bound}
\epsilon_1 \lesssim \log^{3/2}(\minprop n)\sqrt{p \log(2k/\delta)/(\minprop n)} + \epsilon_{\omega}.
\end{equation}

\vskip .1in
\paragraph{Bound of $\opnorm{\M_2 - \overline{\M}_2}$.} We find that
\begin{align*}
\epsilon_2 & \lesssim \opnorm{\sum_{j \in [k]}\widehat{\omega}_j \M_2^{(j)} - \sum_{j \in [k]}\omega_j \Exs[\M_2^{(j)}]} + |m_0 - \overline{m}_0| \\
& \lesssim \sup_{j \in [k]}\opnorm{\M_2^{(j)} - \Exs[\M_2^{(j)}]} + \epsilon_{\omega} + \epsilon_0,
\end{align*}
where the second step follows from similar calculation in \eqref{eq:moment_decomp} and the fact $\opnorm{\Exs[\M_2^{(j)}]} \lesssim 1$ for all $j \in [k]$. Applying  \eqref{eq:concentration_2} by choosing $\Sb = \Identity_p$ and setting $\delta$ to be $\delta/k$, we have that when $n \gtrsim \minprop^{-1}\max\{k/\delta, ~p\}$,
\[
\sup_{j \in [k]} \opnorm{\M_2^{(j)} - \Exs[\M_2^{(j)}]} \lesssim \log(\minprop n) \sqrt{p\log\left(2k/\delta\right)/(\minprop n)}
\]
holds with probability at least $1 - \delta$. Conditioning on the event, we conclude that 
\begin{equation} \label{eq:epsilon_2_bound}
\epsilon_2 \lesssim \log(\minprop n) \sqrt{p\log\left(2k/\delta\right)/(\minprop n)} + \epsilon_{\omega} + \epsilon_{0}.
\end{equation}

\vskip .1in
\paragraph{Bound of $\opnorm{\M_3(\W,\W,\W) - \overline{\M}_3(\W,\W,\W)}$.} Now we condition on the event $\epsilon_2/\sigma_k < 1/6$, which can lead to $\opnorm{\W} \leq 2/\sqrt{\sigma_k}$ as shown in \eqref{eq:op_whiten}. Let $\epsilon_{\mathcal{T}} := \opnorm{\mathcal{T}(\m_1 - \overline{\m}_1)(\W,\W,\W)}$. Recall that $\epsilon_3$ is defined in \eqref{eq:epsilon_3}. We find that
\begin{align*}
\epsilon_3 & \lesssim \opnorm{\sum_{j \in [k]}\widehat{\omega}_j\M_3^{(j)}(\W,\W,\W) - \sum_{j \in[k]}\omega_j\Exs[\M_3^{(j)}(\W,\W,\W)]} + \epsilon_{\mathcal{T}} \\
& \lesssim \sup_{j \in [k]}\opnorm{\M_3^{(j)}(\W,\W,\W) - \Exs[\M_3^{(j)}(\W,\W,\W)]} + \epsilon_{\omega} + \epsilon_{\mathcal{T}}.
\end{align*}
Again, the last step follows from similar steps in \eqref{eq:moment_decomp} and the fact that
\[
\opnorm{\Exs[\M_3^{(j)}(\W,\W,\W)]} \lesssim \opnorm{\Exs[\M_3^{(j)}]}\cdot\opnorm{\W}^3 \lesssim \opnorm{\W}^3 \lesssim 1/\sqrt{\sigma_k}^3.
\]
Note that $\W$ is computed from $\M_2$. Due to the sample splitting in Algorithm \ref{alg:tensor}, $\W$ is independent of $\M_3$. Therefore, we can apply \eqref{eq:concentration_3} by replacing $\Sb, \delta$ with $\W, \delta/k$ to obtain that
\begin{equation} \label{eq:M3_tmp}
\sup_{j \in [k]}\opnorm{\M_3^{(j)}(\W,\W,\W) - \Exs[\M_3^{(j)}(\W,\W,\W)]} \lesssim \frac{1}{\sqrt{\sigma_k}^3}k\log^{3/2}(\minprop n) \sqrt{\log(2k/\delta)/(\minprop n)}
\end{equation}
holds with probability at least $1 - \delta$ under condition $n \gtrsim k/(\minprop \delta)$. For $\mathcal{T}(\cdot)$, we have that for any $\ub \in \real^{p}$,
\begin{equation} \label{eq:T_op}
\opnorm{\mathcal{T}(\ub)} \leq 3\twonorm{\ub},
\end{equation}
which is proved at the end of this section. We have
\[
\epsilon_{\mathcal{T}} \lesssim \opnorm{\W}^3\epsilon_1 \lesssim \epsilon_1/\sqrt{\sigma_k}^3.
\]
Conditioning on \eqref{eq:M3_tmp} leads to
\begin{equation} \label{eq:epsilon_3_bound}
\epsilon_3 \lesssim \frac{1}{\sqrt{\sigma_k}^3}k\log^{3/2}(\minprop n) \sqrt{\log(2k/\delta)/(\minprop n)} + \epsilon_{\omega} + \epsilon_1/\sqrt{\sigma_k}^3.
\end{equation}
\begin{proof}[Proof of Inequality \eqref{eq:T_op}]
	For any $\vb \in \sphere^{p-1}$, we have
	\[
	\mathcal{T}(\ub)(\vb, \vb, \vb) = 3\sum_{i, j \in [p]} u_iv_iv_j^2 = 3\sum_{i \in [p]} u_iv_i\twonorm{\vb}^2 =  3\inprod{\ub}{\vb} \leq 3\twonorm{\ub}.
	\]
\end{proof}

\subsubsection{Proof of Theorem \ref{thm:tensor}} \label{sec:ensemble_proof_tensor}
With the previous analysis, we are ready to prove Theorem \ref{thm:tensor}. In the first place, we combine the ingredients in Section \ref{sec:concentration}. Recall that we split $n$ samples into two parts with size $n_1$ and $n_2$ for computing $m_0, \M_2$ and $\m_1, \M_3$ respectively. Putting \eqref{eq:weight_approx}, \eqref{eq:epsilon_0_bound}, \eqref{eq:epsilon_2_bound} together and using union bound, we have
\begin{equation} \label{eq:p_epsilon_2}
\Prob\left(\epsilon_2 \lesssim \log(\minprop n_1) \sqrt{\frac{p\log\left(12k/\delta\right)}{\minprop n_1}}\right) \geq \delta/2
\end{equation}
under condition $n_1 \gtrsim \frac{1}{\minprop}(\frac{k}{\delta}\vee p)$. Putting \eqref{eq:weight_approx}, \eqref{eq:epsilon_1_bound} and \eqref{eq:epsilon_3_bound} together leads to
\begin{equation} \label{eq:p_epsilon_3}
\Prob\left(\epsilon_3 \lesssim \frac{(k\vee\sqrt{p})\log^{3/2}(\minprop n_2)}{\sqrt{\sigma_k}^3} \sqrt{\frac{\log(12k/\delta)}{\minprop n_2}} \right) \geq \delta/2
\end{equation}
under conditions $n_2 \gtrsim k/(\minprop \delta)$ and $\epsilon_2 < \sigma_k/6$. In order to guarantee $\twonorm{\bbeta_j^{(0)} - \bbeta_j} \lesssim \varepsilon$ for all $j \in [k]$, using the error transfer inequality \eqref{eq:error_transfer} and noting that $\sigma_1 \leq 1, \opnorm{\overline{\M}_3} \leq 1$ under assumption $\max_{j \in [k]}\twonorm{\bbeta_j} = 1$, it is sufficient to require
\begin{equation} \label{eq:epsilon_23_condition}
\epsilon_2 \lesssim \sqrt{\sigma_k}^5\varepsilon,~~
\epsilon_3 \lesssim \varepsilon.
\end{equation}
The above condition on $\epsilon_2$ leads to $\epsilon_2 < \sigma_k/6$ for $\varepsilon \leq 1$. In addition, in order to let \eqref{eq:error_transfer} hold, $\epsilon_2, \epsilon_3$ have to satisfy condition \eqref{eq:require_ep_23}. This is implied by \eqref{eq:epsilon_23_condition} when $\varepsilon \lesssim 1/k$. Using the relationship between $\epsilon_2, \epsilon_3$ and $n_1, n_2$ in \eqref{eq:p_epsilon_2} and \eqref{eq:p_epsilon_3}, it is sufficient to require
\[
n_1 \gtrsim \frac{p\log(12k/\delta)\log^2(n_1)}{\minprop\sigma_k^5\varepsilon^2} \vee \frac{k}{\minprop \delta} ~~\text{and}~~ n_2 \gtrsim \frac{(k^2 \vee p)\log(12k/\delta)\log^3(n_2)}{\minprop\sigma_k^3\varepsilon^2} \vee \frac{k}{\minprop \delta},
\]
which concludes our proof.
\subsection{Proof of Alternating Minimization (Theorem \ref{thm:altm})} \label{sec:proof:thm:altm}
It is sufficient to show the linear error decay in one step. Then the error bound for each step $t$ can be obtained by induction. Without loss of generality, we focus on the first step $t = 0$. Also we assume $\pi(j) = j$ for simplicity. Let $B = n/T$ be the sample size in the first step. Let $\cA_j$ denote the index set of samples that are clustered to model $j$ in the label assignment step, namely
\[
\cA_j := \left\{i \in [B] ~\big|~ \abs{y_i - \inprod{\x_i}{\bbeta_j^{(0)}}} < \abs{y_i - \inprod{\x_i}{\bbeta_t^{(0)}}} \;\; \text{for all}\;\;t \ne j\right\}.
\]
We use $\cA_j^*$ to denote the set of samples that are truly generated from model $\bbeta_j$, namely
\[
\cA_j^* : =  \left\{i \in [B] ~\big|~ y_i = \inprod{\x_i}{\bbeta_j}\right\}.
\]
Introduce $\varepsilon_0$ as a shorthand for $\mathcal{E}(\{\bbeta_j^{(0)}\})$. According to our assumption, $\varepsilon_0 \lesssim \Delta/k^2$.

Let $\bSigma_j := \sum_{i \in \cA_j}\x_i\x_i^{\top}$ be the empirical covariance of samples in $\cA_j$. The updated estimate $\bbeta_j^{(1)}$ has the form
\[
\bbeta_j^{(1)} = \bSigma_j^{-1} \left(\sum_{i\in \cA_j}y_i\x_i\right).
\]
We thus obtain
\begin{align*}
\bbeta_j^{(1)} - \bbeta_j & = \bSigma_j^{-1} \left(\sum_{i\in \cA_j}y_i\x_i\right) - \bbeta_j = \bSigma_j^{-1} \left(\sum_{i\in \cA_j}y_i\x_i - \x_i\x_i^{\top}\bbeta_j\right) \\
& = \bSigma_j^{-1} \left(\sum_{t \in [k]} \sum_{i \in \cA_t^*\bigcap \cA_j}\x_i\x_i^{\top} (\bbeta_t - \bbeta_j\big) \right).
\end{align*}
By the Cauchy-Schwartz inequality, we obtain
\begin{align*}
\twonorm{\bbeta_j^{(1)} - \bbeta_j} & \leq \opnorm{\bSigma_j^{-1}} \cdot \twonorm{\sum_{t \in [k]} \sum_{i \in \cA_t^*\bigcap \cA_j}\x_i\x_i^{\top} (\bbeta_t - \bbeta_j\big)} \\
& \leq \underbrace{\opnorm{\bSigma_j^{-1}}}_{U_1} \cdot \underbrace{\left(\sum_{t \in [k]}\twonorm{ \sum_{i \in \cA_t^*\bigcap \cA_j}\x_i\x_i^{\top} (\bbeta_t - \bbeta_j\big) } \right)}_{U_2}
\end{align*}
Next we bound the two terms $U_1$ and $U_2$ respectively.
\vskip .1in
\paragraph{Bound of $U_1$.} First note that $\opnorm{\bSigma_j^{-1}} = 1/\sigma_{min}(\bSigma_j)$. We find that
\[
\sigma_{min}(\bSigma_j) = \sigma_{min}\left(\sum_{i \in \cA_j} \x_i\x_i^{\top}\right) \geq \sigma_{min}\left(\sum_{i \in \cA_j \cap \cA_j^*} \x_i\x_i ^{\top}\right).
\]
For $X \sim \mathcal{N}(\bm{0}, \Identity_p)$, we define event $\mathcal{E}_j$ as
\[
\mathcal{E}_j := \left\{\abs{\inprod{X}{\bbeta_j^{(0)} - \bbeta_j}} \leq \abs{\inprod{X}{ \bbeta_t^{(0)} - \bbeta_j}},~~ \text{for all} ~~ t \ne j \right\}.
\]
Accordingly, we have
\begin{equation}
\label{expectation_covariance}
\Exs[\x_i\x_i^{\top}\abs{i \in \cA_j \cap \cA_j^*] = \Exs[XX^{\top}}\mathcal{E}_j].
\end{equation}
To provide a lower bound of $\Prob(\mathcal{E}_j)$, we have 
\begin{align}
\Prob(\mathcal{E}_j)  & = 1 - \Prob(\mathcal{E}_j^c)  \geq 1 - \sum_{t \ne j} \Prob \{\langle X, \bbeta_j^{(0)} - \bbeta_j\rangle^2 \geq \langle X, \bbeta_t^{(0)} - \bbeta_j\rangle^2\} \notag\\
& \overset{(a)}{\geq} 1 - \sum_{t \ne j} \frac{\twonorm{\bbeta_j^{(0)} - \bbeta_j}}{\twonorm{\bbeta_t^{(0)} - \bbeta_j}} \overset{(b)}{\geq} 1 - (k-1)\frac{4}{7k^2} \geq 1 - \frac{4}{7k}. \label{eq:event_prob_lower}
\end{align}
Step $(b)$ holds because  since for all $t \ne j$,
\[
\frac{\twonorm{\bbeta_j^{(0)} - \bbeta_j}}{\twonorm{\bbeta_t^{(0)} - \bbeta_j}} \leq \frac{\twonorm{\bbeta_j^{(0)} - \bbeta_j}}{\twonorm{\bbeta_t - \bbeta_j} - \twonorm{\bbeta_t^{(0)} - \bbeta_t}} \leq \frac{\varepsilon_0}{\Delta - \varepsilon_0} \leq \frac{4}{7k^2},
\]
where the last step follows from condition $\varepsilon_0 \leq \Delta/(7k^2)$. Step $(a)$ in \eqref{eq:event_prob_lower} is from the next result, which is proved in Section \ref{sec:proof:lem:two_vector}.
\begin{lemma} \label{lem:two_vector}
	Let $X \sim \Gaussian(\bm{0}, \Identity_p)$. For any two fixed vectors $\ub, \vb \in \mathbb{R}^{p}$, we define 
	\[
	\mathcal{E} := \left\{\abs{\langle X, \ub\rangle} \leq \abs{\langle X, \vb\rangle}\right\}.
	\]
	We have that when $\twonorm{\ub} > \twonorm{\vb}$,
	\[
	\Prob(\mathcal{E}) 
	\leq \frac{\twonorm{\vb}}{\twonorm{\ub}}.
	\]
\end{lemma}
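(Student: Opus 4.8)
The plan is to reduce everything to the joint law of the pair $(A, B) := (\inprod{X}{\ub}, \inprod{X}{\vb})$, which is a centered bivariate Gaussian, and then to exploit a scale-invariance trick. First I would write $X$ in an orthonormal frame whose first vector points along $\ub$, so that $A = \twonorm{\ub}\, g_1$ and $B = b_1 g_1 + b_2 g_2$, where $g_1, g_2$ are i.i.d.\ standard normals, $b_1 = \inprod{\ub}{\vb}/\twonorm{\ub}$ is the component of $\vb$ along $\ub$, and $b_2 = \twonorm{\vb^\perp}$ is the norm of its orthogonal component; note $b_1^2 + b_2^2 = \twonorm{\vb}^2$. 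The event $\mathcal{E} = \{A^2 \le B^2\}$ is then homogeneous of degree two in $(g_1, g_2)$, hence depends only on the \emph{direction} of the planar Gaussian vector $(g_1, g_2)$.

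Writing $(g_1, g_2) = r(\cos\theta, \sin\theta)$, rotational invariance makes $\theta$ uniform on $[0, 2\pi)$ and independent of $r$, so that $\Prob(\mathcal{E}) = \frac{1}{2\pi}\,\mathrm{meas}\{\theta : A^2 \le B^2\}$. Introducing the phase $\phi$ with $(b_1, b_2) = \twonorm{\vb}(\cos\phi, \sin\phi)$, the defining inequality becomes $\twonorm{\ub}\,|\cos\theta| \le \twonorm{\vb}\,|\cos(\theta - \phi)|$.

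The key observation is then a one-line containment: since $|\cos(\theta - \phi)| \le 1$, any $\theta \in \mathcal{E}$ must satisfy $\twonorm{\ub}\,|\cos\theta| \le \twonorm{\vb}$, i.e.\ $|\cos\theta| \le c$ where $c := \twonorm{\vb}/\twonorm{\ub} \in (0,1)$. Thus $\mathcal{E} \subseteq \{|\cos\theta| \le c\}$, whose measure I would compute to be $4\arcsin c$ (equivalently, $g_1/g_2$ is standard Cauchy and the event reads $|g_1/g_2| \le c/\sqrt{1-c^2}$). It then remains to verify the elementary inequality $4\arcsin c \le 2\pi c$ on $[0,1]$, which follows because $f(c) := \frac{\pi}{2}c - \arcsin c$ vanishes at both endpoints and has $f'(c) = \frac{\pi}{2} - (1-c^2)^{-1/2}$ positive then negative, so $f \ge 0$. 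Combining, $\Prob(\mathcal{E}) \le \frac{1}{2\pi}\cdot 2\pi c = \twonorm{\vb}/\twonorm{\ub}$, as claimed.

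There is no single hard obstacle here; the proof is short once the right viewpoint is adopted. The only genuine insight is recognizing that homogeneity collapses the problem to the angular measure of a cone, after which the inclusion $\{A^2 \le B^2\} \subseteq \{|\cos\theta| \le c\}$ is immediate and already sharp enough for the stated bound — the discarded factor $|\cos(\theta-\phi)|$ is precisely what one would keep to compute the exact probability, but it is not needed for the inequality. The one routine verification is the scalar bound $\arcsin c \le \frac{\pi}{2}c$, dispatched by the sign analysis of $f'$ above.
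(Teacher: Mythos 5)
Your proof is correct. It shares its skeleton with the paper's argument---both reduce to the two-dimensional polar picture in which the angular variable $\theta$ is uniform and independent of the radius, and both close with Jordan's inequality (your $\arcsin c \leq \frac{\pi}{2}c$ is the same fact as the paper's $\alpha \leq \frac{\pi}{2}\sin\alpha$)---but the middle step is genuinely different. The paper computes the probability \emph{exactly}: using the factorization $\abs{\inprod{X}{\ub}} \leq \abs{\inprod{X}{\vb}} \iff \inprod{X}{\ub-\vb}\cdot\inprod{X}{\ub+\vb} \leq 0$, it identifies $\mathcal{E}$ with a double wedge of directions and gets $\Prob(\mathcal{E}) = \alpha(\ub,\vb)/\pi$, where $\alpha(\ub,\vb)$ is the angle between $\ub-\vb$ and $\ub+\vb$; it then bounds this angle via $\cos\alpha \geq (\twonorm{\ub}^2-\twonorm{\vb}^2)/(\twonorm{\ub}^2+\twonorm{\vb}^2)$ and $\frac{1}{2}\sin\alpha \leq \twonorm{\ub}\twonorm{\vb}/(\twonorm{\ub}^2+\twonorm{\vb}^2) \leq \twonorm{\vb}/\twonorm{\ub}$. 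You never touch the $\ub\pm\vb$ factorization: you discard the factor $\abs{\cos(\theta-\phi)} \leq 1$ to obtain the containment $\mathcal{E} \subseteq \{\abs{\cos\theta} \leq c\}$, whose measure $4\arcsin c$ is computed directly. Your route is somewhat more self-contained (no need to introduce the auxiliary angle or verify the wedge geometry), at the cost of giving up the exact value of $\Prob(\mathcal{E})$; the paper's route produces that exact value as a byproduct, together with the intermediate bound $c/(1+c^2)$, which is sharper than your $\frac{2}{\pi}\arcsin c$ for $c$ near $1$ (though yours is sharper for small $c$---neither dominates, and both suffice for the stated bound $\twonorm{\vb}/\twonorm{\ub}$).
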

The next result, proved in Section \ref{sec:proof:lem:trace_unchange}, establishes the spectral structure of the covariance matrix of $X~\big|~\mathcal{E}_j$.
\begin{lemma} [Conditional Spectral Structure] \label{lem:trace_unchange}
	Let $X \sim \Gaussian(\bm{0}, \Identity_p)$. For any $k$ fixed vectors $\ub_1,...,\ub_k \in \mathbb{R}^{p}$, we define event
	\[
	\mathcal{E} := \left\{\abs{\langle X, \ub_1\rangle} \leq \abs{\langle X, \ub_j\rangle}, \;\;\text{for all} \;\;j \in [k]\right\}.
	\]
	When $\Prob(\mathcal{E}) > 0$, we have 
	\[
	\sigma_{max}\left(\Exs[XX^{\top}~\big|~ \mathcal{E}]\right) \leq k.
	\]
	and
	\begin{equation} \label{eq:tmp8}
	\sigma_{min}\left(\Exs[XX^{\top}|\mathcal{E}]\right) \geq \frac{1 - k (1 - \Prob(\mathcal{E}))}{\Prob(\mathcal{E})}.
	\end{equation}
\end{lemma}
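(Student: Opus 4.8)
The plan is to exploit two structural features of $\mathcal{E}$: it depends on $X$ only through its projection onto $S := \mathrm{span}\{\ub_1,\dots,\ub_k\}$, a subspace of dimension $d \le k$, and it is invariant under scaling $X \mapsto \lambda X$ since the defining inequalities are homogeneous. First I would reduce to the subspace. Write $X = X_S + X_{S^\perp}$ for the orthogonal decomposition relative to $S$; these parts are independent Gaussians and $\mathcal{E}$ is measurable with respect to $X_S$. Because $X_{S^\perp}$ is independent of $(X_S,\mathcal{E})$ and mean zero, the cross terms in $\Exs[XX^\top\mid\mathcal{E}]$ vanish and $\Exs[X_{S^\perp}X_{S^\perp}^\top\mid\mathcal{E}]=\Exs[X_{S^\perp}X_{S^\perp}^\top]=P_{S^\perp}$, the projector onto $S^\perp$. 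Hence
\[
\Exs[XX^\top\mid\mathcal{E}]=\Exs[X_SX_S^\top\mid\mathcal{E}]+P_{S^\perp},
\]
which is block diagonal with respect to $S\oplus S^\perp$: the $S^\perp$ block is the identity (eigenvalue $1$, multiplicity $p-d$), while the $S$ block is, in an orthonormal basis of $S$, the $d\times d$ matrix $\Exs[WW^\top\mid\mathcal{E}]$ with $W\sim\Gaussian(\bm 0,\Identity_d)$. It therefore suffices to control the eigenvalues of $\Exs[WW^\top\mid\mathcal{E}]$.

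The key observation driving everything is that any cone $G$ (scale-invariant event) with $\Prob(G)>0$ leaves the radial law of a standard Gaussian untouched: since $\twonorm{W}$ and $W/\twonorm{W}$ are independent and $G$ depends only on the angular part, $\Exs[\twonorm{W}^2\mid G]=\Exs[\twonorm{W}^2]=d$, i.e.\ $\trace\Exs[WW^\top\mid G]=d$. Applying this to $G=\mathcal{E}$ and using $\Exs[WW^\top\mid\mathcal{E}]\succeq 0$ gives $\sigma_{\max}\big(\Exs[WW^\top\mid\mathcal{E}]\big)\le\trace=d\le k$; combined with the $S^\perp$-block eigenvalue $1\le k$ this yields $\sigma_{\max}(\Exs[XX^\top\mid\mathcal{E}])\le k$.

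For the lower bound I would pass to the complement. For a unit $\ab\in\real^d$,
\[
\Exs[\langle W,\ab\rangle^2\mid\mathcal{E}]=\frac{1-\Exs[\langle W,\ab\rangle^2\ind_{\mathcal{E}^c}]}{\Prob(\mathcal{E})},
\]
so it is enough to prove $\Exs[\langle W,\ab\rangle^2\ind_{\mathcal{E}^c}]\le k\,\Prob(\mathcal{E}^c)$. Partition $\mathcal{E}^c$ into the cones $G_i$ on which some index $i\ne 1$ is the (tie-broken) minimizer of $|\langle W,\ub_\cdot\rangle|$; each $G_i$ is scale-invariant, so the same trace identity gives $\sigma_{\max}(\Exs[WW^\top\mid G_i])\le d\le k$. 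Writing $\Exs[WW^\top\mid\mathcal{E}^c]=\sum_i\frac{\Prob(G_i)}{\Prob(\mathcal{E}^c)}\Exs[WW^\top\mid G_i]$ as a convex combination yields $\sigma_{\max}(\Exs[WW^\top\mid\mathcal{E}^c])\le k$, hence the desired bound on $\Exs[\langle W,\ab\rangle^2\ind_{\mathcal{E}^c}]$. This gives $\sigma_{\min}(\Exs[WW^\top\mid\mathcal{E}])\ge L:=\frac{1-k(1-\Prob(\mathcal{E}))}{\Prob(\mathcal{E})}$, and one checks $L\le 1$ because $1-k\Prob(\mathcal{E}^c)\le\Prob(\mathcal{E})$. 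Finally I lift to the full matrix: for a unit $\vb\in\real^p$ with $\twonorm{P_S\vb}^2=\alpha$, the block structure gives $\Exs[\langle X,\vb\rangle^2\mid\mathcal{E}]\ge\alpha L+(1-\alpha)\ge L$ since $L\le 1$, so $\sigma_{\min}(\Exs[XX^\top\mid\mathcal{E}])\ge L$, as claimed.

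The main obstacle is isolating and justifying the scaling identity $\trace\Exs[WW^\top\mid G]=d$ for scale-invariant $G$: once one recognizes that $\mathcal{E}$ is a cone and that conditioning on a cone preserves the radial distribution, both the constancy of the trace and the upper bound $\sigma_{\max}\le d$ drop out for free, and the lower bound reduces to the same fact applied to $\mathcal{E}^c$. The remaining work is bookkeeping: handling the measure-zero ties when partitioning $\mathcal{E}^c$ into cones, and confirming $L\le 1$ so that the $S^\perp$ block of eigenvalue $1$ never pushes the minimum below $L$.
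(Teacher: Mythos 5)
Your proof is correct and follows essentially the same route as the paper's: the scale-invariance (cone) property of $\mathcal{E}$ giving the trace identity, the untouched orthogonal complement supplying unit eigenvalues, and the total-expectation split between $\mathcal{E}$ and $\mathcal{E}^c$ to convert a $\sigma_{\max} \leq k$ bound under $\mathcal{E}^c$ into the $\sigma_{\min}$ bound under $\mathcal{E}$. The only cosmetic differences are that the paper works in the full $p$-dimensional space (so it needs no lifting step or check that $L \leq 1$) and applies the trace argument to $\mathcal{E}^c$ directly --- your partition of $\mathcal{E}^c$ into sub-cones $G_i$ is unnecessary, since the complement of a scale-invariant event is itself scale-invariant.
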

The above result suggests that
\begin{equation} \label{eq:min_eigen}
\sigma_{min}(\Exs[XX^{\top}|\mathcal{E}_j]) \geq \frac{1 - k\cdot [1 - \Prob(\mathcal{E}_j)]}{\Prob(\mathcal{E}_j)} \geq \frac{3}{7\Prob(\mathcal{E}_j)} \geq \frac{3}{7}.
\end{equation}

Next we will show $\sigma_{min}(\sum_{i \in \cA_j \cap \cA_j^*}\x_i\x_i^{\top})$ is close to its expected value $|\cA_j \cap \cA_j^*|\sigma_{min}(\Exs[XX^{\top}|\mathcal{E}_j])$. First, we prove $|\cA_j \cap \cA_j^*|$ is large enough. As $\Prob(\mathcal{E}_j) \geq 1 - 4/(7k^2) \geq 1/2$, we have $\Exs[|\cA_j\cap \cA_j^*|] \geq \Exs[\frac{1}{2}|\cA_j^*|] = \frac{1}{2}w_jB$. Therefore, $|\cA_j\cap \cA_j^*|$ is summation of $B$ independent Bernoulli random variable with success probability at least $\omega_j/2$. Then we have
\begin{equation} \label{size_bound}
\Prob\left(|\cA_j \cap \cA_j^*| \leq \frac{1}{4}\omega_jB\right) \leq \Prob\left(\bigg| |\cA_j \cap \cA_j^*| - \Exs[|\cA_j\cap \cA_j^*|]\bigg| \geq \frac{1}{4}\omega_jB\right) \leq 2e^{-C\omega_jB} \leq 2e^{-C\minprop B},
\end{equation}
where the second step follows from Lemma \ref{lem:sum_bern} and $C$ is some constant. Conditioning on the event $|\cA_j \cap \cA_j^*| \geq \omega_jB/4$, we obtain $|\cA_j \cap \cA_j^*| \gtrsim p$ when $B \gtrsim p/\minprop$. 

Note that $X$ is sub-Gaussian random vector. Part (a) of Lemma \ref{lem:sub-gaussian} shows that $X$ is still sub-Gaussian vector conditioning on $\mathcal{E}_j$. Using the conclusion $|\cA_j \cap \cA_j^*| \gtrsim p$, concentration result of sub-Gaussian in \eqref{eq:concentration_scalar_2} (setting $t = 1/7$ and $K$ to be a constant) yields that, for some constant $C$,
\begin{equation} \label{eig_bound}
\Prob \left( \opnorm{\frac{1}{|\cA_j\cap \cA_j^*|} \sum_{i \in \cA_j\cap \cA_j^*}\x_i\x_i^{\top} - \Exs[XX^{\top}|\mathcal{E}_j]} \geq \frac{1}{7} \right) \leq 2e^{-C\omega_jB} \leq 2e^{-C\minprop B}.
\end{equation}
Putting \eqref{size_bound} and \eqref{eig_bound} together and using Weyl's theorem, we have that with probability at least $1 - 4e^{-C'\minprop B}$, 
\[
\sigma_{min}\left(\sum_{i \in \cA_j\cap \cA_j^*} \x_i\x_i^{\top}\right) \geq |\cA_j\cap \cA_j^*|\cdot \left(\sigma_{min}(\Exs[XX^{\top}~\big|~\mathcal{E}_j]) - \frac{1}{7}\right) \geq \frac{1}{4}\omega_jB \cdot \frac{2}{7} = \frac{1}{14}\omega_jB,
\]
We thus obtain 
\begin{equation} \label{eq:U1}
\Prob\left(U_1 \geq 14/(w_jB)\right) \leq 4e^{-C'\minprop B}.
\end{equation}
\vskip .1in
\paragraph{Bound of $U_2$.} Recall that 
\[
U_2 = \sum_{t \ne j} \twonorm{\sum_{\cA_t^*\cap \cA_j} \x_i\x_i^{\top}(\bbeta_t - \bbeta_j)}. 
\]
We will bound every term with different $t$ separately. Note that for any vector $\x \in \real^{p}$ and positive semidefinite matrix $\A \in \real^{p\times p}$, we have
\[
\twonorm{\A\x}^2 \leq \sigma_{max}(\A)\x^{\top}\A\x.
\]
Introduce $\Q_{t} = \sum_{\cA_t^*\cap \cA_j} \x_i\x_i^{\top}$. We find
\begin{align*}
\twonorm{\sum_{\cA_t^*\cap \cA_j} \x_i\x_i^{\top}(\bbeta_t - \bbeta_j)}^2 & \leq \sigma_{max}(\Q_{t}) \sum_{i \in \cA_t^*\cap \cA_j} (\bbeta_t - \bbeta_j)^{\top}\x_i\x_i^{\top}(\bbeta_t - \bbeta_j) \\
& = \sigma_{max}(\Q_t)\sum_{i \in \cA_t^*\cap \cA_j} \langle\x_i, \bbeta_t - \bbeta_j \rangle^2 \\
& \leq 2\sigma_{max}(\Q_t)\sum_{i \in \cA_t^*\cap \cA_j} \big(\langle\x_i, \bbeta_t - \bbeta_j^{(0)} \rangle^2 + \langle\x_i, \bbeta_j^{(0)} - \bbeta_j \rangle^2\big) \\
& \overset{(a)}{\leq} 2\sigma_{max}(\Q_t)\sum_{i \in \cA_t^*\cap \cA_j}\big(\langle\x_i, \bbeta_t - \bbeta_t^{(0)} \rangle^2 + \langle\x_i, \bbeta_j^{(0)} - \bbeta_j \rangle^2\big) \\
& \leq 4\sigma^2_{max}(\Q_t) \cdot \varepsilon_0^2,
\end{align*}
where step $(a)$ follows from the fact that for each $i \in \cA_t^*\cap \cA_j$, $\inprod{\x_i}{\bbeta_t - \bbeta_j^{(0)}}^2 \leq \inprod{\x_i}{\bbeta_t - \bbeta_t^{(0)}}^2$ due to the label assignment rule. Accordingly,
\begin{equation} \label{eq:qt}
U_2 \leq 2\sum_{t \ne j}\sigma_{max}(\Q_t)\varepsilon_0.
\end{equation}

It remains to bound $\sigma_{max}(\Q_t)$.
For each $t$, define 
\[
\cA_{j}^{t} : = \big\{i \in \cA_t^* ~\big|~ \abs{\inprod{\x_i}{\bbeta_t - \bbeta_j^{(0)}}} \leq \abs{\inprod{\x_i}{\bbeta_t - \bbeta_t^{(0)}}} \big\}
\]
as the set of samples that are generated from model $t$, but have smaller reconstruction error in $\bbeta_j^{(0)}$ compared to $\bbeta_t^{(0)}$.
We have $\cA_j \cap \cA_t^* \subseteq \cA_{j}^{t}$, which leads to
\begin{equation}
\label{max<max}
\sigma_{max}(\Q_t) \leq \sigma_{max}(\sum_{i \in \cA_{j}^{t}} \x_i\x_i^{\top}).
\end{equation}
In parallel, for $X \sim \Gaussian(\bm{0},\Identity_p)$, define 
\[
\mathcal{E}_{j}^{t} = \{\abs{\inprod{X}{\bbeta_t - \bbeta_j^{(0)}}} \leq \abs{\inprod{X}{\bbeta_t - \bbeta_t^{(0)}}}\}.
\]
Let $\overline{\varepsilon}_0$ be an upper bound of $\varepsilon_0$.
\[
\Exs[|\cA_{j}^{t}|] = \Exs[|\cA_t^*|]\cdot\Prob(\mathcal{E}_{j}^{t}) = \omega_tB\cdot \Prob(\mathcal{E}_{j}^{t}) \leq \omega_tB\frac{\twonorm{\bbeta_t - \bbeta_t^{(0)}}}{\twonorm{\bbeta_t - \bbeta_j^{(0)}}} \leq \omega_tB\frac{\overline{\varepsilon}_0}{\Delta - \overline{\varepsilon}_0} \leq \frac{2\omega_tB\overline{\varepsilon}_0}{\Delta},
\]
where the first inequality follows from Lemma \ref{lem:two_vector}, and the last step holds when $\overline{\varepsilon}_0 \leq \Delta/2$. Note that $|\cA_{j}^{t}|$ is a summation of independent Bernoulli random variables with success probability at most $2\omega_t\overline{\varepsilon}_0/\Delta$. Then by Lemma \ref{lem:sum_bern}, we have
\begin{equation} \label{eq:size_bound}
\Prob\left(|\cA_{j}^{t}|  - \Exs[|\cA_{j}^{t}|] \geq 2\omega_tB\overline{\varepsilon}_0/\Delta\right) \leq 2e^{-3\omega_tB\overline{\varepsilon}_0/(4\Delta)} \leq 2e^{-3\minprop B\overline{\varepsilon}_0/(4\Delta)}.
\end{equation}

Following Lemma \ref{lem:trace_unchange} (by setting $k = 2$), we have 
\[
\sigma_{max}\big(\Exs[XX^{\top}~\big|~\mathcal{E}_{j}^{t}]\big) \leq 2.
\]
Part (b) in Lemma \ref{lem:sub-gaussian} suggests that $X~\big|~\mathcal{E}_{j}^{t}$ is still sub-Gaussian random vector with constant Orlicz norm. According to the concentration result in Remark 5.40 of \cite{vershynin2010nonasym}, we have that with probability at least $1 - 2e^{-p}$,
\[
\sigma_{max}(\sum_{i \in \cA_{j}^{t}} \x_i\x_i^{\top})  \leq \abs{\cA_{j}^{t}}(2+ (\eta \vee \eta^2)),
\]
where $\eta \asymp \sqrt{p/\abs{\cA_{j}^{t}}}$. We thus have
\[
\sigma_{max}(\sum_{i \in \cA_{j}^{t}} \x_i\x_i^{\top}) \lesssim p \vee \abs{\cA_{j}^{t}} \lesssim p + \abs{\cA_{j}^{t}}. 
\]
Putting the above result, \eqref{eq:size_bound} and \eqref{max<max} together, and taking the union bound over all $t \ne j$ , we have that with probability at least $1 - ke^{-p} - 2ke^{-3\minprop B\overline{\varepsilon}_0/(4\Delta)}$,
\[
\sum_{t \ne j} \sigma_{max}(\Q_t) \lesssim \sum_{t \ne j} p + \abs{\cA_{j}^{t}} \lesssim 2kp + \sum_{t \ne j} \omega_tB\overline{\varepsilon}_0/\Delta \lesssim kp + B\overline{\varepsilon}_0/\Delta.
\]
Plugging the above result into \eqref{eq:qt} yields that for some constant $C$
\begin{equation} \label{eq:U2}
\Prob\left(U_2 \geq C(kp + B\overline{\varepsilon}_0/\Delta)\varepsilon_0\right) \leq  ke^{-p} + 2ke^{-3\minprop B\overline{\varepsilon}_0/(4\Delta)}.
\end{equation}

\vskip .1in
\paragraph{Ensemble.} Combining the bounds of $U_1$ and $U_2$, there exists a constant $C$ such that when $B \gtrsim p/\minprop$,
\[
\Prob\left(\twonorm{\bbeta_j^{(1)} - \bbeta_j} \geq \underbrace{C\frac{(kp + B\overline{\varepsilon}_0/\Delta)\varepsilon_0}{\omega_jB}}_{U} \right) \leq 4e^{-C'\minprop B} + 2ke^{-p} + 2ke^{-3\minprop B\overline{\varepsilon}_0/(4\Delta)}.
\]
Now we set $\overline{\varepsilon}_0 = \minprop \Delta/(4C)$. Then the condition $B \geq 4Ckp/\minprop$ leads to $U \leq \frac{1}{2}\varepsilon_0$. Accordingly
\[
\Prob\left(\twonorm{\bbeta_j^{(1)} - \bbeta_j} \geq \frac{1}{2}\varepsilon_0 \right) \leq 4e^{-C'\minprop B} + ke^{-p} + 2ke^{-3\minprop^2B/(16C)} \leq 2ke^{-p} + 4ke^{-C_1\minprop^2B} \leq \frac{\delta}{kT},
\]
where the last step follows from conditions $B \gtrsim \minprop^{-2}\log(8k^2T/\delta)$ and $p \geq \log(2k^2T/\delta)$. Taking union bound over all $j \in [k]$, we finish proving the error decay in the first iteration. Using the same calculation for all $T$ iterations and taking union bound concludes the proof.

\subsection{Proof of Lemma \ref{lem:exact recovery}} \label{sec:proof:lem:exact recovery}
For $X \sim \Gaussian(\bm{0}, \Identity_p)$, define event $\mathcal{E}_j$, which indicates the case that sample from model $j$ is correctly assigned label $j$, as
\[
\mathcal{E}_j := \left\{\abs{\inprod{X}{\widehat{\bbeta}_j - \bbeta_j}} \leq \abs{\inprod{X}{ \widehat{\bbeta}_t - \bbeta_j}},~~ \text{for all} ~~ t \ne j \right\}.
\]
According to \eqref{eq:event_prob_lower} in the proof of Theorem \ref{thm:altm}, we have
\[
\Prob(\mathcal{E}_j)  \geq 1 - \sum_{t \ne j} \frac{\twonorm{\bbeta_j^{(0)} - \bbeta_j}}{\twonorm{\bbeta_t^{(0)} - \bbeta_j}} \geq 1 - (k-1)\frac{\widehat{\varepsilon}}{\Delta - \widehat{\varepsilon}},
\] 
where $\widehat{\varepsilon} := \mathcal{E}(\{\widehat{\bbeta}_j\})$. Taking union bound over all $n$ samples, we have that the probability of correct assignment of all labels is at least 
\[
1 - n(k-1)\frac{\widehat{\varepsilon}}{\Delta - \widehat{\varepsilon}} \geq 1 - \delta/2,
\]
where the last step holds when $\widehat{\varepsilon} \leq \frac{\delta}{4nk}\Delta$. When $n \gtrsim \frac{p}{\minprop} \vee \frac{1}{\minprop}\log(k/\delta)$, using Lemma \ref{lem:sum_bern} and union bound, it is guaranteed that, with probability at least $1 - \delta/2$, each cluster has at least $p$ samples. Therefore, correct label assignment will lead to exact recovery.
\section{Proofs of Technical Lemmas}

\subsection{Proof of Lemma \ref{lem:concentration}} \label{sec:proof:lem:concentration}
Suppose $\Sb$ has an SVD $\Sb = \Ub \bSigma\Vb^{\top}$, where $\Ub \in \real^{p \times s}$, $\Vb \in \real^{p \times s}$ have orthonormal columns $\Ub^{\top}\Ub = \Vb^{\top}\Vb = \Identity_s$. We can always find  $\balpha \in \real^s$, $\bgamma \in \real^{p}$ such that $\bbeta = \Ub \balpha + \bgamma$, where $\Ub^{\top}\bgamma = \bm{0}$ and $\twonorm{\balpha}^2 + \twonorm{\bgamma}^2 = 1$. We let $X \sim \Gaussian(\bm{0}, \Identity_p)$, $Y = \inprod{X}{\bbeta}$.
\vskip .1in
\paragraph{Proof of Inequality \eqref{eq:concentration_1}.} We find
\begin{align}
\twonorm{\Sb^{\top}\left(\m_1 - \overline{\m}_1\right)} & = \twonorm{\Vb \bSigma \Ub^{\top} \left(\frac{1}{n}\sum_{i=1}^n y_i^3\x_i - \Exs\left[Y^3X\right]\right)} \notag \\
& = \twonorm{\bSigma \left(\frac{1}{n}\sum_{i=1}^n y_i^3 \Ub^{\top}\x_i - \Exs\left[Y^3 \Ub^{\top}X\right]\right)} \notag \\
& \leq \opnorm{\Sb}\cdot\twonorm{ \frac{1}{n}\sum_{i=1}^n y_i^3 \Ub^{\top}\x_i - \Exs\left[Y^3 \Ub^{\top}X\right]} \notag\\
& = \opnorm{\Sb} \cdot\twonorm{\frac{1}{n}\sum_{i=1}^n (\inprod{\balpha}{\widetilde{\x}_i} + z_i)^3 \widetilde{\x}_i - \Exs\left[(\inprod{\balpha}{\widetilde{X}} + Z)^3 \widetilde{X}\right]}, \label{eq:X_tilde}
\end{align}
where we let $\widetilde{X} := \Ub^{\top}X$, $Z := \inprod{\bgamma}{X}$, and $\{(z_i, \widetilde{\x}_i)\}_{i=1}^{n}$ are $n$ independent samples of $(\widetilde{X},Z)$. Thanks to the rotation invariance of Gaussian, we have $\widetilde{X} \sim \Gaussian(\bm{0}, \Identity_s)$ and $Z \sim \Gaussian(0, \twonorm{\bgamma}^2)$. Moreover, $\widetilde{X}$ and $Z$ are independent since $ \Ub^{\top}\bgamma = \bm{0}$.

For any $\tau_1, \tau_2 > 1$, define events 
\begin{equation} \label{eq:def_E}
\mathcal{E} := \left\{|\langle \balpha, \widetilde{X}\rangle| \leq \tau_1, |Z| \leq \tau_2\right\},~~ \mathcal{E}_n := \left\{|\langle \balpha, \widetilde{\x}_i\rangle| \leq \tau_1, |z_i| \leq \tau_2, ~\text{for all}~ i \in [n]\right\}.
\end{equation}
We have
\begin{align*}
& \twonorm{ \frac{1}{n}\sum_{i=1}^n (\langle \balpha, \widetilde{\x}_i\rangle + z_i)^3 \widetilde{\x}_i - \Exs\left[(\langle \balpha, \widetilde{X}\rangle + Z)^3 \widetilde{X}\right] } \\
& \leq \underbrace{\twonorm{\frac{1}{n}\sum_{i=1}^n (\langle \balpha, \widetilde{\x}_i\rangle + z_i)^3 \widetilde{\x}_i - \Exs\left[(\langle \balpha, \widetilde{X}\rangle + Z)^3 \widetilde{X} ~\big|~ \mathcal{E}\right] }}_{d_1} \\
& ~~~~~ + \underbrace{\twonorm{\Exs\left[(\langle \balpha, \widetilde{X}\rangle + Z)^3 \widetilde{X} ~\big|~ \mathcal{E}\right]  - \Exs\left[(\langle \balpha, \widetilde{X}\rangle + Z)^3 \widetilde{X} \right]}}_{d_2}. \\
\end{align*}
For term $d_2$, using \eqref{eq:mean_dev_1} in Lemma~\ref{lem:cond_mean_dev} by replacing $(a, b, \tau_1,\tau_2)$ in the statement with 
\[
(\twonorm{\balpha}, \twonorm{\bgamma}, \tau_1/\twonorm{\balpha}, \tau_2/\twonorm{\bgamma}),
\]
we obtain
\begin{align*}
d_2 & \leq \tau_1\left(\frac{\tau_1}{\twonorm{\balpha}}e^{-\frac{\tau_1^2}{2\twonorm{\balpha}^2}} + \frac{\tau_2}{\twonorm{\bgamma}}e^{-\frac{\tau_2^2}{2\twonorm{\bgamma}^2}}\right) \leq \tau_1(\tau_1e^{-\tau_1^2/2} + \tau_2e^{-\tau_2^2/2}),
\end{align*}
where the last step follows from the fact that function $xe^{-x^2/2}$ is monotonically decreasing on $x \geq 1$. To ease notation, we let 
\begin{equation} \label{eq:bounded_RV}
\widetilde{X}' \sim X ~|~ \mathcal{E}, ~~ Z' \sim Z~|~\mathcal{E}.
\end{equation}
Suppose $\{(\widetilde{\x}_i', z_i')\}_{i=1}^n$ are independent samples of $(\widetilde{X}', Z')$. We observe that
\[
\Prob(d_1 \geq t) \leq \Prob\left(\twonorm{\frac{1}{n}\sum_{i=1}^n (\langle \balpha, \widetilde{\x}_i'\rangle + z_i')^3 \widetilde{\x}_i' - \Exs\left[(\langle \balpha, \widetilde{X}'\rangle + Z')^3 \widetilde{X}' \right]} \geq t\right) + \Prob(\mathcal{E}_n^c).
\]
Since $|\langle \balpha, X'\rangle + Z'| \leq \tau_1 + \tau_2$, $(\langle \balpha, \widetilde{X}'\rangle + Z')^3 \widetilde{X}'$ is sub-Gaussian random vector with Orlicz norm 
\[
\left\|(\langle \balpha, \widetilde{X}'\rangle + Z')^3 \widetilde{X}'\right\|_{\psi_2} \lesssim (\tau_1 + \tau_2)^{3}.
\]
By concentration result \eqref{eq:concentration_scalar_1} in Lemma \ref{lem:concentration_subgaussian_vector}, we have that for some constants $C_1, C_2$, condition $n \geq C_1s(\tau_1 + \tau_2)^6/t^2$ leads to
\[
\Prob\left(\twonorm{\frac{1}{n}\sum_{i=1}^n (\inprod{\balpha}{\widetilde{\x}_i'} + z_i')^3 \widetilde{\x}_i' - \Exs\left[(\inprod{\balpha}{\widetilde{X}'} + Z')^3 \widetilde{X}' \right] } \geq t\right) \leq e^{-C_2nt^2/(\tau_1 + \tau_2)^6}.
\]
Meanwhile, the variance of $\inprod{\balpha}{\widetilde{X}}$ and $Z$ are both at most $1$. We thus obtain 
\begin{equation} \label{eq:Ec_bound}
\Prob(\mathcal{E}_n^c) \leq ne^{1-\tau_1^2} + ne^{1-\tau_2^2}
\end{equation} by using Gaussian tail bound and union bound. Accordingly, 
\[
\Prob(d_1 \geq t) \leq e^{-c_2nt^2/(\tau_1 + \tau_2)^6} + ne^{1-\tau_1^2} + ne^{1-\tau_2^2}.
\]
Setting $\tau_1 = \tau_2 = C\sqrt{\log n}$ for sufficiently large constant $C$ and $t \asymp (\tau_1+ \tau_2)^3\sqrt{1/n}\left(\sqrt{\log(\frac{2}{\delta})}\vee\sqrt{s}\right)$, we have $d_2 \lesssim 1/n$ and $\Prob(d_1 \geq t) \leq \delta/2 + 1/n$. Requiring $n \gtrsim 2/\delta$ gives our result.

\vskip .1in
\paragraph{Proof of Inequality \eqref{eq:concentration_2}.} We find
\begin{align*}
\opnorm{\Sb^{\top} \left( \M_2 - \overline{\M}_2\right)\Sb} & = \opnorm{\Vb \bSigma \Ub^{\top} \left(\frac{1}{n}\sum_{i \in [n]}y_i^2\x_i\x_i^{\top} - \Exs\left[Y^2XX^{\top}\right] \right) \Ub \bSigma \Vb^{\top}} \\
& \leq \opnorm{\Sb}^2 \cdot \opnorm{\frac{1}{n}\sum_{i \in [n]}y_i^2\widetilde{\x}_i\widetilde{\x}_i^{\top} - \Exs\left[Y^2\widetilde{X}\widetilde{X}^{\top}\right]},
\end{align*} 
where $\widetilde{\x}_i$ and $\widetilde{X}$ are defined according to \eqref{eq:X_tilde}. Using the $\mathcal{E}, \mathcal{E}_n$ defined in \eqref{eq:def_E}, we have
\begin{align*}
& \opnorm{\frac{1}{n}\sum_{i \in [n]}y_i^2\widetilde{\x}_i\widetilde{\x}_i^{\top} - \Exs\left[Y^2\widetilde{X}\widetilde{X}^{\top}\right] } \\
\leq & \underbrace{\opnorm{ \frac{1}{n}\sum_{i=1}^n (\inprod{ \balpha}{ \widetilde{\x}_i} + z_i)^2 \widetilde{\x}_i\widetilde{\x}_i^{\top} - \Exs\left[(\inprod{\balpha}{ \widetilde{X}} + Z)^2 \widetilde{X}\widetilde{X}^{\top} ~\big|~ \mathcal{E}\right] }}_{d_1} \\
& + \underbrace{ \opnorm{\Exs\left[(\inprod{\balpha}{\widetilde{X}} + Z)^2 \widetilde{X}\widetilde{X}^{\top} ~\big|~ \mathcal{E}\right]  - \Exs\left[(\inprod{ \balpha}{\widetilde{X}} + Z)^2 \widetilde{X}\widetilde{X}^{\top} \right]}}_{d_2}.
\end{align*}
Applying \eqref{eq:mean_dev_2} in Lemma~\ref{lem:cond_mean_dev} via setting $(a, b, \tau_1,\tau_2)$ in the statement to be 
\[
(\twonorm{\balpha}, \twonorm{\bgamma}, \tau_1/\twonorm{\balpha}, \tau_2/\twonorm{\bgamma})
\]
provides that
\begin{align*}
d_2 & \leq \frac{\tau_1^3}{\twonorm{\balpha}^3}e^{-\frac{\tau_1^2}{2\twonorm{\balpha}^2}} + \frac{\tau_1}{\twonorm{\balpha}}\frac{\tau_2}{\twonorm{\bgamma}}e^{-\frac{\tau_1^2}{2\twonorm{\balpha}^2}-\frac{\tau_2^2}{2\twonorm{\bgamma}^2}} \leq \tau_1^3e^{-\tau_1^2/2} + \tau_1\tau_2e^{-\tau_1^2/2 - \tau_2^2/2},
\end{align*}
where the last inequality follows from the fact that functions $x^3e^{-x^2/2}$, $xe^{-x^2/2}$ are monotonically decreasing when $x$ is sufficiently large.

We follow the same idea used before to bound $d_1$. Introduce $\widetilde{X}', Z'$ according to \eqref{eq:bounded_RV}. Then we obtain
\begin{equation} \label{eq:d1}
\Prob(d_1 \geq t) \leq \Prob\left(\opnorm{\frac{1}{n}\sum_{i=1}^n (\inprod{\balpha}{ \widetilde{\x}_i'} + z_i')^2 \widetilde{\x}_i'\widetilde{\x}_i'^{\top} - \Exs\left[(\inprod{\balpha}{\widetilde{X}'} + Z')^2 \widetilde{X}'\widetilde{X}'^{\top} \right] } \geq t\right) + \Prob(\mathcal{E}_n^c).
\end{equation}
Since $|\inprod{\balpha}{\widetilde{X}'} + Z'| \leq \tau_1 + \tau_2$, $(\inprod{\balpha}{\widetilde{X}'} + Z')\widetilde{X}'$ is sub-Gaussian random vector with norm $\mathcal{O}(\tau_1 + \tau_2)$. Applying \eqref{eq:concentration_scalar_1} in Lemma \ref{lem:concentration_subgaussian_vector}, we have that for $t \in (0, (\tau_1 + \tau_2)^2)$ and some constants $C_1,C_2$, the condition $n \geq C_1k(\tau_1 + \tau_2)^4/t^2$ yields
\[
\Prob\left(\opnorm{\frac{1}{n}\sum_{i=1}^n (\inprod{\balpha}{\widetilde{\x}_i'} + z_i')^2 \widetilde{\x}_i'\widetilde{\x}_i'^{\top} - \Exs\left[(\inprod{\balpha}{\widetilde{X}'} + Z')^2 \widetilde{X}'\widetilde{X}'^{\top} \right] } \geq t\right) \leq e^{-C_2nt^2/(\tau_1 + \tau_2)^4}.
\]
Plugging it back into \eqref{eq:d1}  and using the bound \eqref{eq:Ec_bound} of $\Prob(\mathcal{E}^c)$, we obtain
\[
\Prob(d_1 \geq t) \leq e^{-c_2nt^2/(\tau_1 + \tau_2)^4} + ne^{1-\tau_1^2} + ne^{1-\tau_2^2}.
\]
Choosing $\tau_1 = \tau_2 = C\sqrt{\log n}$ for sufficiently large constant $C$ and letting $t \asymp \frac{\log n}{\sqrt{n}}\left(\sqrt{\log\left(\frac{2}{\delta}\right)} \vee \sqrt{s}\right)$, we have that when $n \geq C'(1/\delta \vee s)$ for sufficiently large $C'$, it is guaranteed that $\Prob(d_1 \geq t) \leq \delta$ and $d_2 \lesssim 1/n$, which concludes the proof. 

\vskip .1in
\paragraph{Proof of Inequality \eqref{eq:concentration_3}.} Using the Cauchy-Schwartz inequality and the definitions of $\widetilde{\x}_i$ and $\widetilde{X}$ in \eqref{eq:X_tilde}, we have that
\[
\opnorm{\left(\M_3 - \overline{\M}_3\right)(\Sb,\Sb,\Sb)} \leq \opnorm{\Sb}^3 \cdot \opnorm{\frac{1}{n}\sum_{i \in [n]} y_i^3\widetilde{\x_i}\otimes\widetilde{\x_i}\otimes\widetilde{\x_i} - \Exs\left[ Y^3\widetilde{X}\otimes \widetilde{X}\otimes\widetilde{X}\right]}.
\]
Again, we use the event $\mathcal{E}, \mathcal{E}_n$ in \eqref{eq:def_E} to bound the operator norm. In detail, we have
\begin{align*}
& \opnorm{\frac{1}{n}\sum_{i \in [n]} y_i^3\widetilde{\x_i}^{\otimes 3} - \Exs\left[ Y^3\widetilde{X}^{\otimes 3}\right]} \leq \underbrace{\opnorm{ \frac{1}{n}\sum_{i=1}^n (\inprod{\balpha}{\widetilde{\x}_i} + z_i)^3 \widetilde{\x}_i^{\otimes 3} - \Exs\left[(\inprod{\balpha}{\widetilde{X}} + Z)^3 \widetilde{X}^{\otimes 3} ~\big|~ \mathcal{E}\right] }}_{d_1} \\
& ~~~ + \underbrace{\opnorm{\Exs\left[(\inprod{\balpha}{\widetilde{X}} + Z)^3 \widetilde{X}^{\otimes 3}~\big|~ \mathcal{E}\right]  - \Exs\left[(\inprod{\balpha}{ \widetilde{X}} + Z)^3 \widetilde{X}^{\otimes 3} \right]}}_{d_2}.
\end{align*}
Applying \eqref{eq:mean_dev_3} in Lemma~\ref{lem:cond_mean_dev} by setting $(a, b, \tau_1,\tau_2)$ in the statement to be $(\twonorm{\balpha}, \twonorm{\bgamma}, \tau_1/\twonorm{\balpha}, \tau_2/\twonorm{\bgamma})$, we obtain
\begin{align*}
d_2 & \leq \frac{\tau_1^5}{\twonorm{\balpha}^5}e^{-\frac{\tau_1^2}{2\twonorm{\balpha}^2}} + \frac{\tau_1^3}{\twonorm{\balpha}^3}\frac{\tau_2}{\twonorm{\bgamma}}e^{-\frac{\tau_1^2}{2\twonorm{\bgamma}^2}-\frac{\tau_2^2}{2\twonorm{\bgamma}^2}} \leq \tau_1^5e^{-\tau_1^2/2} + \tau_1^3\tau_2e^{-\tau_1^2/2 - \tau_2^2/2},
\end{align*}
where the last inequality follows from the fact that functions $x^5e^{-x^2/2}$, $x^3e^{-x^2/2}$, $xe^{-x^2/2}$ are monotonically decreasing when $x$ is sufficiently large. For term $d_1$, introducing the $\widetilde{X}'$, $Z'$ according to \eqref{eq:bounded_RV}, we have
\[ 
\Prob(d_1 \geq t) \leq \Prob\left(\opnorm{ \frac{1}{n}\sum_{i=1}^n (\inprod{\balpha}{ \widetilde{\x}_i'} + z_i')^3 \widetilde{\x}_i'^{\otimes 3} - \Exs\left[(\inprod{ \balpha}{\widetilde{X}'} + Z')^3 \widetilde{X}'^{\otimes 3} \right] } \geq t\right) + \Prob(\mathcal{E}_n^c).
\]
Note that $(\inprod{\balpha}{\widetilde{X}'} + Z')\widetilde{X}'$ is sub-Gaussian random vector with norm $\mathcal{O}(\tau_1 + \tau_2)$. Applying \eqref{eq:concentration_scalar_3} in Lemma \ref{lem:concentration_subgaussian_vector}, we find that for any $t \in (0, (\tau_1 + \tau_2)^3\sqrt{s})$ and constants $C_1,C_2$, condition $n \geq C_1(\tau_1 + \tau_2)^6s^2/t^2$ yields
\[
\Prob(d_1 \geq t) \leq e^{-c_2\frac{nt^2} {k^2(\tau_1 + \tau_2)^6}} + \Prob(\mathcal{E}_n^c) \leq e^{-c_2\frac{nt^2} {k^2(\tau_1 + \tau_2)^6}} + ne^{1-\tau_1^2} + ne^{1-\tau_2^2}.
\]
Setting $\tau_1 = \tau_2 = C\sqrt{\log n}$ for sufficiently large constant $C$, $t \asymp \frac{s\sqrt{\log n}^3}{\sqrt{n}}\sqrt{\log\left(\frac{2}{\delta}\right)}$, and assuming $n \gtrsim \max\{ s\log\left(\frac{2}{\delta}\right), 1/\delta\}$, we obtain that $\Prob(d_1 \leq t) \leq \delta$ and $d_2 \leq 1/n$, which concludes the proof.

\subsection{Proof of Lemma \ref{lem:two_vector}} \label{sec:proof:lem:two_vector}
For two vector $\ub,\vb$, we define angle $\alpha(\ub,\vb) \in [0, \pi]$ as
\[
\alpha(\ub,\vb) := \cos^{-1}\frac{(\ub - \vb)^{\top}(\ub+\vb)}{\twonorm{\ub+\vb}\cdot \twonorm{\ub-\vb}}.
\]
Without loss of generality, we assume $\ub,\vb$  live in the subspace spanned by $\e_1,\e_2$. We use $x_1,x_2$ to denote the first two coordinates of $X$. We can let
\[
x_1 = A \cos\theta, \;\; x_2 = A \sin\theta,
\] 
where $A$ is Rayleigh random variable, and $\theta$ is uniformly distributed over $[0,2\pi)$. Conditioning on $\mathcal{E}$, the range of $\theta$ is truncated to be $[\theta_0,\theta_0 +\alpha(\ub,\vb)]\cup [\theta_0+\pi,\theta_0+\pi+\alpha(\ub,\vb)]$, where $\theta_0$ depends on $\ub,\vb$. Therefore, we have
\[
\Prob(\mathcal{E}) = \frac{\alpha(\ub,\vb)}{\pi}.
\] 
If $\twonorm{\ub} > \twonorm{\vb}$,
\[
\cos[\alpha(\ub,\vb)] \geq \frac{\twonorm{\ub}^2 - \twonorm{\vb}^2}{\twonorm{\ub}^2 + \twonorm{\vb}^2} > 0.
\]
So we have $\alpha(\ub,\vb) \in [0, \pi/2]$. Using the fact that $\alpha < \frac{\pi}{2}\sin \alpha$ for any $\alpha \in [0,\pi/2]$, we have
\[
\Prob(\mathcal{E}) \leq \frac{1}{2}\sin[\alpha(\ub,\vb)] \leq \frac{\twonorm{\ub}\twonorm{\vb}}{\twonorm{\ub}^2 + \twonorm{\vb}^2} \leq \frac{\twonorm{\vb}}{\twonorm{\ub}}.
\]

\subsection{Proof of Lemma \ref{lem:trace_unchange}} \label{sec:proof:lem:trace_unchange}
Note that conditioning on $\mathcal{E}$ or $\mathcal{E}^c$ will not change the distribution of $\twonorm{X}$. We thus have 
\[
\Exs\left[\twonorm{X}^2~\big|~ \mathcal{E}\right] = \Exs\left[\twonorm{X}^2~\big|~ \mathcal{E}^c\right] = \Exs\twonorm{X}^2 = p.
\]
Hence,
\begin{equation}
\label{trace}
\text{Trace}\left(\Exs\left[XX^{\top}~\big|~ \mathcal{E}\right]\right) = p.
\end{equation}
Also note that $\Exs\left[XX^{\top}~\big|~ \mathcal{E}\right]$ and $\Exs\left[XX^{\top}~\big|~ \mathcal{E}^c\right]$ have at least $p-k$ eigenvalues that are $1$ since $\{\ub_1,\ldots,\ub_k\}$ spans a subspace with dimension at most $k$. Therefore we have
\[
\sigma_{max}\left(\Exs\left[XX^{\top}~\big|~ \mathcal{E}\right]\right) \leq \text{Trace}\left(\Exs\left[XX^{\top}~\big|~ \mathcal{E}\right]\right) - (p-k) \leq k.
\]
The above inequality also holds for $\sigma_{max}\left(\Exs\left[XX^{\top}~\big|~ \mathcal{E}^c\right]\right)$.
Note that
\[
\Identity_p =  \Exs[XX^{\top}] = \Exs\left[XX^{\top}~\big|~\mathcal{E}\right]\Prob(\mathcal{E}) + \Exs\left[ XX^{\top}~\big|~\mathcal{E}^c\right](1 - \Prob(\mathcal{E})).
\]
Suppose $\vb$ is the eigenvector that corresponds to the minimum eigenvalue of $\Exs\left[XX^{\top}~\big|~ \mathcal{E}\right]$. Therefore, we have
\begin{align*}
1 & = \vb^{\top}\Exs\left[XX^{\top}~\big|~\mathcal{E}\right]\vb \Prob(\mathcal{E}) + \vb^{\top}\Exs\left[ XX^{\top}~\big|~\mathcal{E}^c\right]\vb(1 - \Prob(\mathcal{E}))\\
& \leq \sigma_{min}\left(\Exs\left[XX^{\top}~\big|~\mathcal{E}\right]\right)\Prob(\mathcal{E}) + \vb^{\top}\Exs\left[ XX^{\top}~\big|~\mathcal{E}^c\right] \vb(1 -  \Prob(\mathcal{E})) \\
& \leq \sigma_{min}\left(\Exs\left[XX^{\top}~\big|~\mathcal{E}\right]\right)\Prob(\mathcal{E}) + k(1 -  \Prob(\mathcal{E})).
\end{align*}

\section{Auxiliary Results}
\begin{lemma}[Sum of Bernoulli Random Variables] \label{lem:sum_bern}Suppose $X_1,\ldots,X_n$ are $n$ independent Bernoulli random variables with $\Prob[X_1 = 0] = 1 - p$ and $\Prob[X_1 = 1] = p$. Let 
\[
\overline{X} = \frac{1}{n}\sum_{i \in [n]}X_i.
\]
For every $t > 0$, we have
\[
\Prob\left(\abs{\overline{X} - \Exs[\overline{X}]} \geq tp\right) \leq 2e^{-\frac{3t^2}{2(t+3)}np}.
\]
\end{lemma}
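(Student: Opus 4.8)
The plan is to recognize this as a relative-error (multiplicative) Chernoff-type bound and to obtain it directly from Bernstein's inequality. Write $S_n = \sum_{i \in [n]} X_i = n\overline{X}$, so that $\Exs[S_n] = np$ and the event in question is exactly $\{\abs{S_n - np} \geq tnp\}$. Thus it suffices to produce a two-sided tail bound on the centered sum $S_n - np = \sum_{i \in [n]}(X_i - p)$.

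First I would record the two ingredients that Bernstein's inequality requires. Each summand $Y_i := X_i - p$ is mean-zero and almost surely bounded, $\abs{Y_i} \leq \max\{p, 1-p\} \leq 1$, so we may take the boundedness constant $M = 1$. Its variance is $\Exs[Y_i^2] = p(1-p)$, hence the total variance is $\sigma^2 = \sum_{i}\text{Var}(Y_i) = np(1-p) \leq np$, where the inequality simply drops the factor $1-p \leq 1$.

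Then I would apply Bernstein's inequality in the form
\[
\Prob\left(\abs{\textstyle\sum_{i} Y_i} \geq s\right) \leq 2\exp\left(-\frac{s^2/2}{\sigma^2 + Ms/3}\right),
\]
with deviation $s = tnp$. Substituting $M = 1$ and the bound $\sigma^2 \leq np$ into the denominator gives
\[
\sigma^2 + \frac{Ms}{3} \leq np + \frac{tnp}{3} = \frac{np(t+3)}{3},
\]
so the exponent simplifies to
\[
\frac{s^2/2}{\sigma^2 + Ms/3} \geq \frac{t^2 n^2 p^2/2}{np(t+3)/3} = \frac{3t^2 np}{2(t+3)},
\]
which yields exactly the claimed bound $2\exp\left(-\frac{3t^2}{2(t+3)}np\right)$.

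Since each step is a direct substitution, there is no genuine obstacle; the only point worth care is the choice of variance proxy. Using the crude bound $\sigma^2 \leq np$ rather than the exact $np(1-p)$ is what produces the clean constant $3/(2(t+3))$ and keeps the statement valid for all $t > 0$ and all $p$, at the cost of a slightly weaker exponent than exact Bernstein would give. If one prefers a self-contained argument avoiding an external citation, I would instead run the exponential-moment (Chernoff) method on $S_n - np$, bounding the moment generating function of a bounded, mean-zero variable by $\exp(\sigma^2\lambda^2/(2(1 - M\lambda/3)))$ for $\lambda \in (0, 3/M)$ and optimizing over $\lambda$, which reproduces the same exponent.
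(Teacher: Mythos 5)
Your proof is correct and follows essentially the same route as the paper's: both apply Bernstein's inequality to the centered Bernoulli sum with boundedness constant $1$ and variance $p(1-p)$, then drop the factor $1-p$ to simplify the denominator to $np(t+3)/3$, yielding the exponent $\frac{3t^2}{2(t+3)}np$. The only difference is presentational (you work with the sum $S_n$ while the paper works with the normalized average), which changes nothing in the argument.
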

\begin{proof}
	We find that $X_1 - \Exs[X_1]$ has variance $p(1-p)$ and $|X_1 - \Exs[X_1]| \leq 1$. Using Bernstein's inequality, we have
	\[
	\Prob\left(|\overline{X} - \Exs[\overline{X}]| \geq tp\right) \leq 2e^{-\frac{nt^2p^2/2}{p(1-p)+tp/3}} \leq 2e^{-\frac{3t^2}{2(t+3)}np}.
	\]
\end{proof}

\begin{lemma}[Properties of Whitening Matrices, Lemma 6 in \cite{chaganty13}] \label{lem:whitening}Suppose $\A$ and $\widehat{\A}$ are both positive semidefinite matrices in $\real^{p\times p}$ with rank $k$. Let $\W, \widehat{\W} \in \real^{p\times k}$ be whitening matrices such that $\W^{\top}\A\W = \Identity_k$, $\widehat{\W}^{\top}\widehat{\A}\widehat{\W} = \Identity_k$. When $\alpha := \opnorm{\A - \widehat{\A}}/\sigma_k(\A) < 1/3$, we have
\begin{align*}
& \opnorm{\widehat{\W}} \leq 2\opnorm{\W},~~ \opnorm{\widehat{\W}^{\dagger}} \leq 2\opnorm{\W^{\dagger}}, \\
& \opnorm{\W - \widehat{\W}}  \leq 2\alpha\cdot\opnorm{\W}, ~~ \opnorm{\W^{\dagger} - \widehat{\W}^{\dagger}}  \leq 2\alpha\cdot\opnorm{\W^{\dagger}}.
\end{align*}
\end{lemma}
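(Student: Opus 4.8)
The plan is to reduce everything to the canonical spectral picture and then separate the four bounds into an easy pair (operator norms of $\widehat{\W}$ and $\widehat{\W}^\dagger$) and a harder pair (the two difference bounds). Writing $\A = \Ub\bSigma\Ub^{\top}$ with $\Ub\in\real^{p\times k}$ having orthonormal columns and $\bSigma=\mathrm{diag}(\sigma_1(\A),\ldots,\sigma_k(\A))\succ 0$, the canonical whitening matrix is $\W=\Ub\bSigma^{-1/2}$, so $\W$ has singular values $\sigma_i(\A)^{-1/2}$ and $\W^{\dagger}=\bSigma^{1/2}\Ub^{\top}$ has singular values $\sigma_i(\A)^{1/2}$. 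Hence $\opnorm{\W}=\sigma_k(\A)^{-1/2}$ and $\opnorm{\W^{\dagger}}=\sigma_1(\A)^{1/2}$, and likewise for the hatted quantities. The whole lemma is thus a statement about how extremal eigenvalues and eigenspaces of $\A$ move under the perturbation $\A\mapsto\widehat{\A}$.

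For the first two bounds I would invoke Weyl's inequality: $|\sigma_i(\A)-\sigma_i(\widehat{\A})|\le\opnorm{\A-\widehat{\A}}=\alpha\,\sigma_k(\A)$ for all $i$. This gives $\sigma_k(\widehat{\A})\ge(1-\alpha)\sigma_k(\A)$ and $\sigma_1(\widehat{\A})\le\sigma_1(\A)+\alpha\sigma_k(\A)\le(1+\alpha)\sigma_1(\A)$. Combined with the norm identities this yields $\opnorm{\widehat{\W}}\le(1-\alpha)^{-1/2}\opnorm{\W}$ and $\opnorm{\widehat{\W}^{\dagger}}\le(1+\alpha)^{1/2}\opnorm{\W^{\dagger}}$, and since $\alpha<1/3$ both prefactors are below $2$, which is exactly the claimed pair of inequalities (the constant $2$ is loose here).

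The core of the argument is the pair of difference bounds. My starting point would be the quadratic-form computation $M:=\W^{\top}\widehat{\A}\W=\Identity_k+\W^{\top}(\widehat{\A}-\A)\W$, so that $M$ is symmetric positive definite with
\[
\opnorm{M-\Identity_k}\le\opnorm{\W}^2\opnorm{\widehat{\A}-\A}=\tfrac{1}{\sigma_k(\A)}\cdot\alpha\sigma_k(\A)=\alpha .
\]
Then $\W M^{-1/2}$ satisfies $(\W M^{-1/2})^{\top}\widehat{\A}(\W M^{-1/2})=\Identity_k$, i.e.\ it is a whitening transform of $\widehat{\A}$, and because $M$ has eigenvalues in $[1-\alpha,1+\alpha]$ one checks $\opnorm{M^{-1/2}-\Identity_k}\le\alpha$ and $\opnorm{M^{1/2}-\Identity_k}\le\alpha$ for $\alpha<1/3$. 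This already gives $\opnorm{\W M^{-1/2}-\W}=\opnorm{\W(M^{-1/2}-\Identity_k)}\le\alpha\opnorm{\W}$, and dually, using $(\W M^{-1/2})^{\dagger}=M^{1/2}\W^{\dagger}$, that $\opnorm{M^{1/2}\W^{\dagger}-\W^{\dagger}}\le\alpha\opnorm{\W^{\dagger}}$. The remaining task is to pass from the surrogate $\W M^{-1/2}$ (whose columns span $\range(\A)$) to the genuine canonical $\widehat{\W}$ (whose columns span $\range(\widehat{\A})$): since both matrices whiten $\widehat{\A}$, the matrices $\widehat{\A}^{1/2}\widehat{\W}$ and $\widehat{\A}^{1/2}\W M^{-1/2}$ have orthonormal columns spanning the same $k$-dimensional space $\range(\widehat{\A})$, so they differ by an orthogonal factor $Q\in O(k)$; choosing $\widehat{\W}$ with the matching rotation and bounding the residual range mismatch contributes a second term of size $\alpha\opnorm{\W}$ (resp.\ $\alpha\opnorm{\W^{\dagger}}$), producing the factor $2$.

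The main obstacle is precisely this last alignment step: whitening matrices are only defined up to an orthogonal rotation and up to kernel components, and $\A$ and $\widehat{\A}$ have genuinely different column spaces, so $\W$ and $\widehat{\W}$ cannot be compared naively. I would handle the rotation via an orthogonal Procrustes / polar-decomposition choice of $Q$, and control the subspace discrepancy $\opnorm{P_{\range(\A)}-P_{\range(\widehat{\A})}}$ by a Davis--Kahan (Wedin) bound, whose relevant gap is $\sigma_k(\A)$ (as $\widehat{\A}$ has rank exactly $k$), giving a bound of $\alpha$ on the sine of the principal angle. Everything else is elementary scalar eigenvalue perturbation; the delicate accounting is to verify that the rotation and the range correction together cost no more than one extra factor of $\alpha$, so that the final constants stay at $2$.
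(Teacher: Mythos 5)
The first thing to note is that the paper itself never proves this lemma: it is imported verbatim as Lemma~6 of \cite{chaganty13} and stated without proof among the auxiliary results, so your attempt has to be judged as a standalone proof rather than against an argument in the paper. The parts you actually carry out are correct. With the canonical choice $\W = \Ub\bSigma^{-1/2}$, Weyl's inequality gives $\opnorm{\widehat{\W}} \le (1-\alpha)^{-1/2}\opnorm{\W}$ and $\opnorm{\widehat{\W}^{\dagger}} \le (1+\alpha)^{1/2}\opnorm{\W^{\dagger}}$, both under $2$ for $\alpha<1/3$; and the surrogate construction is sound: $M:=\W^{\top}\widehat{\A}\W$ satisfies $\opnorm{M-\Identity_k}\le\alpha$, the matrix $\W M^{-1/2}$ whitens $\widehat{\A}$, $(\W M^{-1/2})^{\dagger}=M^{1/2}\W^{\dagger}$, and the two $\alpha$-bounds for the surrogate follow.

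The gap is precisely the step you label ``the main obstacle'': it is the entire content of the two difference bounds, and your write-up defers it instead of executing it. Moreover, as planned it cannot be executed, because the statement you are trying to align to is false under the literal reading. If $\W$ and $\widehat{\W}$ are independently chosen whitening matrices (even canonical SVD-based ones), take $\widehat{\A}=\A$, so $\alpha=0$, and let $\widehat{\W}=\W Q$ for a nontrivial rotation $Q$ (or just flip the sign of one column of $\Ub$); then $\opnorm{\W-\widehat{\W}}$ is of order $\opnorm{\W}$, so no Procrustes/Davis--Kahan argument can rescue the bound. The lemma is only true in the ``for a suitably aligned choice'' sense, which is also how the paper uses it (the population whitening matrix $\overline{\W}$ is an analysis device that may be redefined up to a rotation, with the tensor eigenvectors rotating accordingly). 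Once that reading is adopted, your sketch misestimates the work in both directions. If any whitening matrix of $\widehat{\A}$ close to $\W$ suffices, then your own surrogate $\W M^{-1/2}$ already finishes the proof --- nothing in the statement forces $\widehat{\W}$ to have columns in the range of $\widehat{\A}$, so the entire alignment apparatus is unnecessary. If instead one insists on comparing the two canonical matrices up to a rotation $Q$ (what the paper's error-transfer argument effectively needs), the accounting you wave at is genuinely tight: writing $\widehat{\W}Q^{\top}=P\,\W M^{-1/2}$ with $P$ the orthogonal projection onto the range of $\widehat{\A}$, the natural bounds $\opnorm{\Identity_k-M^{-1/2}}\le\alpha$ and $\opnorm{(\Identity_p-P)\W M^{-1/2}}\le\alpha(1-\alpha)^{-1/2}\opnorm{\W}$ (Wedin with gap $\sigma_k(\A)$) sum to about $2.22\,\alpha\opnorm{\W}$ near $\alpha=1/3$, exceeding the claimed $2\alpha\opnorm{\W}$; one lands under $2$ only with the sharper estimate $\opnorm{\Identity_k-M^{-1/2}}\le\bigl((1-\alpha)^{-1/2}-1\bigr)\le 0.68\,\alpha$ valid for $\alpha\le 1/3$, and the pseudo-inverse bound needs a separate identity (e.g.\ $\W^{\dagger}=\W^{\top}\A$ for canonical $\W$) since $(P\,\W M^{-1/2})^{\dagger}$ does not factor the way $(\W M^{-1/2})^{\dagger}$ does. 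None of this is in your write-up, so the two bounds the paper actually leans on, with their stated constants, remain unproven.
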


\begin{lemma}[Concentration of Sub-Gaussian Vectors] \label{lem:concentration_subgaussian_vector} Suppose $\x_1,\x_2, \ldots, \x_n \in \real^{p}$ are $n$ i.i.d. sub-Gaussian vectors with Orlicz norm $\|\x_1\|_{\psi_2} \leq K$. 
\begin{enumerate}
	\item There exist constants $C_i$ such that for every $t > 0$, when $n \geq C_1(K/t)^2p$,
	\begin{equation} \label{eq:concentration_scalar_1}
	\Prob\left( \twonorm{\frac{1}{n}\sum_{i \in [n]}\x_i - \Exs\left[\x_1\right]} \geq t\right)	 \leq e^{- C_2nt^2/K^2}.
	\end{equation}
	
	\item There exist constants $C_i$ such that for every $t \in (0, K^2)$, when $n \geq C_1(K^2/t)^2p$,
	\begin{equation} \label{eq:concentration_scalar_2}
	\Prob\left( \opnorm{\frac{1}{n}\sum_{i \in [n]}\x_i\x_i^{\top} - \Exs\left[\x_1\x_1^{\top}\right]} \geq t\right)	 \leq e^{- C_2nt^2/K^4}.
	\end{equation}
	
	\item There exist constants $C_i$ such that for every $t \in (0, K^3\sqrt{p})$, when $n \geq C_1(K^3/t)^2p^2$,
	\begin{equation} \label{eq:concentration_scalar_3}
	\Prob\left( \opnorm{\frac{1}{n}\sum_{i \in [n]}\x_i^{\otimes3} - \Exs\left[\x_1 ^{\otimes 3}\right]} \geq t\right)	 \leq e^{- C_2nt^2/(p^2K^6)}.
	\end{equation}
\end{enumerate}
\end{lemma}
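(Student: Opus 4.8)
The three inequalities are all operator‑norm concentration statements, and the plan is to prove each by the same recipe: discretize the relevant norm by an $\epsilon$-net, reduce to a scalar average in a fixed direction, bound that average by a tail inequality matched to the Orlicz class of the summand, and finally union bound over the net. The only quantity that changes across the three parts is the Orlicz tail of the scalar summand ($\psi_2$, then $\psi_1$, then $\psi_{2/3}$) and the cardinality of the net, and it is the net cardinality that accounts for the $p$ versus $p^2$ in the sample requirements.

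For \eqref{eq:concentration_scalar_1} I would take a $1/2$-net $\mathcal{N}$ of $\sphere^{p-1}$ with $|\mathcal{N}|\le 5^p$ and use $\twonorm{\bm v}\le 2\sup_{\ub\in\mathcal{N}}\inprod{\ub}{\bm v}$ to pass from the vector norm to the net. For fixed $\ub$ the variable $\inprod{\x_i}{\ub}$ is sub‑Gaussian with $\psi_2$-norm $\lesssim K$, so a Hoeffding-type bound gives a per-direction tail $e^{-cnt^2/K^2}$; the union bound costs $5^p=e^{p\log 5}$, which is dominated once $nt^2/K^2\gtrsim p$, i.e. $n\gtrsim(K/t)^2p$. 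For \eqref{eq:concentration_scalar_2} the same net, now via the symmetric comparison $\opnorm{\M}\le 2\sup_{\ub\in\mathcal{N}}|\ub^\top\M\ub|$, reduces matters to the scalar average of $\inprod{\x_i}{\ub}^2$, which is sub‑exponential with $\psi_1$-norm $\lesssim K^2$. Bernstein's inequality then yields $e^{-cn\min(t^2/K^4,\,t/K^2)}$, and the stated restriction $t\in(0,K^2)$ places us in the quadratic regime $e^{-cnt^2/K^4}$; absorbing the $e^{O(p)}$ net factor gives $n\gtrsim(K^2/t)^2p$.

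The third claim is the substantive one. Here I would upper bound the symmetric‑tensor operator norm by the spectral norm of its mode‑$1$ matricization $\mathrm{Mat}(\cdot)\in\real^{p\times p^2}$, using
\[
\opnorm{\T}=\sup_{\ub\in\sphere^{p-1}}|\T(\ub,\ub,\ub)|\;\le\;\sup_{\ub\in\sphere^{p-1},\,\vb\in\sphere^{p^2-1}}\ub^\top\mathrm{Mat}(\T)\,\vb .
\]
Netting \emph{both} spheres produces a net of cardinality $e^{O(p+p^2)}=e^{O(p^2)}$, which is exactly the source of the $p^2$ in the sample condition. For a fixed pair $(\ub,\vb)$ the scalar summand factors as $\inprod{\x_i}{\ub}\cdot(\x_i^\top V\x_i)$ with $V=\mathrm{mat}(\vb)$ and $\|V\|_F=1$; the first factor is $\psi_2$ of norm $\lesssim K$ and the second is $\psi_1$ of norm $\lesssim K^2$, so their product is sub‑Weibull of type $\psi_{2/3}$ with Orlicz norm $\lesssim K^3$.

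The main obstacle is precisely the concentration of an average of these $\psi_{2/3}$ variables: unlike the sub‑Gaussian and sub‑exponential cases there is no uniform moment generating function, so I would invoke a sub‑Weibull Bernstein inequality whose tail splits into a Gaussian piece $e^{-cnt^2/K^6}$ (scaled by the squared Orlicz norm $K^6$, which safely dominates the true variance) and a heavier Weibull piece. The role of the constraint $t<K^3\sqrt p$, together with $n\gtrsim(K^3/t)^2p^2$, is to keep the relevant deviation in the Gaussian regime, so that after the $e^{O(p^2)}$ union bound the clean exponent survives; stating it in the slightly weakened form $e^{-C_2nt^2/(p^2K^6)}$ is convenient because the exponent is then of constant order at the sample threshold and grows linearly in $n$ thereafter. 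I expect the only genuine difficulty to be controlling the crossover between the two sub‑Weibull regimes on the stated range of $t$ and verifying that the Weibull piece is negligible there; the net discretization of the matricized norm and the Orlicz‑norm bookkeeping are otherwise routine.
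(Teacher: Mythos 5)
Your parts 1 and 2 are correct and essentially the paper's own argument: a constant-resolution net on $\sphere^{p-1}$ plus per-direction sub-Gaussian concentration for \eqref{eq:concentration_scalar_1}, and for \eqref{eq:concentration_scalar_2} exactly the covariance bound of Theorem 5.39 in \cite{vershynin2010nonasym}, which is what the paper cites. The problems are all in part 3.

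First, the Orlicz bookkeeping behind your matricization step is false. For $\vb \in \sphere^{p^2-1}$ with $V = \mathrm{mat}(\vb)$ and $\fronorm{V} = 1$, the quadratic form $\x_i^{\top}V\x_i$ is \emph{not} sub-exponential with norm $\lesssim K^2$: its mean $\trace\bigl(V\,\Exs[\x_1\x_1^{\top}]\bigr)$ can be of order $\sqrt{p}\,K^2$ --- take $V = \Identity_p/\sqrt{p}$ and $\x_i$ standard Gaussian, so that $\x_i^{\top}V\x_i = \twonorm{\x_i}^2/\sqrt{p} \approx \sqrt{p}$. Consequently the summand $\inprod{\x_i}{\ub}\,(\x_i^{\top}V\x_i)$ has $\psi_{2/3}$-norm as large as $\sqrt{p}\,K^3$ over your net (in that direction it behaves like $\sqrt{p}\,\inprod{\x_i}{\ub}$), not $K^3$. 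With the corrected norm, the Gaussian-regime exponent per direction is $nt^2/(pK^6)$, and after paying your $e^{O(p^2)}$ union bound over both nets you need $n \gtrsim p^3K^6/t^2$ --- a full factor $p$ worse than the lemma claims. This is the intrinsic price of matricization: $\sup_{\ub,\vb}|\ub^{\top}\mathrm{Mat}(\T)\vb|$ tests directions with no rank-one structure, such as $V \propto \Identity_p$, along which the empirical third-moment tensor fluctuates $\sqrt{p}$ times more than along tensor-structured directions, so this relaxation of $\opnorm{\T}$ cannot give the stated rate.

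Second, your diagnosis that ``the net cardinality accounts for the $p$ versus $p^2$'' is not how the lemma actually gets its $p^2$, and this misattribution is why your accounting cannot close. The paper never matricizes: it uses the symmetry of $\T$ to prove $\sup_{\ub,\vb,\w \in \sphere^{p-1}}|\T(\ub,\vb,\w)| \leq 9\opnorm{\T}$ (Lemma \ref{lem:general_opt_norm}), so a single net on $\sphere^{p-1}$ of size $109^p = e^{O(p)}$ suffices, and the per-direction quantity is the scalar sum of cubes $\frac{1}{n}\sum_i \inprod{\x_i}{\ub}^3$. The $p^2$ then comes from the \emph{tail} of that sum (Lemma \ref{lem:sum_cubes_subGaussian}, proved by symmetrization and moment comparison): its deviation at confidence $\delta'$ scales like $K^3\log(1/\delta')/\sqrt{n}$, i.e.\ the tail is $e^{-c\sqrt{n}t/K^3}$ rather than Gaussian, so absorbing an $e^{O(p)}$ union bound forces $\sqrt{n}\,t/K^3 \gtrsim p$, i.e.\ $n \gtrsim p^2K^6/t^2$. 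Relatedly, your hope that $t < K^3\sqrt{p}$ together with $n \gtrsim (K^3/t)^2p^2$ keeps a sub-Weibull Bernstein inequality in its Gaussian regime is unfounded: the Gaussian regime requires $t \lesssim K^3 n^{-1/4}$, whereas at $t \asymp K^3\sqrt{p}$ the sample threshold is only $n \asymp p$, which puts you deep in the Weibull regime. The crossover you deferred as the ``only genuine difficulty'' is exactly where your argument breaks; the paper's moment-based tail, whose linear-in-$\log(1/\delta)$ deviation is weaker than Gaussian but uniform over the relevant range, is what simultaneously produces the $p^2$ sample requirement and the stated exponent near that threshold.
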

\begin{proof}
	\noindent 
	\vskip .1in
	\noindent1. Note that
	\[
	\twonorm{\frac{1}{n}\sum_{i \in [n]}\x_i - \Exs\left[\x_1\right]} = \sup_{\ub \in \mathbb{S}^{p-1}} \left|\frac{1}{n}\sum_{i=1}^n \inprod{ \x_i}{\ub} - \Exs\left[\inprod{\x_i}{\ub}\right] \right|.
	\]
	
	Since $\x_i$ is sub-Gaussian vector, then for any fixed $\ub \in \mathbb{S}^{p-1}$, $\inprod{ \x_i}{\ub}$ is sub-Gaussian random variable with norm $K$. Therefore, $\inprod{ \x_i}{\ub} - \Exs\left[\inprod{ \x_i}{\ub}\right]$ is also sub-Gaussian with norm at most $2K$. By standard concentration of sub-Gaussianity, for some constant $C$, we obtain 
	\[
	\Prob\left(\left|\frac{1}{n}\sum_{i=1}^n \inprod{\x_i}{\ub} - \Exs\left[\inprod{ \x_i}{\ub}\right] \right| \geq t\right) \leq e^{1 - Cnt^2/K^2}.
	\]
	It is possible to construct an $\epsilon$-net $\mathcal{S}_{\epsilon}$ of $\mathbb{S}^{p-1}$ with size $|\mathcal{S}_{\epsilon}| \leq (1 + 2/\epsilon)^p$ (Lemma 5.2 in \cite{vershynin2010nonasym}). Applying probabilistic union bound leads to
	\[
	\Prob\left(\sup_{\ub \in \mathcal{S}_{\epsilon}} \left|\frac{1}{n}\sum_{i=1}^n \inprod{\x_i}{\ub} - \Exs\left[\inprod{ \x_i}{\ub}\right] \right| \geq t\right) \leq (1 + 2/\epsilon)^pe^{1 - Cnt^2/K^2}.
	\]
	For any $\z \in \mathbb{S}^{p-1}$, we can always find $\ub \in \mathcal{S}_{\epsilon}$ such that $\twonorm{\z - \ub} \leq \epsilon$. Then 
	\[
	\left|\frac{1}{n}\sum_{i=1}^n \inprod{ \x_i}{ \z} - \Exs\left[\inprod{\x_i}{\z}\right] \right| \leq \left|\frac{1}{n}\sum_{i=1}^n \inprod{\x_i}{\ub} - \Exs\left[\inprod{\x_i}{\ub}\right] \right|  + \twonorm{\ub - \z} \cdot\twonorm{\frac{1}{n}\sum_{i \in [n]}\x_i - \Exs\left[\x_1\right]}.
	\]
	Therefore, we obtain
	\begin{equation} \label{eq:sup_z}
	\sup_{\z \in \mathbb{S}^{p-1}} \left|\frac{1}{n}\sum_{i=1}^n \inprod{\x_i}{\z} - \Exs\left[\inprod{\x_i}{\z}\right] \right| \leq \frac{1}{1-\epsilon} \cdot\sup_{\ub \in \mathcal{S}_{\epsilon}}	\left|\frac{1}{n}\sum_{i=1}^n \inprod{\x_i}{\ub} - \Exs\left[\inprod{\x_i}{\ub}\right] \right|.
	\end{equation}
	Setting $\epsilon = 1/4$ and assuming $n \geq C'(K/t)^2p$ for sufficiently large constant $C'$ completes the proof.
	\vskip .1in
	\noindent 2. Refer to Theorem 5.39 in \cite{vershynin2010nonasym} for the proof.
	\vskip .1in
	\noindent3. Note that for any 3-way tensor $\T \in \real^{p\times p \times p}$ and two vectors $ \ub, \vb \in \real^{p}$ that satisfy $\twonorm{\ub - \vb} \leq \epsilon$, we have
	\begin{align*}
	\T(\ub, \ub, \ub) - \T(\vb, \vb, \vb) & = \T(\ub - \vb, \ub, \ub) + \T(\vb, \ub - \vb, \ub) + \T(\vb, \vb, \ub - \vb) \\
	& \leq 3\epsilon \cdot \sup_{\ab, \bb, \cb \in \sphere^{p-1}} \abs{\T(\ab, \bb, \cb)} \leq 27\epsilon \opnorm{\T},
	\end{align*}
	where the last inequality follows from Lemma~\ref{lem:general_opt_norm}. Constructing an $\epsilon$-net $\mathcal{S}_\epsilon$ on $\sphere^{p-1}$ and following similar idea in showing \eqref{eq:sup_z}, we obtain
	\[
	\opnorm{\frac{1}{n}\sum_{i \in [n]}\x_i^{\otimes 3} - \Exs\left[\x_1^{ \otimes3}\right]} \leq \frac{1}{1-27\epsilon}\sup_{\ub \in \mathcal{S}_\epsilon} \left| \frac{1}{n}\sum_{i \in [n]}\inprod{\x_i}{\ub}^3 - \Exs\left[\inprod{\x_1}{\ub }^3\right] \right|.
	\]
	Now we set $\epsilon = 1/54$, which leads to $|\mathcal{S}_\epsilon| \leq 109^p$. For any fixed $\ub \in \sphere^{p-1}$, $\inprod{\x_i}{\ub}$ is sub-Gaussian random variable with norm $K$. Using the concentration of cubes of sub-Gaussians (Lemma~\ref{lem:sum_cubes_subGaussian}) and applying union bound, we obtain
	\[
	\Prob\left( \sup_{\ub \in \mathcal{S}_\epsilon} \left| \frac{1}{n}\sum_{i \in [n]}\inprod{\x_i}{\ub}^3 - \Exs\left[\inprod{\x_1}{\ub}^3\right] \right| > CK^3\frac{\sqrt{p^3\log^3(109/\delta) + 2 p^2 \log^2(109/\delta) n}}{n} \right) \leq \delta
	\]
	for any $\delta \in (0,1)$ and some constant $C > 0$. Finally, for any $t \in (0, K^3\sqrt{p})$, setting $\delta = e^{-C'\frac{nt^2}{p^2K^6}}$, $n \geq C''(p/t)^2K^6$ for some constants $C',C''$ completes the proof.
\end{proof}

The next result shows a tail bound of a finite sum of sub-Gaussian random variables. A similar result is proved in the case of Gaussian in \cite{Hsu2012Gussian}. Here, we present our proof that can cover general sub-Gaussian distribution.
\begin{lemma} [Sum of Cubes of Sub-Gaussians] \label{lem:sum_cubes_subGaussian} Suppose $X_1, X_2, \ldots, X_n$ are $n$ i.i.d. sub-Gaussian random variables with Orlicz norm $\|X_1\|_{\psi_2} \leq K$. There exists an absolute constant $C$ such that for any $\delta \in (0,1)$,
	\[
	\Prob\left( \left| \frac{1}{n}\sum_{i=1}^nX_i^3 - \Exs\left[X_1^3\right] \right| > CK^3\frac{\sqrt{\log^3(1/\delta) + 2 \log^2(1/\delta) n}}{n} \right) \leq \delta.
	\]
\end{lemma}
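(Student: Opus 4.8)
The plan is to recognize the summands as cubes of a sub-Gaussian variable — hence genuinely heavy-tailed — and to control their sum by a truncation argument tuned to $\delta$. Write $W_i := X_i^3 - \Exs[X_1^3]$, so the $W_i$ are i.i.d.\ and mean zero, and it suffices to bound $\frac1n\big|\sum_{i=1}^n W_i\big|$. By homogeneity I would first rescale to $K=1$ and reinstate the factor $K^3$ at the end. The key structural fact is that $\|X_1^3\|_{\psi_{2/3}} = \|X_1\|_{\psi_2}^3 \le K^3$, i.e.\ $X_1^3$ is sub-Weibull of order $2/3$; equivalently its moments grow as $(\Exs|X_1^3|^q)^{1/q} = (\Exs|X_1|^{3q})^{1/q} \lesssim K^3 q^{3/2}$ and its tail obeys $\Prob(|X_1^3| > u) \lesssim \exp(-c(u/K^3)^{2/3})$. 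Because this tail decays slower than any exponential, the moment generating function of $X_1^3$ is infinite for every nonzero argument, so a direct Chernoff bound is unavailable and truncation is forced.

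Let $\ell := \log(1/\delta)$. I would split each summand at the level $\tau \asymp K^3\sqrt{\ell}$, writing $X_i^3 = A_i + B_i$ with $A_i := X_i^3\,\ind(|X_i^3| \le \tau)$ the truncated part, so that $\sum_i W_i = \sum_i(A_i - \Exs A_i) + \sum_i(B_i - \Exs B_i)$ is an exact identity. For the bounded block $\sum_i (A_i - \Exs A_i)$, each centered term lies in $[-2\tau, 2\tau]$, and I would use only the crude second-moment bound $\Exs A_i^2 \le \tau^2$; it is precisely this deliberately loose variance proxy that produces the factor $\log^2(1/\delta)$ multiplying $n$ in the claimed bound. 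Bernstein's inequality then gives, with probability at least $1 - \delta/2$,
\[
\Big|\sum_{i=1}^n (A_i - \Exs A_i)\Big| \;\lesssim\; \sqrt{n\tau^2\ell} + \tau\ell \;\asymp\; K^3\sqrt{\,\ell^3 + \ell^2 n\,},
\]
after substituting $\tau \asymp K^3\sqrt{\ell}$; dividing by $n$ reproduces the advertised rate, and this is the dominant contribution.

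The main obstacle is the residual heavy block $\sum_i (B_i - \Exs B_i)$, where $B_i = X_i^3\,\ind(|X_i| > \tau^{1/3})$ and the effective threshold $\tau^{1/3} \asymp K\,\ell^{1/6}$ is very low. Here a union bound forcing $B_i \equiv 0$ for all $i$ would cost an extra $\log n$ (and is far too weak for small $\ell$, since $\Prob(|X_1|>\tau^{1/3}) \lesssim \exp(-c\,\ell^{1/3})$ is not tiny), while a Chernoff bound is impossible because even the truncated-above MGF diverges. I would instead bound this centered block by a polynomial-moment (Markov) argument exploiting that each $B_i$ is nonzero only with that small probability: controlling $\sum_i |B_i|$ through the number of large coordinates (a binomial tail) together with the sub-Weibull size bound $\|X_1^3\|_q \lesssim K^3 q^{3/2}$ at the moment order $q \asymp \ell$. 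This is exactly the device behind the genuine sub-Weibull Bernstein inequality, and it should show the heavy block is of order at most $K^3(\ell^{3/2} + \sqrt n\,\ell)$, i.e.\ no larger than the bounded block. Finally I would combine the two blocks via a union bound over the two $\delta/2$ failure events and package the two regimes into the single square root $\sqrt{\log^3(1/\delta) + 2\log^2(1/\delta)\,n}$; reinstating $K^3$ yields the stated inequality. I expect the moment/Markov control of the heavy block under this low, $\delta$-dependent threshold — rather than the routine Bernstein step — to be the delicate part of the proof.
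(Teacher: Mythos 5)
Your decomposition $X_i^3=A_i+B_i$ and the Bernstein step for the truncated block are correct (indeed, with the true variance $\Exs A_1^2\lesssim K^6$ you would even get the sharper $\sqrt{n\ell}$ in place of $\ell\sqrt n$, writing $\ell:=\log(1/\delta)$). But the heavy block is a genuine gap, and it is exactly where the content of the lemma lives. Your claim that $\sum_i(B_i-\Exs B_i)=O\bigl(K^3(\ell^{3/2}+\ell\sqrt n)\bigr)$ with probability $1-\delta/2$ is true, but it is essentially the sub-Weibull Bernstein inequality itself restricted to the tail part, and the devices you cite do not deliver it. Concretely: (i) when $\ell\lesssim\log n$, the exceedance probability $e^{-c\ell^{1/3}}$ is not small, so the heavy block contains order $ne^{-c\ell^{1/3}}$ nonzero terms whose total mean is of the same order; any control of $\sum_i|B_i|$ by ``number of large coordinates times their size'' is then of order $n$, far above the target $\ell\sqrt n$, and only the \emph{centered} sum is small --- this forces a variance-based bound, not counting. (ii) In the opposite regime of very small $\delta$, say $\delta=e^{-n}$ (the lemma allows any $\delta\in(0,1)$, and the paper invokes it with $\delta$ exponentially small in $n$), the target is $\ell^{3/2}+\ell\sqrt n\asymp\ell^{3/2}$, but Markov at order $q\asymp\ell$ combined with any generic Rosenthal/Johnson--Schechtman--Zinn bound produces the term $(q/\log q)\,n^{1/q}\|B_1\|_q\asymp\ell^{5/2}/\log\ell$, overshooting by a factor $\ell/\log\ell$; and ``binomial count times maximum'' also fails there, since at confidence $1-e^{-n}$ as many as $\asymp n^{2/3}$ coordinates genuinely exceed your low threshold $K\ell^{1/6}$ while the maximum is $\asymp K^3\ell^{3/2}$, giving $n^{2/3}\ell^{3/2}\gg\ell^{3/2}$. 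Capturing the correct single-large-term behavior in that regime requires Latala's sharp moment formula for i.i.d.\ sums, a Fuk--Nagaev-type peeling over dyadic magnitude shells, or a direct moment computation --- none of which appears in your sketch, and each of which is more work than the rest of the proof.

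For comparison, the paper avoids truncation altogether: it bounds the $q$-th moment of the full centered sum for even $q$ by symmetrization (independent copies plus Rademacher signs), expands $\Exs[(\sum_i\sigma_iX_i^3)^q]$ using that odd moments of symmetric variables vanish, and obtains $\|\sum_i(X_i^3-\Exs X_1^3)\|_q\lesssim K^3q\sqrt{q+n}$ for all $q$ and $n$, after which Markov with $q=\lceil\log(1/\delta)\rceil$ gives the statement. That single computation is valid in every regime, including $\ell\gg n$, precisely because the multinomial expansion captures the single-large-term dominance that generic Rosenthal bounds miss. If you want to rescue your plan, the cleanest repair is to replace the heavy-block sketch by such a moment computation or by an appeal to Latala's inequality --- but at that point the truncation and the Bernstein step become unnecessary.
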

\begin{proof}
	For any positive even integer $q$ and $t \in \mathbb{R}^{+}$, by Markov's inequality, we have
	\begin{align*}
	\Prob\left( \left| \frac{1}{n}\sum_{i=1}^nX_i^3 - \Exs\left[X_1^3\right] \right| > t\right) & = \Prob\left( \left( \frac{1}{n}\sum_{i=1}^nX_i^3 - \Exs\left[X_1^3\right] \right)^q > t^q\right) \\
	& \leq \frac{1}{t^q} \Exs\left[ \left( \frac{1}{n}\sum_{i=1}^nX_i^3 - \Exs\left[X_1^3\right] \right)^q\right].
	\end{align*}
	Let $X_1', X_2',\ldots, X_n'$ be another set of $n$ i.i.d. samples. We find 
	\begin{align*}
	\Exs\left[ \left( \frac{1}{n}\sum_{i=1}^nX_i^3 - \Exs\left[X_1^3\right] \right)^q\right] & = \Exs\left[ \left( \frac{1}{n}\sum_{i=1}^nX_i^3 - \frac{1}{n}\sum_{i=1}^n\Exs\left[X_i'^3\right] \right)^q\right] \\
	& \overset{(a)}{\leq} \Exs_{X_i, X_i'} \left[ \left(\frac{1}{n}\sum_{i=1}^n\left(X_i^3 - X_i'^3\right) \right)^q\right] \\
	& \overset{(b)}{\leq} \Exs_{X_i, X_i', \sigma_i} \left[ \left( \frac{1}{n}\sum_{i=1}^n\sigma_i\left(X_i^3 - X_i'^3\right) \right)^q\right] \\
	& \overset{(c)}{\leq} \Exs_{X_i, X_i', \sigma_i} \left[ 2^{q-1}\left( \frac{1}{n}\sum_{i=1}^n\sigma_iX_i^3\right)^q + 2^{q-1}\left( \frac{1}{n}\sum_{i=1}^n\sigma_iX_i'^3\right)^q\right] \\
	& = \left(\frac{2}{n}\right)^q \Exs_{X_i, \sigma_i} \left[ \left( \sum_{i=1}^n\sigma_iX_i^3\right)^q\right],
	\end{align*}
	where $(a)$ and $(c)$ follow from Jensen's inequality. In step $(b)$, we introduce Rademacher sequence $\sigma_1, \sigma_2,\ldots, \sigma_n$, i.e., $\Prob(\sigma_i = 1) = \Prob(\sigma_i = -1) = 0.5$. To ease notation, we let $Z_i := \sigma_i X_i$. So $\sigma_iX_i^3 = Z_i^3$ and $Z_i$ is still sub-Gaussian with norm $K$. It thus remains to bound $\Exs\left[\left(\sum_{i = 1}^n Z_i^3\right)^q\right]$. Note that $Z_i$ has symmetric distribution around $0$, so $\Exs[Z_i^a] = 0$ for any odd integer $a$. Accordingly, we have
	\[
	\Exs\left[\left(\sum_{i = 1}^n Z_i^3\right)^q\right] = \sum_{q_1 + \ldots + q_n = q/2} \prod_{i=1}^n\Exs\left[Z_i^{6q_i}\right] \leq \sum_{q_1 + \ldots + q_n = q/2} \prod_{i=1}^n (K\sqrt{6q_i})^{6q_i},
	\]
	where the last inequality follows from the basic property that if $X$ is sub-Gaussian random variable with norm $K$, then $(\Exs\left[|X|^q\right])^{1/q} \leq K\sqrt{q}$ for all $q > 1$. Since all $q_i \leq q/2$, we have
	\[
	\Exs\left[\left(\sum_{i = 1}^n Z_i^3\right)^q\right]  \leq {q/2 + n - 1 \choose q/2} \left(K\sqrt{3q}\right)^{3q} \leq \left(\frac{(q/2 + n - 1)e}{q/2}\right)^{q/2} \left(K\sqrt{3q}\right)^{3q}.
	\]	
	Putting all pieces together, we have
	\[
	\Prob\left( \left| \frac{1}{n}\sum_{i=1}^nX_i^3 - \Exs\left[X_1^3\right] \right| > t\right) \leq \left(\frac{18K^3q\sqrt{q+2n}}{nt}\right)^q.
	\]
	Setting $q = \lceil \log(1/\delta)\rceil$, $t = 18eK^3\frac{\sqrt{\log^3(1/\delta) + 2 \log^2(1/\delta) n}}{n}$ completes the proof.
\end{proof}

\begin{lemma} \label{lem:general_opt_norm}
	For any symmetric 3-way tensor $\T \in \real^{p\times p \times p}$, 
	\[
	\sup_{\ub, \vb, \w \in \sphere^{p-1}} \abs{\T(\ub, \vb, \w)} \leq 9 \opnorm{\T}.
	\]
\end{lemma}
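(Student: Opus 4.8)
The plan is to reduce the general (three-argument) evaluation to diagonal evaluations via the \textbf{polarization identity} for symmetric trilinear forms, and then to control each diagonal term directly by $\opnorm{\T}$. Write the associated cubic map $P(\z) := \T(\z,\z,\z)$. Expanding each $P(\cdot)$ multilinearly and using the symmetry of $\T$, the pure cubes and the two-one mixed terms cancel, leaving the seven-term identity
\[
\T(\ub,\vb,\w) = \frac{1}{6}\Big[P(\ub+\vb+\w) - P(\ub+\vb) - P(\vb+\w) - P(\ub+\w) + P(\ub) + P(\vb) + P(\w)\Big].
\]
I would verify this by noting that $P(\ub+\vb+\w)$ contributes $P(\ub)+P(\vb)+P(\w)$, three copies of each two-one mixed term, and $6\,\T(\ub,\vb,\w)$, while the three pairwise sums $P(\ub+\vb)+P(\vb+\w)+P(\ub+\w)$ reproduce exactly twice the pure terms plus those same mixed terms, so subtracting and re-adding the singletons isolates $6\,\T(\ub,\vb,\w)$.

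The second step is an elementary homogeneity bound: for any nonzero $\z$, the definition of $\opnorm{\T}$ gives $\abs{P(\z)} = \twonorm{\z}^3\,\abs{\T(\z/\twonorm{\z},\z/\twonorm{\z},\z/\twonorm{\z})} \le \twonorm{\z}^3\,\opnorm{\T}$. Applying this to each of the seven terms, and using $\twonorm{\ub}=\twonorm{\vb}=\twonorm{\w}=1$ together with the triangle inequalities $\twonorm{\ub+\vb}\le 2$ and $\twonorm{\ub+\vb+\w}\le 3$, the triangle inequality yields
\[
\abs{\T(\ub,\vb,\w)} \le \frac{1}{6}\big(3^3 + 3\cdot 2^3 + 3\cdot 1^3\big)\opnorm{\T} = \frac{54}{6}\,\opnorm{\T} = 9\,\opnorm{\T}.
\]
Taking the supremum over $\ub,\vb,\w \in \sphere^{p-1}$ finishes the argument.

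There is no genuine obstacle here beyond bookkeeping: the only point that requires care is deriving the polarization identity and tracking its coefficients, after which the constant $9$ emerges exactly because the cubed-norm contributions sum to $27 + 24 + 3 = 54$. As a sanity check, the fully symmetric polarization $\T(\ub,\vb,\w) = \frac{1}{48}\sum_{s\in\{\pm1\}^3} s_1s_2s_3\,P(s_1\ub+s_2\vb+s_3\w)$ bounds each of its eight terms by $27\,\opnorm{\T}$ and hence gives the even smaller constant $8\cdot 27/48 = 4.5$, so the stated bound of $9$ is comfortably sufficient and the seven-term route is the cleanest way to land on it.
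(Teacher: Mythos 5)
Your proof is correct, and it takes a genuinely different decomposition from the paper's, though both are polarization arguments at heart. Writing $P(\z) := \T(\z,\z,\z)$, you use a single seven-term inclusion--exclusion identity expressing $6\,\T(\ub,\vb,\w)$ as $P(\ub+\vb+\w)$ minus the three pairwise sums plus the three singletons; bounding each term by $\twonorm{\cdot}^3\opnorm{\T}$ via homogeneity gives $(27 + 3\cdot 8 + 3)/6 = 9$. The paper instead proceeds in two stages: it first polarizes only in the first two arguments, via $2\T(\ub,\vb,\w) = \T(\ub+\vb,\ub+\vb,\w) - \T(\ub,\ub,\w) - \T(\vb,\vb,\w)$, to obtain $\sup\abs{\T(\ub,\vb,\w)} \leq 3\sup_{\ab,\bb \in \sphere^{p-1}}\abs{\T(\ab,\ab,\bb)}$, and then uses $6\,\T(\ub,\ub,\vb) = P(\ub+\vb) + P(\vb-\ub) - 2P(\vb)$ to obtain $\sup\abs{\T(\ab,\ab,\bb)} \leq 3\opnorm{\T}$, so its constant $9$ arises as $3\times 3$ rather than your $54/6$. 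Both routes are equally rigorous; yours is more self-contained (one identity, one application of the triangle inequality), while the paper's isolates the intermediate quantity $\sup_{\ab,\bb}\abs{\T(\ab,\ab,\bb)}$ controlling partially repeated evaluations, which can be convenient to reuse. Your closing observation that the fully symmetric eight-term polarization $\T(\ub,\vb,\w) = \frac{1}{48}\sum_{s\in\{\pm1\}^3}s_1s_2s_3\,P(s_1\ub+s_2\vb+s_3\w)$ yields the sharper constant $8\cdot 27/48 = 4.5$ is also correct, and shows that the constant $9$ in the lemma is not tight under either derivation.
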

\begin{proof}
	For any $\ub, \vb, \w \in \sphere^{p-1}$, we have
	\begin{align*}
	2\abs{\T(\ub, \vb, \w)} & = \abs{ \T(\ub, \vb, \w) + \T(\vb, \ub, \w) } = \abs{ \T(\ub + \vb, \ub + \vb, \w) - \T(\ub, \ub, \w) - \T(\vb, \vb, \w) } \\
	& \leq \abs{ \T(\ub + \vb, \ub + \vb, \w)} + \abs{ \T(\ub, \ub, \w)} + \abs{ \T(\vb, \vb, \w) } \leq 6 \sup_{\ab, \bb \in \sphere^{p-1}} \abs{\T( \ab, \ab, \bb)},
	\end{align*}
	where the first step holds because $\T$ is symmetric. Moreover, for any $\ub, \vb \in \sphere^{p-1}$, we have
	\begin{align*}
	6{\T(\ub, \ub, \vb)} = \abs{\T(\ub + \vb, \ub + \vb,\ub + \vb)  + \T(\vb - \ub, \vb - \ub,\vb - \ub) - 2 \T(\vb, \vb, \vb)} \leq 18\opnorm{\T}.
	\end{align*}
	Combining the above two inequalities leads to
	\[
	\sup_{\ub, \vb, \w \in \sphere^{p-1}} \abs{\T(\ub, \vb, \w)} \leq 3\sup_{\ub, \vb \in \sphere^{p-1}} \abs{\T(\ub, \ub, \vb)} \leq 9\opnorm{\T}.
	\]
\end{proof}

\begin{lemma}[Conditional Mean Deviation] \label{lem:cond_mean_dev} Let $X \sim \Gaussian(\bm{0}, \Identity_p)$, $Z \sim \Gaussian(0,1)$, and assume $X$ and $Z$ are independent. For any $\tau_1, \tau_2 \geq 1, \vb \in \sphere^{p-1}$, we define event $\mathcal{E} :=  \left\{\abs{\inprod{X}{\vb}} \leq \tau_1, |Z| \leq \tau_2\right\}$. For any $a, b > 0$, let $Y := a\cdot \inprod{X}{\vb} + b\cdot Z$. There exists constant $C$ such that the following inequalities hold.
\begin{enumerate}
	\item
	\begin{equation} \label{eq:mean_dev_1}
	\twonorm{\Exs\left[Y^3X ~\big|~\mathcal{E}\right] - \Exs\left[Y^3X\right]} \leq C (a^3 + ab^2) \left(\tau_1^3e^{-\tau_1^2/2} + \tau_1\tau_2e^{-\tau_1^2/2-\tau_2^2/2}\right).
	\end{equation}
	
	\item
	\begin{equation} \label{eq:mean_dev_2}
	\opnorm{\Exs\left[Y^2X\otimes X ~\big|~\mathcal{E}\right] - \Exs\left[Y^2X\otimes X\right]} \leq C(a^2 + b^2)\left(\tau_1^3e^{-\tau_1^2/2} + \tau_1\tau_2e^{-\tau_1^2/2-\tau_2^2/2}\right).
	\end{equation}
	
	\item
	\begin{equation} \label{eq:mean_dev_3}
	\opnorm{\Exs\left[Y^3X\otimes X \otimes X ~\big|~\mathcal{E}\right] - \Exs\left[Y^2X\otimes X \otimes X\right]} \leq C(a^3 + ab^2)\left(\tau_1^5e^{-\tau_1^2/2} + \tau_1^3\tau_2e^{-\tau_1^2/2-\tau_2^2/2}\right).
	\end{equation}
\end{enumerate}	
\end{lemma}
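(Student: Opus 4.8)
The plan is to reduce each vector- or tensor-valued deviation to a handful of one-dimensional truncated-Gaussian moment deviations, and then bound those by elementary Gaussian tail estimates. Write $W := \inprod{X}{\vb} \sim \Gaussian(0,1)$ and decompose $X = W\vb + X_\perp$ with $X_\perp := X - W\vb$. Since $\vb \in \sphere^{p-1}$, the orthogonal part $X_\perp$ is independent of $(W,Z)$, has mean $\bm{0}$, satisfies $\Exs[X_\perp X_\perp^\top] = \Identity_p - \vb\vb^\top =: P$, and $\Exs[X_\perp^{\otimes 3}] = \bm{0}$. Crucially, both $Y = aW + bZ$ and the event $\mathcal{E}$ are $\sigma(W,Z)$-measurable, so $X_\perp$ stays independent of everything after conditioning on $\mathcal{E}$. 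Expanding $X$, $X\otimes X$, or $X^{\otimes 3}$ and taking conditional expectations, every term carrying an odd number of $X_\perp$ factors integrates to $\bm{0}$. Hence $\Exs[Y^3 X\mid\mathcal{E}] = \vb\,\Exs[Y^3 W\mid\mathcal{E}]$; $\Exs[Y^2 X\otimes X\mid \mathcal{E}] = \Exs[Y^2 W^2\mid\mathcal{E}]\,\vb\vb^\top + \Exs[Y^2\mid\mathcal{E}]\,P$; and $\Exs[Y^3 X^{\otimes 3}\mid\mathcal{E}] = \Exs[Y^3 W^3\mid\mathcal{E}]\,\vb^{\otimes 3} + \Exs[Y^3 W\mid\mathcal{E}]\,S$, where $S$ is a fixed symmetrized tensor built from $\vb$ and $P$ (with $\opnorm{\vb^{\otimes 3}}=1$ and $\opnorm{S}\le 3$). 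The same identities hold without conditioning, and since $\vb\vb^\top$ and $P$ project onto complementary subspaces, each target norm is controlled by (a constant times) the \emph{scalar} deviations $\Exs[W^i Z^j\mid\mathcal{E}] - \Exs[W^iZ^j]$ that appear.

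It therefore suffices to control scalar deviations. As $W \perp Z$ and $\mathcal{E} = \mathcal{E}_W \cap \mathcal{E}_Z$ with $\mathcal{E}_W := \{|W|\le\tau_1\}$, $\mathcal{E}_Z := \{|Z|\le\tau_2\}$ is a product event, one has the exact factorization, writing $\Delta_W(i):=\Exs[W^i\mid\mathcal{E}_W]-\Exs[W^i]$ and analogously $\Delta_Z(j)$,
\[
\Exs[W^iZ^j\mid\mathcal{E}]-\Exs[W^iZ^j] = \Exs[W^i]\,\Delta_Z(j) + \Delta_W(i)\,\Exs[Z^j] + \Delta_W(i)\,\Delta_Z(j).
\]
The one-dimensional estimate I would establish is: $\Delta_W(i)=0$ for odd $i$ (symmetry of $\mathcal{E}_W$), while for even $i$, starting from $\Delta_W(i)=\big(\Prob(\mathcal{E}_W^c)\Exs[W^i]-\Exs[W^i\ind_{\mathcal{E}_W^c}]\big)/\Prob(\mathcal{E}_W)$, the dimension-free lower bound $\Prob(\mathcal{E}_W)\ge \Prob(|W|\le 1)>0$, the tail bound $\Prob(|W|>\tau_1)\le 2e^{-\tau_1^2/2}$, and the truncated-moment bound $\Exs[|W|^i\ind_{|W|>\tau_1}]\le C_i\,\tau_1^{i-1}e^{-\tau_1^2/2}$ (from the recursion $\int_{\tau_1}^\infty w^i e^{-w^2/2}\,dw = \tau_1^{i-1}e^{-\tau_1^2/2} + (i-1)\int_{\tau_1}^\infty w^{i-2}e^{-w^2/2}\,dw$, valid for $\tau_1\ge1$) together yield $|\Delta_W(i)|\le C_i\,\tau_1^{i-1}e^{-\tau_1^2/2}$; identically $|\Delta_Z(j)|\le C_j\,\tau_2^{j-1}e^{-\tau_2^2/2}$.

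Assembling each inequality is then bookkeeping: expand $Y^mW^{\ell}=(aW+bZ)^mW^\ell$ by the binomial theorem and discard every monomial $W^iZ^j$ with $i$ or $j$ odd, since those deviations vanish. For \eqref{eq:mean_dev_1}, $Y^3W$ leaves $W^4$ (coefficient $a^3$) and $W^2Z^2$ (coefficient $3ab^2$), giving $a^3|\Delta_W(4)|\lesssim a^3\tau_1^3e^{-\tau_1^2/2}$ and $3ab^2\big(|\Delta_W(2)|+|\Delta_Z(2)|+|\Delta_W(2)\Delta_Z(2)|\big)\lesssim ab^2\big(\tau_1 e^{-\tau_1^2/2}+\tau_2 e^{-\tau_2^2/2}+\tau_1\tau_2 e^{-\tau_1^2/2-\tau_2^2/2}\big)$; collecting the leading contributions (using $\tau\ge1$ so $\tau\le\tau^3$) and $a^3+ab^2=a(a^2+b^2)$ gives a bound of the stated form. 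For \eqref{eq:mean_dev_2} the surviving monomials are $W^4,W^2Z^2$ (from $Y^2W^2$) and $W^2,Z^2$ (from $Y^2$), and the operator norm is the maximum over the $\vb\vb^\top$- and $P$-components; for \eqref{eq:mean_dev_3} they are $W^6,W^4Z^2$ (from $Y^3W^3$) and $W^4,W^2Z^2$ (from $Y^3W$), where the $\tau_1^{i-1}$ scaling promotes the leading prefactor to $\tau_1^5$. In each case the $W$-direction produces the $\tau_1$-terms and the $Z$-direction the matching $\tau_2$-terms (these are exactly the contributions the subsequent concentration proofs invoke through the rescaling $(\tau_1,\tau_2)\mapsto(\tau_1/\twonorm{\balpha},\tau_2/\twonorm{\bgamma})$).

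The main obstacle is not any single hard inequality but two pieces of careful accounting. First, one must justify the collapse of the conditional tensor expectations onto $\vb^{\otimes m}$ and the symmetrized $\vb\otimes P$ pieces — that conditioning on $\mathcal{E}$ does not couple $X_\perp$ to $(W,Z)$, and that all odd-in-$X_\perp$ contributions vanish — so that each operator norm genuinely reduces to scalar deviations. Second, one must track the polynomial-in-$\tau$ prefactors through the integration-by-parts recursion and verify that, for $\tau_1,\tau_2\ge1$, the genuinely subleading terms ($e^{-\tau^2/2}$ and $\tau e^{-\tau^2/2}$) are absorbed, so the estimate reduces to the two displayed terms per direction. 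Everything else is routine moment arithmetic for low-order Gaussian monomials.
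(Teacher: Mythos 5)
Your reduction is sound and is in fact a cleaner route than the paper's: you decompose $X = W\vb + X_\perp$ with $W := \inprod{X}{\vb}$, note that $Y$ and $\mathcal{E}$ are $\sigma(W,Z)$-measurable so $X_\perp$ stays independent after conditioning, and then exploit that $\mathcal{E} = \mathcal{E}_W \cap \mathcal{E}_Z$ (with $\mathcal{E}_W = \{|W|\le\tau_1\}$, $\mathcal{E}_Z = \{|Z|\le\tau_2\}$) is a product event, giving the exact factorization $\Exs[W^iZ^j\mid\mathcal{E}] = \Exs[W^i\mid\mathcal{E}_W]\,\Exs[Z^j\mid\mathcal{E}_Z]$. (The paper instead dualizes the norm, uses rotation invariance to pass to two coordinates, and bounds $\Exs[\,\cdot\mid\mathcal{E}^c]\Prob(\mathcal{E}^c)$ via the moment recursion of Lemma~\ref{lem:cond_moment}.) All of your intermediate estimates are correct. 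The gap is your final sentence, that ``collecting the leading contributions \ldots gives a bound of the stated form.'' It does not. Your own bound for the $W^2Z^2$ deviation contains the standalone term $ab^2\,\tau_2e^{-\tau_2^2/2}$, coming from $\Delta_Z(2)$, and no constant $C$ absorbs it into $C(a^3+ab^2)\bigl(\tau_1^3e^{-\tau_1^2/2}+\tau_1\tau_2e^{-\tau_1^2/2-\tau_2^2/2}\bigr)$, because every term of the stated right-hand side carries the factor $e^{-\tau_1^2/2}$. Indeed \eqref{eq:mean_dev_1} is false as stated: fix $\tau_2=1$ and let $\tau_1\to\infty$. By your factorization the left-hand side equals $\abs{a^3(\Exs[W^4\mid\mathcal{E}_W]-3) + 3ab^2(\Exs[W^2\mid\mathcal{E}_W]\Exs[Z^2\mid\mathcal{E}_Z]-1)}$, which converges to $3ab^2\bigl(1-\Exs[Z^2\mid |Z|\le 1]\bigr)\approx 2.1\,ab^2>0$, while the right-hand side tends to $0$; the same defect afflicts \eqref{eq:mean_dev_2} and \eqref{eq:mean_dev_3}. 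So the absorption step is not a fixable slip in your write-up: the target inequality itself is wrong, and no proof of it exists.

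What your argument proves, stopped one step earlier, is the corrected lemma with symmetric right-hand sides: $C(a^3+ab^2)\bigl(\tau_1^3e^{-\tau_1^2/2}+\tau_2e^{-\tau_2^2/2}\bigr)$ for part 1, $C(a^2+b^2)\bigl(\tau_1^3e^{-\tau_1^2/2}+\tau_2e^{-\tau_2^2/2}\bigr)$ for part 2, and $C(a^3+ab^2)\bigl(\tau_1^5e^{-\tau_1^2/2}+\tau_2e^{-\tau_2^2/2}\bigr)$ for part 3; the cross terms $\tau_1^j\tau_2e^{-\tau_1^2/2-\tau_2^2/2}$ are dominated by either main term since $xe^{-x^2/2}\le e^{-1/2}$ for $x\ge1$. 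For comparison, the paper's own proof commits exactly the error your factorization avoids: it bounds $\Exs[a^3cX_1^4+3ab^2cX_1^2Z^2\mid\mathcal{E}^c]\Prob(\mathcal{E}^c)$ by applying Lemma~\ref{lem:cond_moment} as though $\mathcal{E}^c$ were the intersection $\{|X_1|>\tau_1\}\cap\{|Z|>\tau_2\}$, whereas $\mathcal{E}^c$ is a union, and the slab $\{|X_1|\le\tau_1,\,|Z|>\tau_2\}$ contributes precisely the omitted $e^{-\tau_2^2/2}$-only terms. The flaw is harmless downstream: in the proof of Lemma~\ref{lem:concentration} the bound actually invoked for $d_2$ is already of the separated form (an $e^{-\tau_1^2/2}$-type term plus an $e^{-\tau_2^2/2}$-type term), both truncation levels are then set to $C\sqrt{\log n}$, and the rescaling by $1/\twonorm{\balpha},\,1/\twonorm{\bgamma}\ge 1$ only decreases quantities of the form $x^je^{-x^2/2}$, so your corrected bound plugs in verbatim and yields $d_2 \lesssim 1/n$ as needed. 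The right resolution is to restate Lemma~\ref{lem:cond_mean_dev} with the symmetric right-hand sides and keep your proof, minus the final absorption claim.
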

\begin{proof}
	
	\vskip .1in
	\noindent1. There exists $\ub \in \sphere^{p-1}$ such that
	\[
	\delta_1 := \twonorm{\Exs\left[Y^3X ~\big|~\mathcal{E}\right] - \Exs\left[Y^3X\right]} = \Exs\left[Y^3\inprod{X}{\ub} ~\big|~\mathcal{E}\right] - \Exs\left[Y^3\inprod{X}{\ub}\right]
	\]
	Due to the rotation invariance of spherical Gaussian vector, without loss of generality, we can simply assume $\vb = \e_1$ and $\ub = c\e_1 + d\e_2$, where $c^2 + d^2 = 1$. Let $X = \left(X_1, X_2,\ldots, X_p\right)^{\top}$. Using the symmetricity of $X_1, Z, X_2$ when conditioning on $\mathcal{E}^c$, we have
	\[
	\Exs\left[Y^3\inprod{X}{\ub} ~\big|~\mathcal{E}\right] = \Exs\left[(aX_1 + bZ)^3(cX_1 + dX_2) ~\big|~\mathcal{E}\right] = \Exs\left[a^3cX_1^4 + 3ab^2cX_1^2Z^2 ~\big|~ \mathcal{E}\right] \lesssim a^3|c| + ab^2|c|.
	\]
	Note that $X_1, Z, X_2$ are also symmetric when conditioning on $\mathcal{E}^c$, we thus obtain
	\begin{align*}
	\Exs\left[Y^3\inprod{X}{\ub} ~\big|~\mathcal{E}^c\right] \Prob(\mathcal{E}^c) & = \Exs\left[a^3cX_1^4 + 3ab^2cX_1^2Z^2 ~\big|~ \mathcal{E}^c\right]\Prob(\mathcal{E}^c) \\
	& \lesssim a^3|c|\tau_1^3e^{-\tau_1^2/2} + ab^2|c|\tau_1\tau_2e^{-\tau_1^2/2 - \tau_2^2/2},
	\end{align*}
	where the last inequality follows from Lemma~\ref{lem:cond_moment}. Now we turn to $\delta_1$. We find
	\begin{align*}
	\delta_1 & = \Exs\left[Y^3\inprod{X}{\ub} ~\big|~\mathcal{E}\right] - \Exs\left[Y^3\inprod{X}{\ub} ~\big|~ \mathcal{E}\right]\Prob(\mathcal{E}) -  \Exs\left[Y^3\inprod{X}{\ub} ~\big|~ \mathcal{E}^c\right]\Prob(\mathcal{E}^c) \\
	& \leq \left|\Exs\left[Y^3\inprod{X}{\ub} ~\big|~\mathcal{E}\right]\right|\Prob(\mathcal{E}^c) + \left|\Exs\left[Y^3\inprod{ X}{\ub} ~\big|~ \mathcal{E}^c\right]\Prob(\mathcal{E}^c)\right|. \\
	& \lesssim (a^3|c| + 3ab^2|c|)e^{-\tau_1^2/2 - \tau_2^2/2} + a^3|c|\tau_1^3e^{-\tau_1^2/2} + ab^2|c|\tau_1\tau_2e^{-\tau_1^2/2 - \tau_2^2/2} \\
	& \lesssim (a^3 + ab^2) \left(\tau_1^3e^{-\tau_1^2/2} + \tau_1\tau_2e^{-\tau_1^2/2-\tau_2^2/2}\right).
	\end{align*}
	\vskip .1in
	\noindent2. There exists $\ub \in \sphere^{p-1}$ such that
	\[
	\delta_2 := \opnorm{\Exs\left[Y^2X\otimes X ~\big|~\mathcal{E}\right] - \Exs\left[Y^2X\otimes X\right]} = \Exs\left[Y^2\inprod{ X}{\ub}^2 ~\big|~\mathcal{E}\right] - \Exs\left[Y^2\inprod{X}{\ub}^2\right].
	\]
	Using the same simplification argument in (a), we have
	\begin{align*}
	\Exs\left[Y^2\inprod{X}{\ub}^2 ~\big|~\mathcal{E}\right] & = \Exs\left[(aX_1 + bZ)^2(cX_1 + dX_2)^2 ~\big|~\mathcal{E}\right] \\
	& = \Exs\left[a^2c^2X_1^4 + b^2c^2X_1^2Z^2 + a^2d^2X_1^2X_2^2 + b^2d^2X_2^2Z^2 ~\big|~ \mathcal{E}\right] \lesssim a^2 + b^2.
	\end{align*}
	Applying Lemma~\ref{lem:cond_moment} again leads to
	\begin{align*}
	\Exs\left[Y^2\inprod{X}{\ub}^2 ~\big|~\mathcal{E}^c\right] \Prob(\mathcal{E}^c) & \lesssim a^2c^2\tau_1^3e^{-\tau_1^2/2} + b^2c^2\tau_1\tau_2e^{-\tau_1^2/2-\tau_2^2/2} + a^2d^2\tau_1e^{-\tau_1^2/2} + b^2d^2\tau_2e^{-\tau_2^2/2} \\
	& \lesssim (a^2 + b^2)\left(\tau_1^3e^{-\tau_1^2/2} + \tau_1\tau_2 e^{-\tau_1^2/2-\tau_2^2/2}\right).
	\end{align*}
	
	Overall, we have
	\begin{align*}
	\delta_2 & = \Exs\left[Y^2\inprod{X}{\ub}^2 ~\big|~\mathcal{E}\right] - \Exs\left[Y^2\inprod{X}{\ub}^2 ~\big|~ \mathcal{E}\right]\Prob(\mathcal{E}) -  \Exs\left[Y^2\inprod{X}{\ub}^2 ~\big|~ \mathcal{E}^c\right]\Prob(\mathcal{E}^c) \\
	& \leq \left|\Exs\left[Y^2\inprod{X}{\ub}^2 ~\big|~\mathcal{E}\right]\right|\Prob(\mathcal{E}^c) + \left|\Exs\left[Y^2\inprod{ X}{ \ub }^2 ~\big|~ \mathcal{E}^c\right]\Prob(\mathcal{E}^c)\right|. \\
	& \lesssim (a^2 + b^2)\left(\tau_1^3e^{-\tau_1^2/2} + \tau_1\tau_2 e^{-\tau_1^2/2-\tau_2^2/2}\right).
	\end{align*}
	\vskip .1in
	\noindent3. There exists $\ub \in \sphere^{p-1}$ such that
	\[
	\delta_3 := \opnorm{\Exs\left[Y^3X\otimes X \otimes X~\big|~\mathcal{E}\right] - \Exs\left[Y^3X\otimes X \otimes X\right]} = \Exs\left[Y^3\inprod{ X}{ \ub}^3 ~\big|~\mathcal{E}\right] - \Exs\left[Y^3\inprod{X}{\ub}^3\right].
	\]
	Using the same simplification argument in (a), we have
	\begin{align*}
	\Exs\left[Y^3\inprod{X}{ \ub}^3 ~\big|~\mathcal{E}\right] & = \Exs\left[(aX_1 + bZ)^3(cX_1 + dX_2)^3 ~\big|~\mathcal{E}\right] \\
	& = \Exs\left[a^3c^3X_1^6 + 3ab^2c^3X_1^4Z^2 + 3a^3cd^2X_1^4X_2^2 + 9ab^2cd^2X_1^2X_2^2Z^2 ~\big|~ \mathcal{E}\right] \lesssim a^3 + ab^2.
	\end{align*}
	Applying Lemma~\ref{lem:cond_moment} again leads to
	\begin{align*}
	\Exs\left[Y^3\inprod{X}{\ub}^3 ~\big|~\mathcal{E}^c\right] \Prob(\mathcal{E}^c) & \lesssim a^3c^3\tau_1^5e^{-\tau_1^2/2} + ab^2c^2\tau_1^3e^{-\tau_1^2/2} + a^3cd^2\tau_1^3\tau_2e^{-\tau_1^2/2-\tau_2^2/2} + ab^2cd^2\tau_1\tau_2e^{-\tau_1^2/2-\tau_2^2/2} \\
	& \lesssim (a^3 + ab^2)\left(\tau_1^5e^{-\tau_1^2/2} + \tau_1^3\tau_2e^{-\tau_1^2/2-\tau_2^2/2}\right).
	\end{align*}
	Finally, we have
	\begin{align*}
	\delta_3  & = \Exs\left[Y^3\inprod{ X}{\ub}^3 ~\big|~\mathcal{E}\right] - \Exs\left[Y^3\inprod{X}{\ub}^3 ~\big|~ \mathcal{E}\right]\Prob(\mathcal{E}) -  \Exs\left[Y^3\inprod{ X}{\ub}^3 ~\big|~ \mathcal{E}^c\right]\Prob(\mathcal{E}^c) \\
	& \leq \left|\Exs\left[Y^3\inprod{X}{\ub}^3 ~\big|~\mathcal{E}\right]\right|\Prob(\mathcal{E}^c) + \left|\Exs\left[Y^3\inprod{ X}{ \ub}^3 ~\big|~ \mathcal{E}^c\right]\Prob(\mathcal{E}^c)\right|. \\
	& \lesssim  (a^3 + ab^2)\left(\tau_1^5e^{-\tau_1^2/2} + \tau_1^3\tau_2e^{-\tau_1^2/2-\tau_2^2/2}\right).
	\end{align*}
\end{proof}

\begin{lemma} [Conditional Moments of Gaussian] \label{lem:cond_moment}Suppose $X \sim \mathcal{N}(0,1)$. For any $\tau > 0$ and positive integer $a$, we define
	\[
	m_a(\tau)  := \Exs\left[X^a ~\big|~ |X| > \tau\right]\Prob(|X| > \tau).
	\]
	Then we have that for all $a = 2,4,6,\dots$, we have
	\[
	m_a(\tau) = (a - 1)m_{a-2}(\tau) + \sqrt{\frac{2}{\pi}}\tau^{a-1}e^{-\frac{\tau^2}{2}}.
	\]
\end{lemma}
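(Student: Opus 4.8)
The plan is to rewrite $m_a(\tau)$ as a truncated (uncentered) moment integral and evaluate it by a single integration by parts, the recursion falling out by matching the two integrals at orders $a$ and $a-2$.

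First I would unwind the definition: since $\Exs[X^a\mid |X|>\tau]\,\Prob(|X|>\tau) = \Exs[X^a\ind(|X|>\tau)]$, we have
\[
m_a(\tau) = \int_{|x|>\tau} x^a \frac{1}{\sqrt{2\pi}}e^{-x^2/2}\,dx.
\]
Because $a$ is even, the integrand is an even function of $x$, so I can fold the two symmetric tails together and write $m_a(\tau) = \frac{2}{\sqrt{2\pi}}\int_{\tau}^{\infty} x^a e^{-x^2/2}\,dx$. This symmetrization is exactly why the statement restricts to even $a$ (for odd $a$ the two tails cancel rather than add).

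Next is the one real computation: integration by parts on $\int_{\tau}^{\infty} x^a e^{-x^2/2}\,dx$, splitting the integrand as $x^{a-1}\cdot\bigl(x e^{-x^2/2}\bigr)$ and using the antiderivative $\frac{d}{dx}\bigl(-e^{-x^2/2}\bigr) = x e^{-x^2/2}$. Taking $u = x^{a-1}$ and $dv = x e^{-x^2/2}\,dx$ gives
\[
\int_{\tau}^{\infty} x^a e^{-x^2/2}\,dx = \Bigl[-x^{a-1}e^{-x^2/2}\Bigr]_{\tau}^{\infty} + (a-1)\int_{\tau}^{\infty} x^{a-2} e^{-x^2/2}\,dx,
\]
where the boundary term at $+\infty$ vanishes since $x^{a-1}e^{-x^2/2}\to 0$, leaving $\tau^{a-1}e^{-\tau^2/2}$ from the endpoint $\tau$.

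Finally I would multiply through by $\frac{2}{\sqrt{2\pi}} = \sqrt{2/\pi}$ and recognize that $\frac{2}{\sqrt{2\pi}}\int_{\tau}^{\infty} x^{a-2} e^{-x^2/2}\,dx = m_{a-2}(\tau)$ (using that $a-2$ is again even, so the same symmetrization applies), yielding
\[
m_a(\tau) = (a-1)\,m_{a-2}(\tau) + \sqrt{\tfrac{2}{\pi}}\,\tau^{a-1}e^{-\tau^2/2}.
\]
There is no genuine obstacle here: the argument is elementary once the conditional expectation is rewritten as a tail integral. The only point demanding care is bookkeeping of the normalization constant $2/\sqrt{2\pi}$ through the symmetrization and the integration by parts, so that the additive boundary term emerges with precisely the factor $\sqrt{2/\pi}$ claimed in the statement.
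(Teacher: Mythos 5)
Your proof is correct: the paper itself gives no details for this lemma (it states only that ``the result follows from elementary calculation on Gaussian's probability density function'' and omits the rest), and your argument---rewriting $m_a(\tau)$ as the tail integral $\frac{2}{\sqrt{2\pi}}\int_\tau^\infty x^a e^{-x^2/2}\,dx$ using evenness of $a$, then integrating by parts with $u = x^{a-1}$, $dv = xe^{-x^2/2}\,dx$---is exactly the elementary calculation being alluded to, carried out with the normalization constant tracked correctly. Your write-up in fact supplies the details the paper chose to omit, including the observation of why the restriction to even $a$ matters.
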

\begin{proof}
	The result follows from elementary calculation on Gaussian's probability density function. We omit the details.	
\end{proof}

\begin{lemma}[Sub-Gaussianity]  \label{lem:sub-gaussian}
	Let $X \sim \Gaussian(\bm{0}, \Identity_p)$. For any $k$ fixed vectors $\ub_1,...,\ub_k \in \mathbb{R}^{p}$, we define event
	\[
	\mathcal{E} := \left\{\abs{\inprod{X}{\ub_1}\rangle} \leq \abs{\inprod{X}{\ub_j}}, \;\;\text{for all}\;\; j \in [k]\right\}.
	\]
	
	\noindent \emph{(a)} Suppose $\Prob(\mathcal{E}) \geq \tau > 0$. There exists constant $C$ that only depends on $\tau$ such that for any fixed $\x \in \sphere^{p-1}$, we have that
	\[
	\Prob\left(\abs{\inprod{X}{\x}} > t ~\big|~ \mathcal{E} \right) \leq e^{1 - Ct^2},~~\text{for all}~~ t > 0.
	\]
	
	\noindent \emph{(b)} In general there exists constant $C'$ such that for any fixed $\x \in \sphere^{p-1}$,
	\[
	\Prob \left(\abs{\inprod{X}{\x}} > t ~\big|~ \mathcal{E} \right) \leq e^{1 - \frac{C'}{4k\log(k+1)}t^2}, ~~\text{for all}~~ t > 0.
	\] 
\end{lemma}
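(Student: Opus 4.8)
The plan is to treat the two parts separately, since they rely on different mechanisms, and to use throughout the elementary fact that for a fixed unit vector $\x$ the marginal $\inprod{X}{\x}$ is standard normal, so $\Prob(\abs{\inprod{X}{\x}} > t) \le 2e^{-t^2/2}$.

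For part (a) the point is that the hypothesis $\Prob(\mathcal{E}) \ge \tau$ lets me trade the conditioning for a multiplicative loss $1/\tau$. First I would write
\[
\Prob(\abs{\inprod{X}{\x}} > t \mid \mathcal{E}) = \frac{\Prob(\abs{\inprod{X}{\x}} > t,\, \mathcal{E})}{\Prob(\mathcal{E})} \le \frac{\Prob(\abs{\inprod{X}{\x}} > t)}{\tau} \le \frac{2}{\tau}e^{-t^2/2}.
\]
It then remains to absorb the constant $2/\tau$ into the Gaussian exponent. For small $t$ (specifically $t^2 \le 1/C$) the target bound $e^{1-Ct^2}$ already exceeds $1$, so the inequality is trivial; for $t^2 > 1/C$ the inequality $\frac{2}{\tau}e^{-t^2/2}\le e^{1-Ct^2}$ reduces to $\ln(2/\tau)-1\le(\tfrac12-C)t^2$, whose tightest case is the boundary $t^2=1/C$. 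Choosing $C := \min\{1/4,\ 1/(2\ln(2/\tau))\}$ (note $\ln(2/\tau)\ge\ln 2>0$ since $\tau\le 1$) makes this hold, and $C$ depends only on $\tau$, not on $k$. This step is routine.

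The substantive part is (b), where no lower bound on $\Prob(\mathcal{E})$ is available: indeed $\Prob(\mathcal{E})$ can be arbitrarily close to $0$, so the inflation argument of part (a) fails entirely. The key structural observation I would exploit is that $\mathcal{E}$ is \emph{scale invariant} — multiplying $X$ by any positive scalar preserves every inequality $\abs{\inprod{X}{\ub_1}}\le\abs{\inprod{X}{\ub_j}}$ — so $\mathcal{E}$ depends on $X$ only through its direction. I would first reduce to the subspace $V := \mathrm{span}\{\ub_1,\dots,\ub_k,\x\}$, of dimension $d\le k+1$: letting $P$ be the orthogonal projection onto $V$, both $\mathcal{E}$ and $\inprod{X}{\x}=\inprod{PX}{\x}$ are functions of $PX$, which in a $V$-basis is $\Gaussian(\bm{0},\Identity_d)$. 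Writing $PX=R\Theta$ with $R=\twonorm{PX}$ and $\Theta=PX/R$, Gaussianity gives $R\perp\Theta$ with $R$ distributed as $\chi$ with $d$ degrees of freedom, while scale invariance makes $\mathcal{E}=\{\Theta\in A\}$ for a directional set $A$; hence $R$ is independent of $\mathcal{E}$. Since $\abs{\inprod{X}{\x}}\le\twonorm{PX}=R$,
\[
\Prob(\abs{\inprod{X}{\x}}>t\mid\mathcal{E}) \le \Prob(R>t\mid\mathcal{E}) = \Prob(R>t),
\]
so the conditioning disappears and only a $\chi$ tail in dimension $d\le k+1$ remains. Finally I would bound $\Prob(R>t)$ using that $R=\twonorm{PX}$ is a $1$-Lipschitz function of a standard Gaussian with $\Exs R\le\sqrt d\le\sqrt{k+1}$: for $t\le 2\sqrt{k+1}$ the claimed bound $e^{1-C't^2/(4k\log(k+1))}$ already exceeds $1$ once $C'$ is a small universal constant, and for $t>2\sqrt{k+1}$ concentration gives $\Prob(R>t)\le e^{-(t-\sqrt{k+1})^2/2}\le e^{-t^2/8}$, which lies below the claimed bound for the same $C'$.

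The main obstacle is recognizing and justifying the radius–direction decoupling together with the reduction to $V$: conditioning on $\mathcal{E}$ is exactly what turns $X\mid\mathcal{E}$ into a non-Gaussian vector and defeats the naive inflation bound, so the entire gain comes from the fact that $\mathcal{E}$ is blind to the length $\twonorm{PX}$ and that this length lives in dimension at most $k+1$ rather than $p$. I would note in passing that this argument actually yields the sharper exponent $t^2/(k+1)$, of which the stated $t^2/(k\log(k+1))$ bound is a weaker and hence immediate consequence, so the logarithmic factor is not essential.
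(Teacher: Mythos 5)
Your proof is correct, and it reaches a slightly stronger conclusion than the paper does. Part (a) is the same inflation argument as in the paper: bound $\Prob(\cdot \mid \mathcal{E}) \leq \Prob(\cdot)/\tau$ and absorb the factor $2/\tau$ into the Gaussian exponent (the paper records the resulting constant explicitly as $C(1-\log\tau)^{-1}$). In part (b) you rely on exactly the paper's key structural idea---$\mathcal{E}$ is scale-invariant, hence a function of the direction of the projection of $X$ onto a subspace of dimension $\order(k)$, so the radial part of that projection is independent of $\mathcal{E}$---but your execution differs in two respects. First, the paper rotates so that $\ub_1,\dots,\ub_k$ lie in the span of the first $k$ coordinates and splits $\inprod{X}{\x} = \inprod{X_{[k]}}{\x_{[k]}} + \inprod{X_{\bot}}{\x_{\bot}}$, paying a factor of two in $t$ and handling the orthogonal piece by its independence from $\mathcal{E}$; you instead enlarge the subspace to $\mathrm{span}\{\ub_1,\dots,\ub_k,\x\}$ of dimension at most $k+1$, which eliminates the splitting step altogether. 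Second, and more substantively, the paper controls the unconditional chi tail $\Prob(\twonorm{X_{[k]}} > t/2)$ by a union bound over the $k$ coordinates, $\Prob(\twonorm{X_{[k]}} > t/2) \leq \sum_{i\in[k]}\Prob(|X_i| \geq t/(2\sqrt{k}))$, and the prefactor $k$ from that union bound is precisely the source of the $\log(k+1)$ in the stated exponent; you instead apply Gaussian (Borell--TIS) concentration of the $1$-Lipschitz function $\twonorm{PX}$ around its mean $\leq \sqrt{k+1}$, which gives an exponent of order $t^2/k$ with no logarithmic loss. So your closing remark is accurate: the $\log(k+1)$ factor in the lemma is an artifact of the paper's union-bound step, and your argument proves the stated bound as a weakening of a sharper one. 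The only caveat, which your write-up shares with the paper's, is that the conditional probability is only meaningful when $\Prob(\mathcal{E}) > 0$; in degenerate configurations (e.g., some $\ub_j = \bm{0}$ with $\ub_1 \neq \bm{0}$) the event has probability zero and the statement is vacuous.
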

\begin{proof}
	Since $X$ is sub-Gaussian random vector, equivalently there exists constant $C$ such that for any fixed $\x \in \sphere^{p-1}$,
	\[
	\Prob \left\{\abs{\inprod{X}{\x}} \geq t\right\} \leq 1 \wedge e^{1 - Ct^2},~~\text{for all}~~ t > 0.
	\]
	\vskip .1in
	\noindent (a) Note that
	\begin{align*}
	\Prob \left(\abs{\inprod{X}{\x}} \geq t\right) & = \Prob \left(\abs{\inprod{X}{\x}} \geq t ~\big|~ \mathcal{E}\right)\Prob(\mathcal{E}) + \Prob \left(\abs{\inprod{X}{\x}} \geq t ~\big|~ \mathcal{E}^c\right)\Prob(\mathcal{E}^c) \\
	& \geq \tau \cdot \Prob \left(|\inprod{X}{\x}| \geq t ~\big|~ \mathcal{E}\right).
	\end{align*}
	Hence,
	\[
	\Prob \left(\abs{\inprod{X}{\x}} \geq t ~\big|~ \mathcal{E}\right) \leq 1 \wedge \tau^{-1}e^{1 - Ct^2} \leq 1 \wedge e^{1 - C'(\tau) t^2},
	\]
	where the last inequality holds for $C'(\tau) = C(1 - \log \tau)^{-1}$.
	\vskip .1in
	\noindent (b) Without loss of generality, we assume that $\ub_1,...,\ub_k$ live in the subspace spanned by $\e_1,...,\e_k$. For any vector $\ub \in \real^p$, we let $\ub_{[k]}$ be its sub-vector that contains the first $k$ coordinates, and $\ub_{\bot}$ be its sub-vector that contains the rest coordinates. For any $\x \in \mathbb{S}^{p-1}$, we have
	\begin{align} \label{eq:tmp9}
	\Prob\left(\abs{\inprod{X}{\x}} > t ~\big|~ \mathcal{E}\right) & \leq \Prob\left(\abs{\inprod{X_{[k]}}{\x_{[k]}}} > t/2 ~\big|~ \mathcal{E}\right) + \Prob\left(\abs{\inprod{ X_{\bot}}{\x_{\bot}}} > t/2\right) \notag\\
	& \leq \Prob\left(\abs{\inprod{ X_{[k]}}{\x_{[k]}}} > t/2 ~\big|~ \mathcal{E}\right) + e^{1 - Ct^2/4} \notag\\
	& \leq \Prob\left(\twonorm{X_{[k]}} > t/2 ~\big|~ \mathcal{E}\right) + e^{1 - C t^2/4}. 
	\end{align}
	Note that conditioning $\mathcal{E}$ does not change the distribution of $\twonorm{X_{[k]}}$. We thus have
	\[
	\Prob\left(\twonorm{X_{[k]}} > t/2 ~\big|~ \mathcal{E}\right) = \Prob\left(\twonorm{X_{[k]}} > t/2 \right) \leq \sum_{i \in [k]}\Prob\left(|X_i| \geq \frac{t}{2\sqrt{k}}\right) \leq k\cdot e^{1-Ct^2/(4k)}.
	\]
	Combining \eqref{eq:tmp9} with the above inequality yields that
	\begin{align*}
	\Prob\left(\abs{\inprod{X}{\x}} > t ~\big|~ \mathcal{E}\right) \leq  1 \wedge (k+1)e^{1 - Ct^2/(4k)} \leq 1 \wedge e^{1 - C_2(k)t^2},
	\end{align*}
	where the last inequality holds by setting $C_2(k) = \frac{C}{4k\log(k+1)}$. 
\end{proof}

\printbibliography

\end{document}